\newtheorem{problem}{Problem}
\newtheorem{assumption}{Assumption}
\newtheorem{theorem}{Theorem}
\newtheorem{remark}{Remark}
\begin{document}

\title{\LARGE \bf Multi-Robot-Guided Crowd Evacuation: Two-Scale Modeling and Control}

\author{Tongjia Zheng$^{1}$, Zhenyuan Yuan$^{2}$, Mollik Nayyar$^{3}$, Alan R. Wagner$^{3}$, Minghui Zhu$^{2}$, and Hai Lin$^{1}$
\thanks{*This work was supported by the National Science Foundation under Grant No. CNS-1830335 and CNS-1830390. Any opinions, findings, and conclusions or recommendations expressed in this material are those of the author(s) and do not necessarily reflect the views of the National Science Foundation.}
\thanks{$^{1}$Tongia Zheng and Hai Lin are with the Department of Electrical Engineering, University of Notre Dame, Notre Dame, IN 46556, USA. {\tt\small tzheng1@nd.edu, hlin1@nd.edu.}} 
\thanks{$^{2}$Zhenyuan Yuan and Minghui Zhu are with the Department of Electrical Engineering,  The Pennsylvania State University, University Park, PA 16802, USA. {\tt\small zqy5086@psu.edu, muz16@psu.edu.}} 
\thanks{$^{3}$Mollik Nayyar and Alan Richard Wagner are with the Department of Aerospace Engineering, The Pennsylvania State University, University Park, PA 16802, USA. {\tt\small mxn244@psu.edu, azw78@psu.edu.}}
}

\maketitle
\thispagestyle{empty}

\begin{abstract}
Emergency evacuation describes a complex situation involving time-critical decision-making by evacuees.
Mobile robots are being actively explored as a potential solution to provide timely guidance.
In this work, we study a robot-guided crowd evacuation problem where a small group of robots is used to guide a large human crowd to safe locations.
The challenge lies in how to use micro-level human-robot interactions to indirectly influence a population that significantly outnumbers the robots to achieve the collective evacuation objective.
To address the challenge, we follow a two-scale modeling strategy and explore hydrodynamic models, which consist of a family of microscopic social force models that describe how human movements are locally affected by other humans, the environment, and robots, and associated macroscopic equations for the temporal and spatial evolution of the crowd density and flow velocity.
We design controllers for the robots such that they not only automatically explore the environment (with unknown dynamic obstacles) to cover it as much as possible, but also dynamically adjust the directions of their local navigation force fields based on the real-time macrostates of the crowd to guide the crowd to a safe location.
We prove the stability of the proposed evacuation algorithm and conduct extensive simulations to investigate the performance of the algorithm with different combinations of human numbers, robot numbers, and obstacle settings.
\end{abstract}

\begin{IEEEkeywords}
Emergency evacuation, social robotics, two-scale models, hydrodynamic models, crowd dynamics
\end{IEEEkeywords}

\section{Introduction}
Emergency evacuation can be a chaotic situation defined by the need to relocate a possibly large number of people to safety without causing choke points that slow the evacuation process \cite{bryner2007reconstructing, robinette2012information}.
Robots may be developed to serve as instantaneous first responders to an emergency by contacting authorities and guiding people to safety \cite{wagner2021robot}.
Compared to traditional stationary signage guidance, robot-guided emergency evacuation offers many advantages \cite{tang2016human}.
First, robots can be deployed beforehand and approach evacuees quickly.
Second, robots can be provided with a map and find the optimal routes more efficiently.
Third, robots can access the dynamic environment information in real-time and offer timely decisions.

{\color{black}
To create a robot-controlled evacuation system, it is essential to comprehend the crowd's behavior and how the robots can influence it.
Researchers from various disciplines such as social science and engineering have been investigating crowd dynamics for many years, resulting in the publication of numerous survey papers and books on the subject \cite{kachroo2008pedestrian, bellomo2011modeling, cristiani2014multiscale, haghani2021knowledge}.
A comprehensive list of reviews is available in \cite{cristiani2023all}.
Modeling strategies can generally be classified into three categories: microscopic, macroscopic, and their combinations.
}

Microscopic models focus on the details of individuals.
{\color{black}
Social force models are the most widely adopted in this category, which assume that interactions among humans are implemented by the concept of social forces \cite{helbing1995social, chen2018social}.}
They can produce several natural crowd phenomena in normal and emergent situations \cite{helbing2002simulation}.
Social force models have been widely explored for robot-guided evacuation because they can easily incorporate the influence of robots as additional social forces that act on humans.
For example, Garrell et al. \cite{garrell2009discrete} use robots to lead a group of people and prevent people from leaving the group based on social forces.
Jiang et al. \cite{jiang2016robot} employ robots as dynamic obstacles that generate repulsive forces near exits to improve evacuation efficiency.
Tang et al. \cite{tang2016human} introduce a panic propagation model to complement the social force model and develop an exit selection algorithm for the robots to find the best exit.
Yuan et al. \cite{yuan2023multi} use social force models to model human-robot interaction and propose an optimization formulation for the synthesis of robot controllers.
{\color{black}
Other approaches based on microscopic crowd models include the work \cite{boukas2014robot} that uses cellular automata to model crowd dynamics and test the system using a robot to guide human subjects.}
More results on guided crowd evacuation can be found in the survey \cite{zhou2019guided} which includes a section on robot-guided evacuation.
We also refer to the works \cite{robinette2016investigating, nayyar2019effective} which primarily consider single-robot, single-human scenarios but focus more on the aspects of human-robot interactions during emergencies.
Microscopic models are widely adopted.
However, they pose a challenge when dealing with scenarios that involve numerous humans and multiple robots. 
In such cases, the system becomes very high-dimensional, making it challenging to assess the collective evacuation performance.

Macroscopic models, on the contrary, describe general crowd movement and do not provide details about individuals \cite{hughes2002continuum, bellomo2008modelling}. 
They generally consist of a system of partial differential equations that define the evolution of the density and mean flow velocity of the crowd.
Macroscopic crowd models are particularly suited for studying large numbers of humans and analyzing their group properties.
Nevertheless, they are seldom explored for robot-guided evacuation, possibly because of the difficulty of incorporating the role of human-robot interactions taking place on the microscopic level.

{\color{black}
The weakness of microscopic and macroscopic models can be overcome by exploring two-scale models.
Two-scale models can be viewed as mixed micro-macroscopic models that inherit the strengths of both sides and have been increasingly studied in recent years \cite{borsche2018numerical, cristiani2011MultiscaleModelingGranular}.
These models are composed of not only microscopic equations (like those of social force models) for individual movements and interactions, but also macroscopic equations for crowd density and/or flow speed.}
Consequently, they can investigate the non-local effect of collective dynamics on individual actions, as seen in experimental studies \cite{moussaid2010walking, von2017empirical}.
We point out that if the robots' guidance is based on real-time macroscopic information such as the crowd density and flow, their influence on individual behaviors is necessarily non-local, for which two-scale models are more suitable for this purpose.
{\color{black}
Two-scale models have been used to address the control problem of large-scale systems, such as robotic swarms \cite{zheng2021transporting} or multi-agent systems in an abstract context \cite{fornasier2014mean, borzi2015modeling}. 
In the context of human evacuation, the unique challenge is how to incorporate the robots' role into the existing crowd models and how to use these robots to indirectly control the dynamics of the crowd. 
This work aims to address this particular challenge.
}

In this paper, we explore the role of two-scale models, particularly the hydrodynamic models, as a tool for designing efficient robot-guided crowd evacuation systems.
We start with modeling the microscopic behaviors of humans using social force models and incorporating the robots' guidance as additional navigation social forces.
{\color{black} These individual-based human models include human-human and human-robot interactions and unknown social forces to account for unpredicted environmental change (e.g., dynamic obstacles).}
Then, we derive the corresponding macroscopic hydrodynamic model which represents the temporal and spatial evolution of the crowd density and flow velocity under the navigation of the robots.
The control design for the robots is divided into two parts: position control and direction control.
The position controllers are used to drive the robots to explore and cover the environment as much as possible while avoiding obstacles.
The direction controllers are used to dynamically change the directions of the robots' local navigation force fields to eventually drive the crowd to a safe location.
These direction controllers are designed to take into account the real-time macro-states of humans (in a feedback manner) to guarantee the collective evacuation performance and include an adaptive component to compensate for unknown dynamics like dynamic obstacles.
We prove the stability of the proposed evacuation algorithm and perform a series of simulations involving unknown dynamic obstacles to validate the performance of the algorithm.

{\color{black}
Compared to existing works on robot-guided human evacuation, the main contribution of this work includes: (1) an extended two-scale model of crowd dynamics that takes into account the involvement of robots; (2) a novel controller synthesized for crowd evacuation under this model; (3) theoretical analysis of the new controller.}
Some preliminary results have been reported in our recent conference paper \cite{zheng2022multi}.
This work extends the results in \cite{zheng2022multi} by modeling human motions using social force models, which are more widely accepted, considering unknown dynamics in human models, introducing position control for robots to cover the environment, and considering obstacles in the simulated experiments.

The rest of the paper is organized as follows. 
In Section \ref{section:problem formulation}, we introduce the modeling of humans and robots and the problem formulation.
In Section \ref{section:control design}, we design position and direction controllers for the robots and prove their stability.
Section \ref{section:implementation} includes implementation details.
Section \ref{section:simulation} presents simulation studies to evaluate the algorithms.
Section \ref{section:conclusion} summarizes the contribution.

\section{Two-scale modeling of humans and robots}
\label{section:problem formulation}

In this work, we study the problem of using a small number of robots to guide a large human crowd in emergencies.
This is achieved by controlling the robots to dynamically generate navigation force fields to indirectly control the evolution of the crowd density.
We adopt a bottom-up modeling strategy to derive a two-scale model.
We start with heuristic microscopic descriptions, in the form of ordinary differential equations, for human behaviors, robot behaviors, and their interactions (including human-human and human-robot interactions).
Then, we derive an equivalent macroscopic description in the form of a hydrodynamic partial differential equation.

\subsection{Microscopic modeling}

\subsubsection{Modeling of humans and human-human interactions}
Denote by $\Omega$ the environment, which is assumed to be a convex bounded domain of $\mathbb{R}^2$ with a Lipschitz boundary $\partial\Omega$.
Let $N$ be the number of humans.
A human is treated as a particle that interacts with other humans/particles, the environment (like obstacles), and the robots; see Fig. \ref{fig:human view} for illustration.
In particular, we assume the following microscopic human models.
\begin{align}\label{eq:human model}
\begin{split}
    \frac{dx_j}{dt}& = v_j, \quad j=1,\dots,N \\
    \frac{dv_j}{dt}& = -\frac{1}{N}\nabla_{x_j}\sum_{k\neq j}U(x_j-x_k) \\
    & \quad+G(x_j,t)+{F}(x_j,t),
\end{split}
\end{align}
where $x_j(t)\in\Omega$ is the position of the $j$-th human, $v_j(t)\in\mathbb{R}^2$ is the velocity, and $\nabla_{x_j}$ represents the gradient in $x_j$.

Herein, $U$ is a pairwise interaction potential function corresponding to the classic social force model \cite{helbing1995social, helbing2002simulation}.
In general, the social force is repulsive within a short distance to maintain a social distance between humans, and attractive at a long distance to simulate the self-organization phenomenon (especially in emergencies) exhibited in a crowd.
Hence, the following potential is adopted \cite{helbing2002simulation}:
\begin{align}\label{eq:human social force}
    U(\xi)=C_re^{-\|\xi\|/\sigma_r}-C_ae^{-\|\xi\|/\sigma_a},
\end{align}
where $C_r$ and $\sigma_r$ ($C_a$ and $\sigma_a$) are positive constants representing the interaction strength and range of the repulsive (attractive) forces.
Typically, $C_r/\sigma_r>C_a/\sigma_a$ and $\sigma_r<\sigma_a$.
We assume that $U$ is known since it can be estimated using experimental data.
${F}$ is the navigation social force generated by the robots which will be discussed later.
$G$ represents all other unknown social forces from the environment, especially those from dynamic obstacles.
Note that humans instinctively avoid an obstacle when they encounter it. 
Therefore, it is reasonable to assume that an additional repulsive force (included as part of $G$) is generated and applied to a human when he/she is close to an obstacle and this additional force will vanish when the avoidance behavior terminates.
Therefore, we assume $G$ is a function of both $x$ and $t$ for sufficient generality.
Nevertheless, for performance analysis, we need to impose some mild assumptions on $G$.
We will assume $G$ is smooth, bounded, and slow-varying, at least after a finite time.

\begin{figure}
    \centering
    \includegraphics[width=0.5\columnwidth]{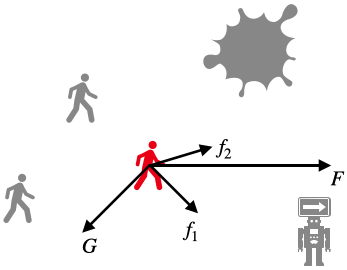}
    \caption{All the social forces applied to the red person: a navigation force $F$ from the robot, a repulsive force $G$ from the obstacle, and two repulsive forces from the other two humans.}
    \label{fig:human view}
\end{figure}

\subsubsection{Modeling of robots and human-robot interactions}
Now we discuss how to model ${F}$, the navigation social force generated by robots.
In our setup, every robot affects the human motions by providing a directed sign, such as an arrow pointing to the desired direction with instructions ``THIS WAY'' or ``EXIT''; see Fig. \ref{fig:navigation force} for illustration.
This setup is consistent with the signage systems widely applied in large buildings as safety messages to evacuees in case of emergencies.
Intuitively, humans tend to steer their movement according to the direction indicated by the robots when they are close to the robots and have less motivation to follow the sign when they are further away from the robots.
These behaviors are consistent with the impact of evacuation signage systems \cite{zhou2019guided} and have also been confirmed in experiments when signs are held by robots \cite{robinette2014assessment}.
Therefore, human-robot interactions can be modeled as follows.

\begin{figure}
    \centering
    \includegraphics[width=0.5\columnwidth]{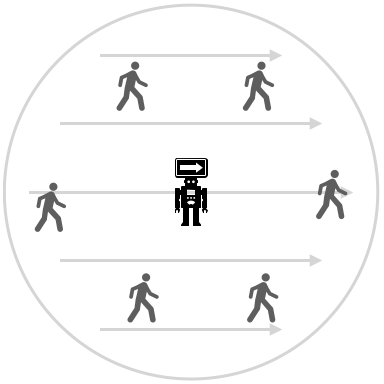}
    \caption{The navigation force field with round support generated by a robot.}
    \label{fig:navigation force}
\end{figure}

Let $n$ be the number of robots ($n\ll N$).
For the $i$-th robot, denote by $r_i(t)\in\Omega$ its position and $\theta_i(t)\in[0,2\pi)$ the direction of its arrow sign.
We emphasize that $\theta_i(t)$ is not the orientation of the robot and therefore $\theta_i(t)$ can be controlled independently of the robot's pose.
For simplicity, we also assume the robots are second-order integrators.
The robot models are given by:
\begin{align}\label{eq:robot model}
\begin{split}
    \ddot{r}_i(t) & =\tau_i(t),\quad i=1,\dots,n \\
    \dot{\theta}_i(t) & =\eta_i(t),
\end{split}
\end{align}
where $\tau_i(t)\in\mathbb{R}^2$ and $\eta_i(t)\in\mathbb{R}$ are the position and direction controllers of the robot that we need to design.

The navigation social force that is generated by the $i$-th robot and acts on the $j$-th human is modeled by:
\begin{align}\label{eq:human-robot force}
\begin{split}
    F_i(x_j,t) & =K(x_j-r_i(t),\theta_i(t)) \\
    & =\begin{bmatrix}
    \bar{K}(x_j-r_i(t))\cos\theta_i(t) \\
    \bar{K}(x_j-r_i(t))\sin\theta_i(t)
    \end{bmatrix}
\end{split}
\end{align}
where $\bar{K}(\xi)\in\mathbb{R}_{>0}$ for $\xi=[\xi_1~\xi_2]^T\in\mathbb{R}^2$ is a continuously differentiable kernel function.
It is reasonable to assume that $\bar{K}(\xi)$ has a compact support $\Omega_{\bar{K}}$, defined by
\begin{align*}
    \Omega_{\bar{K}}:=\{\xi|\bar{K}(\xi)\neq0\},
\end{align*}
which represents its range of influence, and that the magnitude of $\bar{K}(\xi)$ is strictly decreasing with respect to the distance $\|\xi\|$.
The form of $\bar{K}(\xi)$ can be designed based on experience or be estimated in advance from experiment data.
Intuitively, $F_i(x,t)$ is a locally effective force field whose direction is specified by $[\cos\theta_i(t)~\sin\theta_i(t)]^T$ and whose magnitude attains its maximum at the robot's position $r_i(t)$ and decreases with respect to the distance from $r_i(t)$.
The collective navigation force field generated by all robots is thus given by:
\begin{align}\label{eq:navigation force}
    {F}(x,t)=\sum_{i=1}^nF_i(x,t)=\sum_{i=1}^nK(x-r_i(t),\theta_i(t)).
\end{align}
The range of influence of the $i$-th robot is given by the support of $F_i$, denoted by $\Omega_i(t)$ and defined by
\begin{align*}
    \Omega_i(t):=\{x\in\Omega|F_i(x,t)\neq0\}.
\end{align*}
The area enclosed by the circle in Fig. \ref{fig:navigation force} is an illustration of $\Omega_i(t)$. 
The total range of influence is thus given by their union:
\begin{align}
    \Omega_r(t)=\bigcup_{i=1}^n\Omega_i(t).
\end{align}
The domain $\Omega_r(t)$ is critical to whether the robots are able to evacuate the complete human crowd. 

\subsection{Macroscopic modeling}

Now we derive a macroscopic model for the collective crowd behavior and human-robot interaction.

Let $f(x,v,t)$ be the distribution of $\{x_j,v_j\}_{j=1}^N$, given by 
\begin{align*}
    f(x,v,t)\approx\frac{1}{N}\sum_{j=1}^N\delta(x_j(t),v_j(t)),
\end{align*}
where $\delta(x,v)$ is the Dirac distribution.
It is well-known that as $N$ tends to infinity, $f$ satisfies the following kinetic equation \cite{carrillo2021mean}:
\begin{align}\label{eq:mean-field model}
    \partial_tf=-v\cdot\nabla_xf+\nabla_v\cdot[(\nabla U*\rho+G+{F})f]
\end{align}
where $\rho$ is the macroscopic density of the crowd
{\color{black}
given by the marginal distribution
\begin{align*}
    \rho(x,t):=\int_{\mathbb{R}^2} f(x,v,t)dv
\end{align*}
}and the $*$ notation denotes the convolution in the $x$ variable.
We point out that the macroscopic model \eqref{eq:mean-field model} and the microscopic model \eqref{eq:human model} are equivalent descriptions of human motions, as they share exactly the same set of parameters.

We define the macroscopic momentum by
\begin{align*}
    \rho u(x,t):=\int_{\mathbb{R}^2} vf(x,v,t)dv,
\end{align*}
which represents the crowd flux.
This also defines $u$ which represents the crowd velocity field.
One can use the following monokinetic closure (roughly speaking, particles at the same position have the same velocities)
\begin{align*}
    f(x,v,t)\sim\rho(x,t)\delta(v-u(x,t))
\end{align*}
to derive the following hydrodynamic limit from \eqref{eq:mean-field model} \cite{carrillo2021mean}:
\begin{align}
    \partial_t\rho= & -\nabla\cdot(\rho u), \label{eq:density}\\
    \partial_tu= & -(u\cdot\nabla)u-\nabla U*\rho+G+{F}. \label{eq:momentum}
\end{align}
The following zero-flux boundary condition is used to confine the crowd within the domain:
\begin{align}\label{eq:BC}
    \rho u\cdot\boldsymbol{n}=0 & \text{ on } \partial\Omega
\end{align}
where $\boldsymbol{n}$ is the unit inner normal to the boundary $\partial\Omega$.
The two equations \eqref{eq:density} and \eqref{eq:momentum} are referred to as the continuity and momentum equations, respectively.

\begin{remark}
This hydrodynamic limit for deterministic particles is justified by various papers; see \cite{borsche2018numerical, carrillo2021mean} and the references therein.
{\color{black}
Results regarding the well-posedness of \eqref{eq:density}-\eqref{eq:momentum} can be found in \cite{carrillo2017weak}.}
If the particles are driven by independent Wiener processes, other closures can be used to derive a different hydrodynamic limit \cite{borsche2018numerical}.
In this work, we will restrict ourselves to the deterministic case for simplicity.
Nevertheless, all the control design techniques and stability results can be generalized to the stochastic case.
\end{remark}

Finally, from the control perspective, the macroscopic model \eqref{eq:density}-\eqref{eq:momentum} shows how the crowd density $\rho$ is controlled by the navigation force field ${F}$ generated by the robots according to \eqref{eq:navigation force}.
The objective of evacuating the crowd to several safe locations can thus be formulated as a control problem of driving the crowd density toward a target density whose mass concentrates on those safe locations.

\begin{problem}[Crowd evacuation]
Given the dynamics of crowd density \eqref{eq:density}-\eqref{eq:momentum} and a target density $\rho_*(x)$, design control laws $\{\tau_i(t),\eta_i(t)\}_{i=1}^n$ for the robots \eqref{eq:robot model} to drive the crowd density $\rho(x,t)$ toward $\rho_*(x)$.
\end{problem}

\section{Control design}
\label{section:control design}
\subsection{Design philosophy}

First of all, we shall determine how the crowd density is eventually controlled by the robots' inputs $\{\tau_i(t),\eta_i(t)\}_{i=1}^n$.
Denote
\begin{align*}
    K_\xi(\xi,\theta):= & \frac{\partial}{\partial\xi}K(\xi,\theta) \\
     = & \begin{bmatrix}
    \frac{\partial}{\partial\xi_1}\bar{K}(\xi)\cos\theta & \frac{\partial}{\partial\xi_2}\bar{K}(\xi)\cos\theta \\
    \frac{\partial}{\partial\xi_1}\bar{K}(\xi)\sin\theta & \frac{\partial}{\partial\xi_2}\bar{K}(\xi)\sin\theta
    \end{bmatrix} \\
    K_\theta(\xi,\theta):= & \frac{\partial}{\partial\theta}K(\xi,\theta)=\begin{bmatrix}
    -\bar{K}(\xi)\sin\theta \\
    \bar{K}(\xi)\cos\theta
    \end{bmatrix},
\end{align*}
and for the $i$-th robot, denote
\begin{align*}
    & F_\xi^i(x,t)=K_\xi(x-r_i(t),\theta_i(t)), \\
    & F_\theta^i(x,t)=K_\theta(x-r_i(t),\theta_i(t)).
\end{align*}
Note that the support of $F_\xi^i(x,t)$ and $F_\theta^i(x,t)$ are both given by $\Omega_i(t)$.
Taking the time derivative of ${F}(x,t)$, we obtain:
\begin{align}\label{eq:navigation force dynamics}
\begin{split}
    \partial_t{F}(x,t) & =\sum_{i=1}^n\partial_tK(x-r_i(t),\theta_i(t)) \\
    & =\sum_{i=1}^n\Big(-F_\xi^i(x,t)\dot{r}_i(t)+F_\theta^i(x,t)\eta_i(t)\Big),
\end{split}
\end{align}
which shows how the navigation force field $F$ is controlled by the robot inputs $\{\tau_i(t),\eta_i(t)\}_{i=1}^n$ if we notice that $\dot{r}_i$ is uniquely determined by $\tau_i$ according to
\begin{align*}
    \dot{r}_i(t)=\dot{r}_i(0)+\int_0^t\tau_i(s)ds.
\end{align*}

Equations \eqref{eq:density}-\eqref{eq:navigation force dynamics} together constitute the complete control system of the crowd evacuation problem, which is an infinite-dimensional control problem with finite-dimensional control inputs. 
There are two sets of independent inputs: $\{\tau_i(t)\}_{i=1}^n$ and $\{\eta_i(t)\}_{i=1}^n$.
Our idea is to separate the position control and direction control problems.
The primary objective of position control is to fully explore the environment and cover as many humans as possible.
The primary objective of direction control is to dynamically adjust the arrow signs to generate a stabilizing force field $F$.

\subsection{Design of position controllers}
We design position controllers $\{\tau_i\}_{i=1}^n$ for the robots \eqref{eq:robot model} such that their total range of influence $\Omega_r(t)$ covers the environment $\Omega$ as much as possible.
For this purpose, we adopt a well-known sensor coverage algorithm based on potential fields, which is given by \cite{howard2002mobile}:
\begin{align}\label{eq:position controller}
    \tau_i=(f_i-\nu\dot{r})/m_i
\end{align}
where $\nu$ is a viscosity coefficient, $m_i$ is the robot's mass, and $f_i$ is the sum of all repulsive forces due to other robots and obstacles given by
\begin{align*}
    f_i=-\nabla_{r_i}\Big(\sum_{k\neq i}\frac{k_r}{\|r_i-r_k\|}+\sum_{l\in\mathcal{O}_i}\frac{k_o}{\|r_i-s_l\|}\Big)
\end{align*}
where $k_r$ and $k_o$ are two positive constants, $\mathcal{O}_i$ is the set of all obstacles seen by the $i$-th robot, and $s_l$ is the position of the $l$-th obstacle.
It is known that under this design, the robots will usually exhibit satisfactory coverage of the workspace \cite{howard2002mobile}.

\subsection{Design of direction controllers}
Assuming the position controllers $\{\tau_i\}_{i=1}^n$ have been determined, we further design direction controllers $\{\eta_i\}_{i=1}^n$ for the robots \eqref{eq:robot model} such that their generated navigation force field ${F}$ according to \eqref{eq:navigation force} stabilizes the crowd density $\rho$.
The main technique of our design includes: (1) using density feedback and a series of backstepping designs to determine the robot inputs $\{\eta_i\}_{i=1}^n$, and (2) combining an adaptive control law to compensate for the unknown force field $G$ online.

The backstepping design recursively constructs a sequence of stabilizing functions to stabilize systems with lower-triangular structures \cite{krstic1995nonlinear}.
In our case, the backstepping design consists of three steps.

\subsubsection{Density feedback design}
In the first step, we design a virtual stabilizing control input for \eqref{eq:density} as
\begin{align}\label{eq:ud}
\begin{split}
    & u_d=-k_\rho\nabla(\rho-\rho_*), 
\end{split}
\end{align}
where $k_\rho(x,t)>0$ is an adjustable control gain.
Intuitively, this virtual control input \eqref{eq:ud} compares the difference between $\rho$ and $\rho_*$ and uses its negative gradient to drive the crowd.

Control algorithms like \eqref{eq:ud} are called density feedback because they explicitly use the real-time density $\rho$ as feedback \cite{elamvazhuthi2019mean, zheng2021transporting}.
The stabilizing property of $u_d$ can be proved by considering a Lyapunov functional:
\begin{align}\label{eq:Lyapunov V1}
    V_1(t)=\int_\Omega\frac{1}{2}(\rho-\rho_*)^2dx.
\end{align}
By substituting \eqref{eq:ud} into \eqref{eq:density} and using the divergence theorem, the zero-flux boundary condition \eqref{eq:BC}, and the Poincar\'e inequality \cite{evans1998partial}
\begin{align*}
    \frac{dV_1}{dt}\Bigm|_{u=u_d} & =\int_\Omega(\rho-\rho_*)\partial_t\rho dx \\
    & =\int_\Omega(\rho-\rho_*)\nabla\cdot\big(k_\rho\rho\nabla(\rho-\rho_*)\big) dx \\
    & =-\int_\Omega k_\rho\rho\|\nabla(\rho-\rho_*)\|^2 dx \\
    & \leq-\int_\Omega k_\rho C_*\rho(\rho-\rho_*)^2 dx
\end{align*}
where $C_*>0$ is the Poincar\'e constant.

The fact that $k_\rho$ can be a function of both $x$ and $t$ provides significant flexibility in practice because it allows us to locally adjust the magnitude of crowd flows.
For example, we can enforce the crowd velocity field $u$ to have a constant magnitude everywhere by choosing $k_\rho$ to be:
\begin{align*}
    & k_\rho=\frac{k_\rho'}{\|\nabla(\rho-\rho_*)\|},
\end{align*}
for some positive constant $k_\rho'$.

\subsubsection{Adaptive control design}
In the second step, we need to deal with the slow-varying unknown force field $G(x,t)$.
We propose to approximate $G$ with an artificial neural network \cite{ge2013stable} according to
\begin{align*}
    G(x,t)=\sum_{k=1}^m
    \begin{bmatrix}
        \phi_{1k}(x)w_{1k}(t) \\
        \phi_{2k}(x)w_{2k}(t)
    \end{bmatrix}
    +
    \begin{bmatrix}
        \epsilon_1(x,t) \\
        \epsilon_2(x,t)
    \end{bmatrix}
\end{align*}
where $\{\phi_{1m}(x),\phi_{2m}(x)\in\mathbb{R}\}_{k=1}^m$ is a collection of smooth and bounded basis functions, $\{w_{1m}(t),w_{2m}(t)\in\mathbb{R}\}_{k=1}^m$ is the collection of the ideal but unknown weights, and $\{\epsilon_1(x,t),\epsilon_2(x,t)\}$ are the ideal approximation errors.
According to the universal approximation theorem of artificial neural networks, we can make $\{\epsilon_1,\epsilon_2\}$ arbitrarily small by increasing $m$, the number of basis functions \cite{ge2013stable}.
Denote
\begin{align*}
    & \phi=
    \begin{bmatrix}
        \phi_{11} & \dots & \phi_{1m} & 0 & \dots & 0 \\
        0 & \dots & 0 & \phi_{21} & \dots & \phi_{2m}
    \end{bmatrix}^T, \\
    & w=[w_{11} ~ \dots ~ w_{1m} ~ w_{21} ~ \dots ~ w_{2m}]^T, \\
    & \epsilon=[\epsilon_1 ~ \epsilon_2]^T.
\end{align*}
Then, we can write
\begin{align}\label{eq:ANN}
    G(x,t)=\phi(x)^Tw(t)+\epsilon(x,t).
\end{align}
Let $\hat{w}(t)$ be the estimate of $w(t)$.
Define
\begin{align}
\begin{split}
    & \tilde{\rho}(x,t)=\rho(x,t)-\rho_*(x), \\
    & \tilde{u}(x,t)=u(x,t)-u_d(x,t), \\
    & \tilde{w}(t)=\hat{w}(t)-w(t).
\end{split}
\end{align}
We want $(\tilde{\rho},\tilde{u},\tilde{w})\to0$.
For this purpose, consider an augmented control Lyapunov functional based on \eqref{eq:Lyapunov V1}:
\begin{align}\label{eq:Lyapunov V2}
    V_2(t)=\frac{1}{2}\int_\Omega(\tilde{\rho}^2+\|\tilde{u}\|^2)dx+\frac{1}{2}\tilde{w}^T\Gamma^{-1}\tilde{w},
\end{align}
where $\Gamma$ is a positive-definite constant matrix.
We have
\begin{align}\label{eq:dV2}
\begin{split}
    \frac{dV_2}{dt}= & \int_\Omega-\nabla\tilde{\rho}\cdot(k_\rho\rho\nabla\tilde{\rho}-\rho\tilde{u})+\tilde{u}^T\Big(-(u\cdot\nabla)u \\
    & -\nabla U*\rho+\phi^Tw+\epsilon+{F}-\partial_tu_d\Big)dx \\
    & +\tilde{w}^T\Gamma^{-1}(\dot{\hat{w}}-\dot{w}) \\
    = & \int_\Omega-k_\rho C_*\tilde{\rho}^2+\tilde{u}^T\Big(\rho\nabla\tilde{\rho}-(u\cdot\nabla)u \\
    & -\nabla U*\rho+\phi^T\hat{w}+\epsilon+{F}-\partial_tu_d\Big) dx \\
    & -\int_\Omega\tilde{u}^T\phi^T\tilde{w}dx+\tilde{w}^T\Gamma^{-1}(\dot{\hat{w}}-\dot{w}) \\
    = & \int_\Omega-k_\rho C_*\tilde{\rho}^2+\tilde{u}^T\Big(\rho\nabla\tilde{\rho}-(u\cdot\nabla)u \\
    & -\nabla U*\rho+\phi^T\hat{w}+\epsilon+{F}-\partial_tu_d\Big) dx \\
    & +\tilde{w}^T\Gamma^{-1}\Big(\dot{\hat{w}}-\dot{w}-\Gamma\int_\Omega\phi\tilde{u}dx\Big).
\end{split}
\end{align}

This motivates us to design a virtual stabilizing navigation force field ${F}_d(x,t)$ and the update law for $\hat{w}(t)$ as
\begin{align}
\begin{split} \label{eq:Fd}
    {F}_d & =-k_u\tilde{u}-\rho\nabla\tilde{\rho}+(u\cdot\nabla)u \\
    & \quad+\nabla U*\rho-\phi^T\hat{w}+\partial_tu_d
\end{split} \\
    \dot{\hat{w}} & =\Gamma\Big(\int_\Omega\phi\tilde{u}dx-k_w\hat{w}\Big), \label{eq:update law}
\end{align}
where $k_u,k_w>0$ are both constants.

\subsubsection{Robot control design}
In the last step, we will determine the direction controllers $\{\eta_i\}_{i=1}^n$.
For this purpose, define $\tilde{F}(x,t)={F}(x,t)-{F}_d(x,t)$.
Consider an augmented control Lyapunov functional based on \eqref{eq:Lyapunov V2}:
\begin{align}\label{eq:Lyapunov V3}
    V_3(t)=\frac{1}{2}\int_\Omega(\tilde{\rho}^2+\|\tilde{u}\|^2+\|\tilde{F}\|^2)dx+\frac{1}{2}\tilde{w}^T\Gamma^{-1}\tilde{w}.
\end{align}
Taking its derivative and substituting \eqref{eq:navigation force dynamics}, we obtain
\begin{align}\label{eq:dV3}
\begin{split}
    \frac{dV_3}{dt} & =\frac{dV_2}{dt}+\int_\Omega\tilde{F}^T(\tilde{u}+\partial_tF-\partial_t{F}_d)dx \\
    & =\frac{dV_2}{dt}+\int_\Omega\tilde{F}^T\Big(\tilde{u}-\partial_t{F}_d \\
    & \quad+\sum_{i=1}^n(-F_\xi^i\dot{r}_i+F_\theta^i\eta_i)\Big)dx \\ 
    & =\frac{dV_2}{dt}+\int_\Omega\tilde{F}^T\Big(\tilde{u}-\partial_t{F}_d-\sum_{i=1}^nF_\xi^i\dot{r}_i\Big)dx \\ 
    & \quad +\sum_{i=1}^n\eta_i\int_{\Omega_i(t)}\tilde{F}^TF_\theta^idx,
\end{split}
\end{align}
where we note that in the last term the domain of integration has been replaced by $\Omega_i(t)$ (the range of influence of the $i$-th robot) since the value of $F_\theta^i$ is zero outside $\Omega_i(t)$. 
This motivates us to design $\eta_i$ as follows.
Let $k_\eta>0$ be a constant and select scalar functions $\{\beta_i(t)\}_{i=1}^n$ such that, for all $t$,
\begin{align}\label{eq:controllability}
    \begin{cases}
        \beta_i(t)>0, \text{ if } \int_{\Omega_i(t)}\tilde{F}^TF_\theta^idx\neq0 \\
        \beta_i(t)=0, \text{ if } \int_{\Omega_i(t)}\tilde{F}^TF_\theta^idx=0 \\
        \sum_{i=1}^n\beta_i(t)=1.
    \end{cases}
\end{align}
Then, design $\eta_i$ as
\begin{align}\label{eq:direction controller}
\begin{split}
    \eta_i(t)= & -\frac{\beta_i\int_\Omega\tilde{F}^T(\tilde{u}-\partial_t{F}_d-\sum_{i=1}^nF_\xi^i\dot{r}_i+\frac{k_\eta}{\beta_i}\tilde{F})dx}{\int_{\Omega_i(t)}\tilde{F}^TF_\theta^idx}
\end{split}
\end{align}
if $\beta_i(t)>0$, and $\eta_i(t)=0$ if $\beta_i(t)=0$.

The condition \eqref{eq:controllability} always has a solution for $\{\beta_i(t)\}_{i=1}^n$ if at least one of the integrals $\{\int_{\Omega_i(t)}\tilde{F}^TF_\theta^idx\}_{i=1}^n$ is nonzero; in other words, if there is at least one robot that can affect the error term $\tilde{F}$ by changing its direction $\theta_i$, which may be interpreted as a ``controllability'' condition.
A sufficient condition to ensure a solution always exists to \eqref{eq:controllability} is that $\Omega\subset\Omega_r(t)$, i.e., the environment is completely covered by the robots.
In this way, there always exists at least one robot that can affect the error term $\tilde{F}$.

\subsection{Stability analysis}

We can prove that the crowd density will converge to a small neighborhood of the target density under the following assumptions.

\begin{assumption}\label{assump:small G}
There exist constants $C_1,C_2>0$ such that $\|G(x,t)\|_{L^\infty(\Omega)}\leq C_1$ and $\|\partial_tG(x,t)\|_{L^\infty(\Omega)}\leq C_2$ for all $t$.
\end{assumption}

\begin{assumption}\label{assump:coverage}
There exists a constant $T>0$ such that $\Omega\subset\Omega_r(t)$ for all $t>T$.    
\end{assumption}

Assumption \ref{assump:small G} requires that the unknown dynamics $G$ is bounded and slow-varying, which is a common assumption for unknown dynamics.
Assumption \ref{assump:coverage} requires that the total range of influence of the robots is able to completely cover the entire environment $\Omega$ after a finite time.
Since the coverage performance of the algorithm \eqref{eq:position controller} has been verified through various experiments in the literature \cite{howard2002mobile}, this is an expected consequence as long as there are enough robots.
While a rigorous derivation of a sufficient number of robots is complicated, an overestimate is that when all the robots are evenly spread out, we have $\Omega\subset\Omega_r$.

\begin{theorem}\label{thm:stability}
Consider the robot models \eqref{eq:robot model}, the navigation force generated by the robots \eqref{eq:navigation force}, and the crowd dynamics \eqref{eq:density}-\eqref{eq:momentum}.
Let $\{\tau_i(t)\}_{i=1}^n$ and $\{\eta_i(t)\}_{i=1}^n$ be given by \eqref{eq:position controller} and \eqref{eq:direction controller}, respectively, where $u_d$, ${F}_d$, and $\hat{w}$ are computed using \eqref{eq:ud}, \eqref{eq:Fd}, and \eqref{eq:update law}, respectively.
Then under Assumptions \ref{assump:small G} and \ref{assump:coverage}, $(\|\tilde{\rho}(x,t)\|_{L^2(\Omega)},\|\tilde{u}(x,t)\|_{L^2(\Omega)},\|\tilde{F}\|_{L^2(\Omega)},\|\tilde{w}(t)\|)$ are all uniformly ultimately bounded.
\end{theorem}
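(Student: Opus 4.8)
\emph{Proof proposal.} The plan is to treat $V_3$ in \eqref{eq:Lyapunov V3} as a Lyapunov functional for the closed loop and to show that, for $t>T$ (with $T$ from Assumption~\ref{assump:coverage}, enlarged if necessary so that $G$ is also smooth, bounded and slow-varying past that time), it obeys a dissipation inequality $\dot V_3\le-\mu V_3+C_0$ with $\mu>0$ and $C_0$ a constant controlled by the neural-network approximation error $\epsilon$, the size $\|w\|$ of the ideal weights, and the bound $C_2$ on $\partial_tG$. The comparison lemma then yields $V_3(t)\le\big(V_3(T)-C_0/\mu\big)e^{-\mu(t-T)}+C_0/\mu$ for $t\ge T$; since $\|\tilde\rho\|_{L^2(\Omega)}^2$, $\|\tilde u\|_{L^2(\Omega)}^2$, $\|\tilde F\|_{L^2(\Omega)}^2$ and $\tilde w^T\Gamma^{-1}\tilde w$ are each dominated by $2V_3$, this gives the claimed uniform ultimate boundedness, boundedness of the state on the transient interval $[0,T]$ being supplied by the well-posedness results cited in the Remark.

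For the dissipation estimate I would start from \eqref{eq:dV2}. Substituting the virtual force $F_d$ of \eqref{eq:Fd} into the $\tilde u$-bracket collapses $\rho\nabla\tilde\rho-(u\cdot\nabla)u-\nabla U*\rho+\phi^T\hat w-\partial_tu_d$ into $-F_d-k_u\tilde u$, so that bracket becomes $\tilde u^T\tilde F-k_u\|\tilde u\|^2+\tilde u^T\epsilon$ (using $\tilde F=F-F_d$ and $G=\phi^Tw+\epsilon$ from \eqref{eq:ANN}), while substituting the update law \eqref{eq:update law} turns the last bracket into $-\Gamma k_w\hat w-\dot w$; hence
\[ \frac{dV_2}{dt}=\int_\Omega\big(-k_\rho C_*\tilde\rho^2-k_u\|\tilde u\|^2+\tilde u^T\tilde F+\tilde u^T\epsilon\big)\,dx-k_w\tilde w^T\hat w-\tilde w^T\Gamma^{-1}\dot w. \]
Next, inserting the direction law \eqref{eq:direction controller} into \eqref{eq:dV3} and using $\sum_i\beta_i=1$ from \eqref{eq:controllability}, the indefinite term $\int_\Omega\tilde F^T(\tilde u-\partial_tF_d-\sum_iF_\xi^i\dot r_i)\,dx$ is cancelled exactly by $\sum_i\eta_i\int_{\Omega_i(t)}\tilde F^TF_\theta^i\,dx$, leaving only $-p\,k_\eta\|\tilde F\|_{L^2(\Omega)}^2$ with $p=|\{i:\beta_i>0\}|$; for $t>T$, Assumption~\ref{assump:coverage} makes \eqref{eq:controllability} solvable whenever $\tilde F\not\equiv0$, so $p\ge1$ and therefore $\dot V_3\le\dot V_2-k_\eta\|\tilde F\|_{L^2(\Omega)}^2$.

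It remains to absorb the sign-indefinite residuals. By Young's inequality $\tilde u^T\tilde F\le\tfrac12\|\tilde u\|^2+\tfrac12\|\tilde F\|^2$, $\tilde u^T\epsilon\le\tfrac12\|\tilde u\|^2+\tfrac12\|\epsilon\|^2$, $-k_w\tilde w^T\hat w=-k_w\|\tilde w\|^2-k_w\tilde w^Tw\le-\tfrac{k_w}{2}\|\tilde w\|^2+\tfrac{k_w}{2}\|w\|^2$, and $-\tilde w^T\Gamma^{-1}\dot w\le\tfrac12\|\tilde w\|^2+\tfrac12\|\Gamma^{-1}\|^2\|\dot w\|^2$. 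Here $\|\epsilon\|_{L^2(\Omega)}$ is small by the universal approximation theorem, $\|w\|$ is a fixed constant, and $\|\dot w(t)\|$ is bounded by a multiple of $\|\partial_tG\|_{L^\infty(\Omega)}\le C_2$ (Assumption~\ref{assump:small G}), since the ideal weights are a fixed-basis projection of $G(\cdot,t)$; these three contributions form $C_0$. Choosing the gains $k_u>1$, $k_\eta>\tfrac12$, $k_w>1$ and keeping $k_\rho C_*$ bounded below by a positive constant makes every remaining quadratic coefficient strictly positive, which gives $\dot V_3\le-\mu V_3+C_0$ with $\mu=\min\{2\inf(k_\rho C_*),\,2(k_u-1),\,2(k_\eta-\tfrac12),\,(k_w-1)\lambda_{\min}(\Gamma)\}>0$, and the conclusion follows from the comparison lemma as stated above.

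\emph{Main obstacle.} The algebra above is routine once the backstepping cancellations are spotted; the genuinely delicate points are (i) making the chain rigorous — $u_d=-k_\rho\nabla\tilde\rho$, its time derivative $\partial_tu_d$ appearing in $F_d$, and in turn $\partial_tF_d$ in \eqref{eq:dV3} all presuppose regularity of $(\rho,u)$ that is not part of the problem statement and must be imported from the well-posedness theory; (ii) ruling out finite-time blow-up on $[0,T]$, before the coverage property and the regularity of $G$ take effect, so that $V_3(T)<\infty$; and (iii) justifying the ``controllability'' step, i.e.\ that full coverage genuinely forces some $\int_{\Omega_i(t)}\tilde F^TF_\theta^i\,dx\ne0$ when $\tilde F\not\equiv0$, which is what makes the cancellation of the indefinite term legitimate rather than merely formal. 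I expect (i)--(ii) to carry most of the analytic weight.
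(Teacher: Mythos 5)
Your proposal follows essentially the same route as the paper: the augmented Lyapunov functional $V_3$ of \eqref{eq:Lyapunov V3}, the backstepping cancellations produced by \eqref{eq:Fd}, \eqref{eq:update law} and \eqref{eq:direction controller}, the bounds $\|\epsilon\|_{L^2(\Omega)}\le C_3$, $\|w\|\le C_4$, $\|\dot w\|\le C_5$ drawn from Assumption~\ref{assump:small G}, and Assumption~\ref{assump:coverage} to guarantee solvability of \eqref{eq:controllability} for $t>T$. Your comparison-lemma phrasing and your observation that the direction law actually produces $-p\,k_\eta\|\tilde F\|_{L^2(\Omega)}^2$ with $p=|\{i:\beta_i>0\}|\ge1$ (the paper writes it as $-k_\eta\|\tilde F\|_{L^2(\Omega)}^2$) are harmless refinements, and your remarks (i)--(iii) correctly flag regularity and controllability issues that the paper also leaves implicit.

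The one substantive slip is a double count of the cross term $\int_\Omega\tilde u^T\tilde F\,dx$. In \eqref{eq:dV3} the symbol $dV_2/dt$ denotes the expression \eqref{eq:dV2} with $F$ replaced by $F_d$; that is precisely why $\tilde F^T\tilde u$ appears explicitly in the integrand of \eqref{eq:dV3}, and the direction law \eqref{eq:direction controller} cancels that copy exactly, so no cross term survives. You instead keep $\tilde u^T\tilde F$ inside your $\dot V_2$ and then assert $\dot V_3\le\dot V_2-k_\eta\|\tilde F\|_{L^2(\Omega)}^2$; with your definition of $\dot V_2$ the correct consequence of the cancellation is $\dot V_3=\dot V_2-\int_\Omega\tilde F^T\tilde u\,dx-p\,k_\eta\|\tilde F\|_{L^2(\Omega)}^2$, so your intermediate inequality does not follow as written, and the Young bound on $\tilde u^T\tilde F$ treats a term that should have disappeared. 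The final differential inequality you reach is still true (it is weaker than the correct one), but it forces the spurious gain conditions $k_u>1$, $k_\eta>\tfrac12$, $k_w>1$, which the theorem does not assume: it claims uniform ultimate boundedness for the given controllers with any positive gains. With the paper's accounting the cross term cancels exactly, the $\tilde u^T\epsilon$ term is absorbed by splitting $k_u=k_{u1}+k_{u2}$, and the $\tilde w$ terms are handled by grouping $k_w\hat w+\Gamma^{-1}\dot w$ (or a Young split weighted by $k_w$), so the conclusion holds for arbitrary positive $k_u,k_w,k_\eta$ and $\inf_{(x,t)}k_\rho>0$; you should rework your final absorption step along these lines to recover the theorem as stated rather than a gain-restricted version of it.
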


\begin{proof}
First, consider the Lyapunov functional \eqref{eq:Lyapunov V2}.
By substituting \eqref{eq:Fd} and \eqref{eq:update law} into \eqref{eq:dV2}, we obtain
\begin{align*}
    \frac{dV_2}{dt}\leq & \int_\Omega-k_\rho C_*\tilde{\rho}^2-k_u\|\tilde{u}\|^2+\tilde{u}^T\epsilon dx \\
    & -\tilde{w}^T(k_w\hat{w}+\Gamma^{-1}\dot{w}).
\end{align*}
Let $k_u=k_{u1}+k_{u2}$.
We have
\begin{align*}
    -k_{u2}\|\tilde{u}\|^2+\tilde{u}^T\epsilon=-k_{u2}\Big\|\tilde{u}-\frac{\epsilon}{2k_{u2}}\Big\|^2+\frac{\|\epsilon\|^2}{4k_{u2}}\leq\frac{\|\epsilon\|^2}{4k_{u2}}.
\end{align*}
Also,
\begin{align*}
    & -\tilde{w}^T(k_w\hat{w}+\Gamma^{-1}\dot{w}) \\
    = & -k_w\|\tilde{w}\|^2+k_w\|\tilde{w}\|\|w+(k_w\Gamma)^{-1}\dot{w}\| \\
    \leq & -\frac{k_w}{2}\|\tilde{w}\|^2+\frac{k_w}{2}\|w+(k_w\Gamma)^{-1}\dot{w}\|^2 \\
    \leq & -\frac{k_w}{2}\|\tilde{w}\|^2+\frac{k_w}{2}\|w\|^2+\frac{\|\Gamma^{-1}\|}{2}\|\dot{w}\|^2.
\end{align*}
Furthermore, under the assumptions of $G$, there exist positive constants $C_3,C_4,C_5$ (depending on $C_1,C_2$ and $\phi$) such that $\|\epsilon(x,t)\|_{L^2(\Omega)}\leq C_3$, $\|w(t)\|\leq C_4$, and $\|\dot{w}(t)\|\leq C_5$ for all $t$.
Hence, we obtain
\begin{align}\label{eq:dV2 closed-loop}
\begin{split}
    \frac{dV_2}{dt} & \leq\int_\Omega-k_\rho C_*\tilde{\rho}^2-k_{u1}\|\tilde{u}\|^2+\frac{\|\epsilon\|^2}{4k_{u2}}dx \\
    & \quad-\frac{k_w}{2}\|\tilde{w}\|^2+\frac{k_w}{2}\|\tilde{w}\|^2+\frac{\|\Gamma^{-1}\|}{2}\|\tilde{w}\|^2 \\
    & =-\bar{k}_\rho C_*\|\tilde{\rho}\|_{L^2(\Omega)}^2-k_{u1}\|\tilde{u}\|_{L^2(\Omega)}^2-\frac{k_w}{2}\|\tilde{w}\|^2 \\
    & \quad+\frac{1}{4k_{u2}}\|\epsilon\|_{L^2(\Omega)}^2+\frac{k_w}{2}\|\tilde{w}\|^2+\frac{\|\Gamma^{-1}\|}{2}\|\tilde{w}\|^2 \\
    & \leq-\bar{k}_\rho C_*\|\tilde{\rho}\|_{L^2(\Omega)}^2-k_{u1}\|\tilde{u}\|_{L^2(\Omega)}^2 \\
    & \quad-\frac{k_w}{2}\|\tilde{w}\|^2+C_G,
\end{split}
\end{align}
where $\bar{k}_\rho=\inf_{(x,t)}k_\rho(x,t)$ and
\begin{align*}
    C_G=\frac{C_3^2}{4k_{u2}}+\frac{k_wC_4^2}{2}+\frac{\|\Gamma^{-1}\|C_5^2}{2}.
\end{align*}

Next, consider the Lyapunov functional \eqref{eq:Lyapunov V3}.
Since $\Omega\subset\Omega_r(t)$ for all $t>T$, \eqref{eq:controllability} always has a solution for all $t>T$.
By substituting \eqref{eq:direction controller} and \eqref{eq:dV2 closed-loop} into \eqref{eq:dV3}, we obtain that
\begin{align}\label{eq:dV3 closed-loop}
\begin{split}
    \frac{dV_3}{dt} & =\frac{dV_2}{dt}-k_\eta\|\tilde{F}\|_{L^2(\Omega)}^2 \\
    & \leq-\bar{k}_\rho C_*\|\tilde{\rho}\|_{L^2(\Omega)}^2-k_{u1}\|\tilde{u}\|_{L^2(\Omega)}^2 \\
    & \quad-k_\eta\|\tilde{F}\|_{L^2(\Omega)}^2-\frac{k_w}{2}\|\tilde{w}\|^2+C_G.
\end{split}
\end{align}
Clearly, $\frac{dV_2}{dt}$ is negative definite whenever
\begin{align*}
    & \|\tilde{\rho}\|_{L^2(\Omega)}>\sqrt{C_G/(\bar{k}_\rho C_*)}, \quad \|\tilde{u}\|_{L^2(\Omega)}>\sqrt{C_G/k_{u1}}, \\
    & \|\tilde{F}\|_{L^2(\Omega)}>\sqrt{C_G/k_\eta}, \quad \text{or}\quad \|\tilde{w}\|>\sqrt{2C_G/k_w},
\end{align*}
which proves that $(\|\tilde{\rho}(x,t)\|_{L^2(\Omega)},\|\tilde{u}(x,t)\|_{L^2(\Omega)},\|\tilde{F}\|_{L^2(\Omega)},$ $|\tilde{w}(t)|)$ are all uniformly ultimately bounded.
\end{proof}

We can further prove that if the unknown dynamics $G$ vanishes after a finite time, then the crowd density will converge exponentially toward the target density.

\begin{assumption}\label{assump:G=0}
There exists a constant $T>0$ such that $G(x,t)\equiv0$ for all $t>T$.
\end{assumption}
  
Note that this assumption does not exclude the situation where dynamic obstacles are present.
As what we explained before, the additional repulsive forces from the obstacles will vanish once the humans are far away from them.
Therefore, as long as all humans no longer encounter obstacles after a finite time, the additional repulsive forces from the obstacles will disappear in $G$ after that.
While it is difficult to state a rigorous sufficient condition such that humans no longer encounter obstacles after a finite time, a heuristic condition is that all obstacles are convex and ``small'' (so that humans won't get trapped by the obstacles) and that no obstacles are present at the target locations of the evacuation task (so that humans can simply pass by them).
In this case, the adaptive control law \eqref{eq:update law} is no longer needed and in \eqref{eq:Fd} we set $\hat{w}\equiv0$.

\begin{theorem}\label{thm:asymptotic stability}
Consider the robot models \eqref{eq:robot model}, the navigation force generated by the robots \eqref{eq:navigation force}, and the crowd dynamics \eqref{eq:density}-\eqref{eq:momentum}.
Let $\{\tau_i(t)\}_{i=1}^n$ and $\{\eta_i(t)\}_{i=1}^n$ be given by \eqref{eq:position controller} and \eqref{eq:direction controller}, respectively, where $u_d$ and ${F}_d$ are computed using \eqref{eq:ud} and \eqref{eq:Fd} (where $\hat{w}\equiv0$), respectively.
Then under Assumptions \ref{assump:coverage} and \ref{assump:G=0}, $(\|\tilde{\rho}(x,t)\|_{L^2(\Omega)},\|\tilde{u}(x,t)\|_{L^2(\Omega)},\|\tilde{F}\|_{L^2(\Omega)})\to0$ exponentially.
\end{theorem}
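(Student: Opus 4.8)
The plan is to build directly on the Lyapunov construction already developed in the proof of Theorem~\ref{thm:stability}, exploiting the fact that Assumption~\ref{assump:G=0} removes every residual error term after a finite time, so that uniform ultimate boundedness sharpens to exponential decay. First I would set $\bar T=\max\{T_1,T_2\}$, where $T_1$ is the coverage time from Assumption~\ref{assump:coverage} ($\Omega\subset\Omega_r(t)$ for $t>T_1$) and $T_2$ is the time from Assumption~\ref{assump:G=0} ($G\equiv0$ for $t>T_2$). By well-posedness of \eqref{eq:density}--\eqref{eq:momentum} on the compact interval $[0,\bar T]$, the norms $\|\tilde\rho\|_{L^2(\Omega)}$, $\|\tilde u\|_{L^2(\Omega)}$, $\|\tilde F\|_{L^2(\Omega)}$ stay finite up to $t=\bar T$, so it is enough to establish exponential decay on $(\bar T,\infty)$.

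Next, for $t>\bar T$ I would repeat the computation leading to \eqref{eq:dV3 closed-loop} under the simplifications $G\equiv0$ (hence $w\equiv0$, $\epsilon\equiv0$ in \eqref{eq:ANN}) and $\hat w\equiv0$ (hence $\tilde w\equiv0$). With $\hat w\equiv0$ the field $F_d$ in \eqref{eq:Fd} loses its $-\phi^T\hat w$ term, the $\Gamma^{-1}\tilde w$ term disappears from $V_3$, and the density-feedback and momentum cross terms cancel exactly as in Theorem~\ref{thm:stability}, leaving $dV_2/dt=\int_\Omega(-k_\rho C_*\tilde\rho^2-k_u\|\tilde u\|^2)\,dx$ \emph{with no surviving constant}: the constant $C_G$ in \eqref{eq:dV2 closed-loop} was built solely from bounds on $\epsilon$, $w$ and $\dot w$, all of which are now zero, and the $k_u=k_{u1}+k_{u2}$ splitting used there to absorb $\tilde u^T\epsilon$ is no longer needed. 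Since Assumption~\ref{assump:coverage} makes \eqref{eq:controllability} solvable for every $t>\bar T$, the direction controllers \eqref{eq:direction controller} are well-defined, and substituting them into \eqref{eq:dV3} exactly as before produces
\begin{align*}
    \frac{dV_3}{dt}\le-\bar k_\rho C_*\|\tilde\rho\|_{L^2(\Omega)}^2-k_u\|\tilde u\|_{L^2(\Omega)}^2-k_\eta\|\tilde F\|_{L^2(\Omega)}^2,\qquad t>\bar T,
\end{align*}
with $\bar k_\rho=\inf_{(x,t)}k_\rho(x,t)>0$ and $C_*$ the Poincar\'e constant.

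Finally, with $V_3(t)=\tfrac12\big(\|\tilde\rho\|_{L^2(\Omega)}^2+\|\tilde u\|_{L^2(\Omega)}^2+\|\tilde F\|_{L^2(\Omega)}^2\big)$ and $c:=\min\{2\bar k_\rho C_*,2k_u,2k_\eta\}>0$, the displayed bound reads $\dot V_3\le-cV_3$ on $(\bar T,\infty)$, which integrates to $V_3(t)\le V_3(\bar T)e^{-c(t-\bar T)}$; since each squared norm is bounded by $2V_3$, all three decay to zero exponentially, which is the assertion. The step I expect to be the main (though still mild) obstacle is the bookkeeping in the second paragraph: one must check carefully that zeroing $\epsilon$, $w$, $\dot w$ turns the right-hand sides of $dV_2/dt$ and $dV_3/dt$ into genuinely negative-definite forms (not merely negative semidefinite up to an additive constant, as in Theorem~\ref{thm:stability}) and that the adopted gain $k_\rho$ indeed satisfies $\bar k_\rho>0$ on $(\bar T,\infty)$; the ODE comparison step is then immediate.
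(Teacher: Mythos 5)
Your proposal matches the paper's proof: the paper likewise takes the Lyapunov functional $V_4(t)=\frac{1}{2}\int_\Omega(\tilde{\rho}^2+\|\tilde{u}\|^2+\|\tilde{F}\|^2)dx$ (your $V_3$ with $\tilde{w}\equiv0$), drops the $w$- and $\epsilon$-related terms from \eqref{eq:dV3 closed-loop} since $G\equiv0$, and concludes exponential decay from the resulting negative-definite bound. Your added bookkeeping (handling the transient $[0,\bar T]$ and spelling out the comparison inequality $\dot V\le -cV$) only makes explicit what the paper leaves implicit, so this is essentially the same argument.
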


\begin{proof}
Consider a Lyapunov functional
\begin{align*}
    V_4(t)=\frac{1}{2}\int_\Omega(\tilde{\rho}^2+\|\tilde{u}\|^2+\|\tilde{F}\|^2)dx.
\end{align*}
Using \eqref{eq:dV3 closed-loop}, and removing all the terms related to $w$ and $\epsilon$, we obtain
\begin{align*}
    \frac{dV_4}{dt} \leq-\bar{k}_\rho C_*\|\tilde{\rho}\|_{L^2(\Omega)}^2-k_{u1}\|\tilde{u}\|_{L^2(\Omega)}^2-k_\eta\|\tilde{F}\|_{L^2(\Omega)}^2,
\end{align*}
which proves the exponential stability.
\end{proof}

In fact, we expect that the conclusions in Theorems \ref{thm:stability} and \ref{thm:asymptotic stability} still hold if the robots are able to eventually cover the entire crowd, as opposed to the entire environment.
(This will be verified in the subsequent experiments.)
Formally, denote by $\Omega_\rho(t)\subset\Omega$ the support of the density $\rho(x,t)$, i.e.,
\begin{align}
    \Omega_\rho(t) = \{x|\rho(x,t)>0\}.
\end{align}
Roughly speaking, $\Omega_\rho(t)$ is the region where humans present.
We expect the same conclusion to hold if the condition $\Omega\subset\Omega_r(t)$ is replaced by $\Omega\subset\Omega_\rho(t)$.
This condition can significantly reduce the necessary number of robots, especially when the target density only concentrates on a small region.
However, one needs to design more complicated position controllers than \eqref{eq:position controller} since the objective, in this case, is to first sufficiently spread out to explore the environment and then shrink to cover only the human crowd.
This more complicated scenario is left for future study.

\section{Implementation}
\label{section:implementation}

In this section, we discuss how to implement the proposed algorithms in practice.
The current evacuation system is \textit{centralized}, meaning that a communication center is needed to collect information, estimate the required global states, and pass this information on to robots.

{\color{black}
We assume that experiments have been performed in advance to estimate the coefficients $C_r,\sigma_r,C_a,\sigma_a$ of the social force model \eqref{eq:human social force} of human-human interaction and the kernel function $\bar{K}$ in the social force model \eqref{eq:human-robot force} of human-robot interaction.
Relevant result on determining these models can be found in \cite{helbing2002simulation, nayyar2023learning}.
The number of robots needed can be determined by dividing the area of the environment $\Omega$ by the area of influence of a single robot $\Omega_{\bar{K}}$, both of which are known beforehand.
}

The position controllers \eqref{eq:position controller} require that the robots have appropriate sensors to detect the positions of nearby robots and obstacles.
The direction controllers \eqref{eq:direction controller} depend on the global information $\rho(x,t)$ and $u(x,t)$.
To obtain these macro-states, we need to assume that all the human positions and velocities $\{x_j(t),v_j(t)\}_{j=1}^N$ are available to the communication center.
This information can be collected either by installing a surveillance system for the environment in advance or by the robots since they will eventually exhibit satisfactory coverage of the environment under the position controllers \eqref{eq:position controller}.

\textit{Density Estimation.} The human positions $\{x_j(t)\}_{j=1}^N$ can be viewed as a collection of samples of the unknown (probability) density function $\rho(x,t)$ at every $t$.
Hence, $\rho(x,t)$ can be estimated from $\{x_j(t)\}_{j=1}^N$ using classical density estimation algorithms such as kernel density estimation which is adopted in this work.
Specifically, given the human positions $\{x_j(t)\}_{j=1}^N$, we construct a density estimate $\hat{\rho}(x,t)$ according to:
\begin{align}\label{eq:KDE}
    \hat{\rho}(x,t) = \frac{1}{N h^2} \sum_{j=1}^N H\left(\frac{1}{h}\left(x-X_{j}(t)\right)\right),
\end{align}
where $H(x)$ is a radially symmetric kernel function that satisfies
\begin{align*}
    \int H(x)=1,\quad \int xH(x)=0, \quad 0<\int x^2H(x)<+\infty,
\end{align*}
and $h$ is the bandwidth, usually chosen as a function of $N$, i.e., $h=h_N$, such that
\begin{align*}
    \lim_{N\to\infty} h_N=0, \quad \lim_{N\to\infty}Nh_N=\infty.
\end{align*}
The most common choice of $H(x)$ is the Gaussian kernel given by:
\begin{align}
    H(x)=\frac{1}{2\pi}\exp\big(-\frac{1}{2}x^Tx\big).
\end{align}

\begin{remark}
The performance of kernel density estimation largely depends on the selection of the bandwidth $h$.
A computationally more expensive but more accurate approach is to use the density filtering algorithms reported in \cite{zheng2020pde, zheng2021distributedmean}.
These filtering algorithms exploit the density dynamics \eqref{eq:density} to improve the estimates using kernel density estimation.
The estimates can be proven to be convergent and the performance is very robust to different selections of bandwidth.
\end{remark}

\textit{Velocity Field Estimation.} 
The crowd velocity field $u(x,t)$ satisfies $u(x_j(t),t)=v_j(t)$, i.e., when projecting to the position of $j$-th human, the value of $u$ is exactly the velocity of that human.
In other words, $\{x_j(t),v_j(t)\}_{j=1}^N$ can be viewed as a set of observations of the unknown function $u(x,t)$.
Hence, $u(x,t)$ can be estimated from $\{x_j(t),v_j(t)\}_{j=1}^N$ using function fitting algorithms such as linear and polynomial interpolation.

{\color{black}
The complete evacuation algorithm is summarized in Algorithm \ref{algorithm:evacuation}, where spatial integration can be approximated by summation after spatial discretization.
We shall emphasize that the macroscopic equations \eqref{eq:density}-\eqref{eq:momentum} are only used for control design and stability analysis. 
We do not need to solve any partial differential equations when implementing the control algorithms.
As shown in Algorithm \ref{algorithm:evacuation}, most of the computation can be done at the center, such as calculating the macroscopic states.
This part of the computation is largely unaffected by the number of humans and robots and depends mainly on the resolution of spatial discretization.
The only calculation that is dependent on the number of robots is the computation of the low-level robot controllers $\tau_i(t)$ using \eqref{eq:position controller} and $\eta_i(t)$ using \eqref{eq:direction controller}.
In the case of $N=250$, $n=16$, and $30\times30$ spatial discretization, the average time taken to complete one iteration of Algorithm \ref{algorithm:evacuation} was 40 ms, tested on a MacBook Pro (15-inch, 2018) with a 2.2 GHz 6-Core Intel Core i7 processor and 16 GB DDR4 RAM. 
This indicates that the algorithm has the potential to be implemented in real-time. 
The primary obstacle would be to efficiently gather real-time human positional data to enable real-time control.
To address this, we can perform the data collection at a lower rate, depending on the data collection system, while updating the robot controllers at a higher rate.}

\begin{algorithm}[h]
\SetAlgoLined
Determine the target density $\rho_*(x)$\;
Prescribe a threshold $\gamma$ for $\|\tilde{\rho}\|_{L^2(\Omega)}$\;
Initialize the adaptive weights $\hat{w}(0)$\;
\While{$\|\tilde{\rho}(\cdot,t)\|_{L^2(\Omega)}>\gamma$}{
    {\color{black}
    \For{the center}{
        Collect the crowd information $\{x_j(t),v_j(t)\}_{j=1}^N$ to estimate $\rho(x,t)$ and $u(x,t)$\;
        Collect the robots' states $\{r_i(t),\theta_i(t)\}_{i=1}^n$ and update $\hat{w}(t)$ using \eqref{eq:update law}\;
        Compute $u_d(x,t)$ using \eqref{eq:ud}\;
        Compute ${F}_d(x,t)$ using \eqref{eq:Fd}\;
        Send the above information to all robots\;
    }
    \For{each robot $i$}{
         Compute $\tau_i(t)$ using \eqref{eq:position controller} to update its position $r_i(t)$\;
         Compute $\eta_i(t)$ using \eqref{eq:direction controller} to update its direction $\theta_i(t)$\;
    }
    }
}
\caption{Robot-Guided Evacuation}
\label{algorithm:evacuation}
\end{algorithm}

\section{Simulation study}
\label{section:simulation}
We conducted simulation studies based on MATLAB to validate the evacuation algorithms presented in this work.
The simulation was agent-based, meaning that all humans and robots were simulated using their microscopic models (ordinary differential equations), and it did not involve solving any macroscopic models (partial differential equations).

\begin{figure}[t]
    \centering
    \includegraphics[width=0.5\columnwidth]{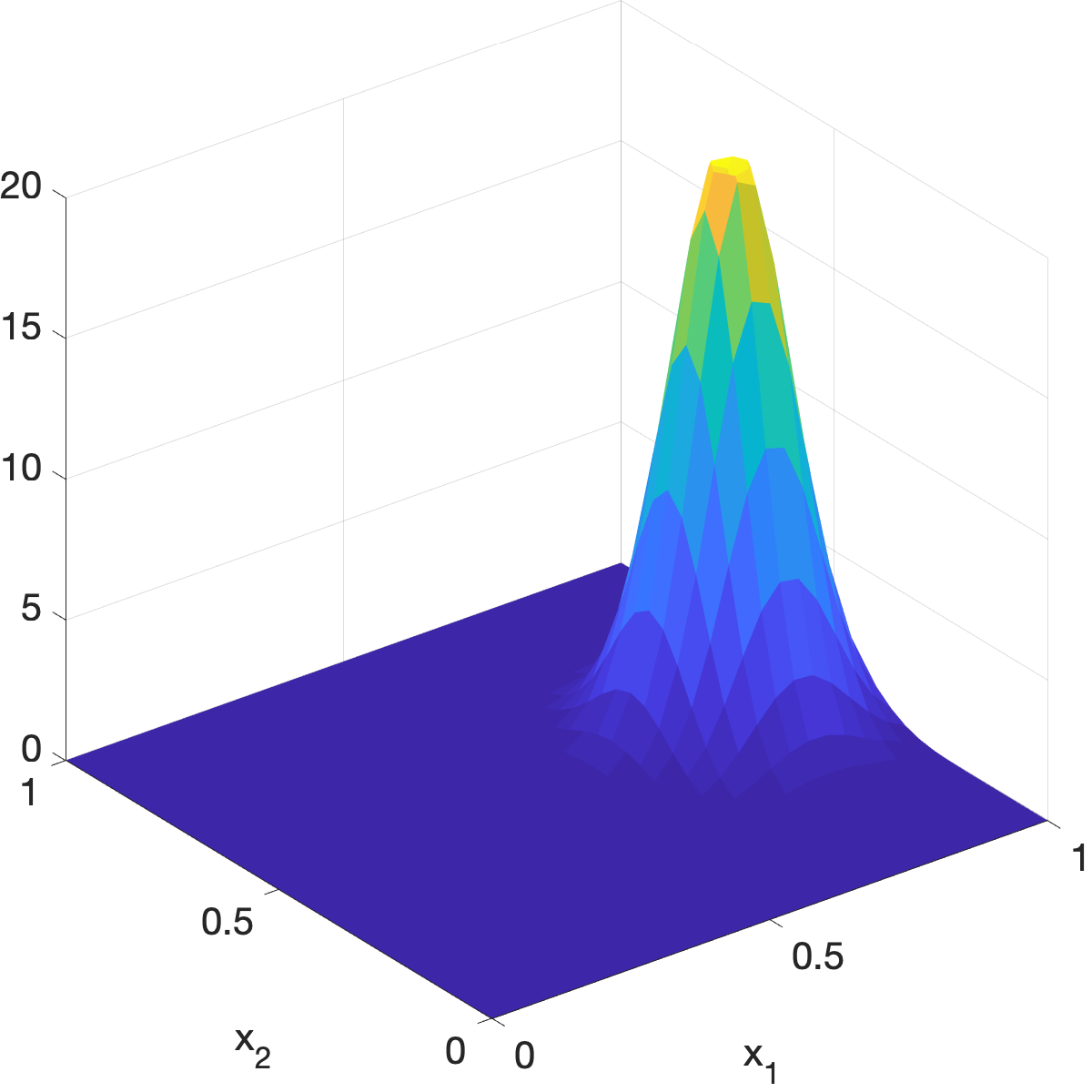}
    \caption{The target density $\rho_*(x)$.}
    \label{fig:target density}
\end{figure}

\begin{figure}[t]
    \centering
    \includegraphics[width=0.75\columnwidth]{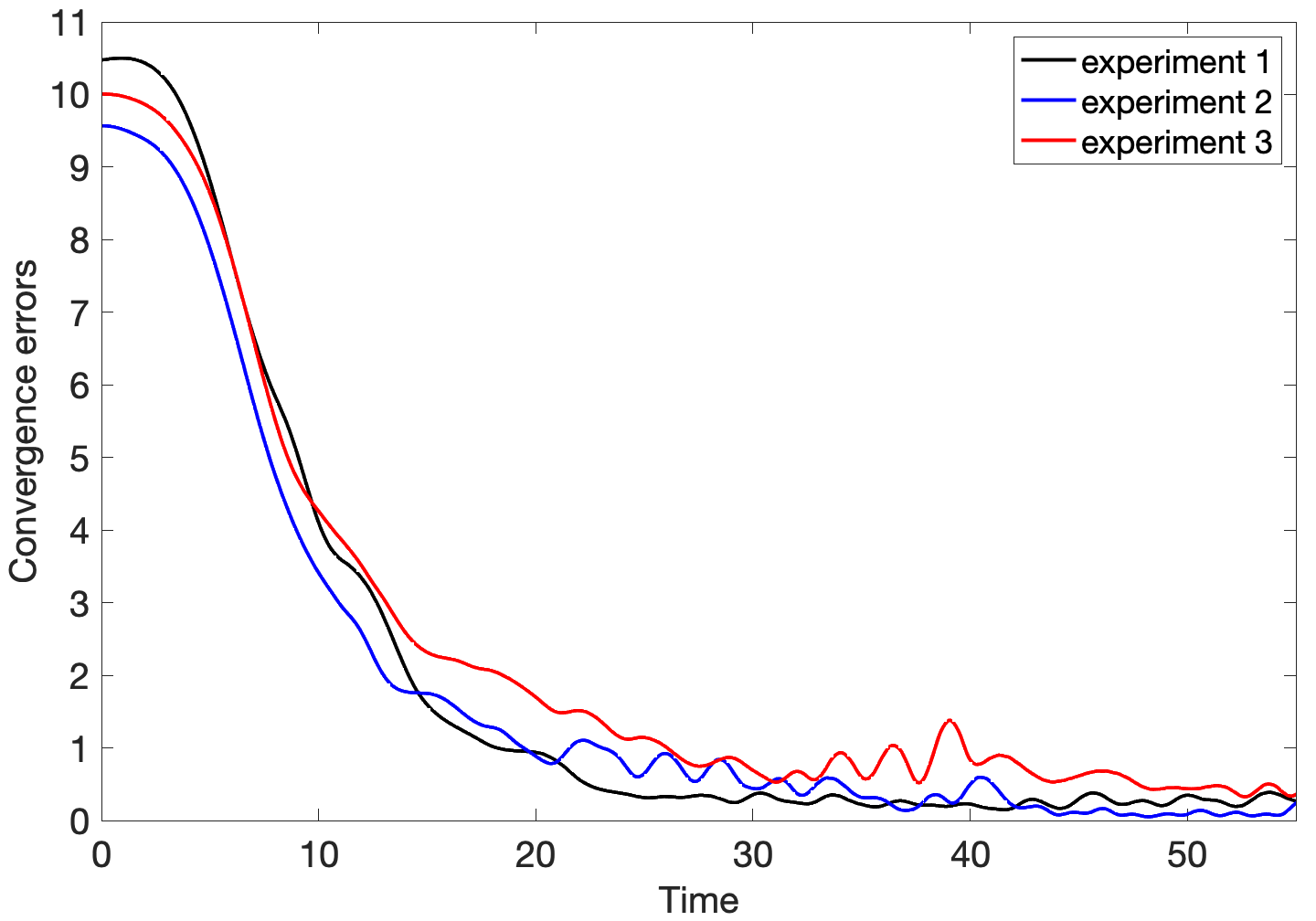}
    \caption{The density convergence errors $\|\tilde{\rho}(\cdot,t)\|_{L^2(\Omega)}^2$.}
    \label{fig:error}
\end{figure}

\begin{figure*}[t]
    \centering
    \begin{subfigure}[b]{0.19\textwidth}
        \centering
        \includegraphics[width=\textwidth]{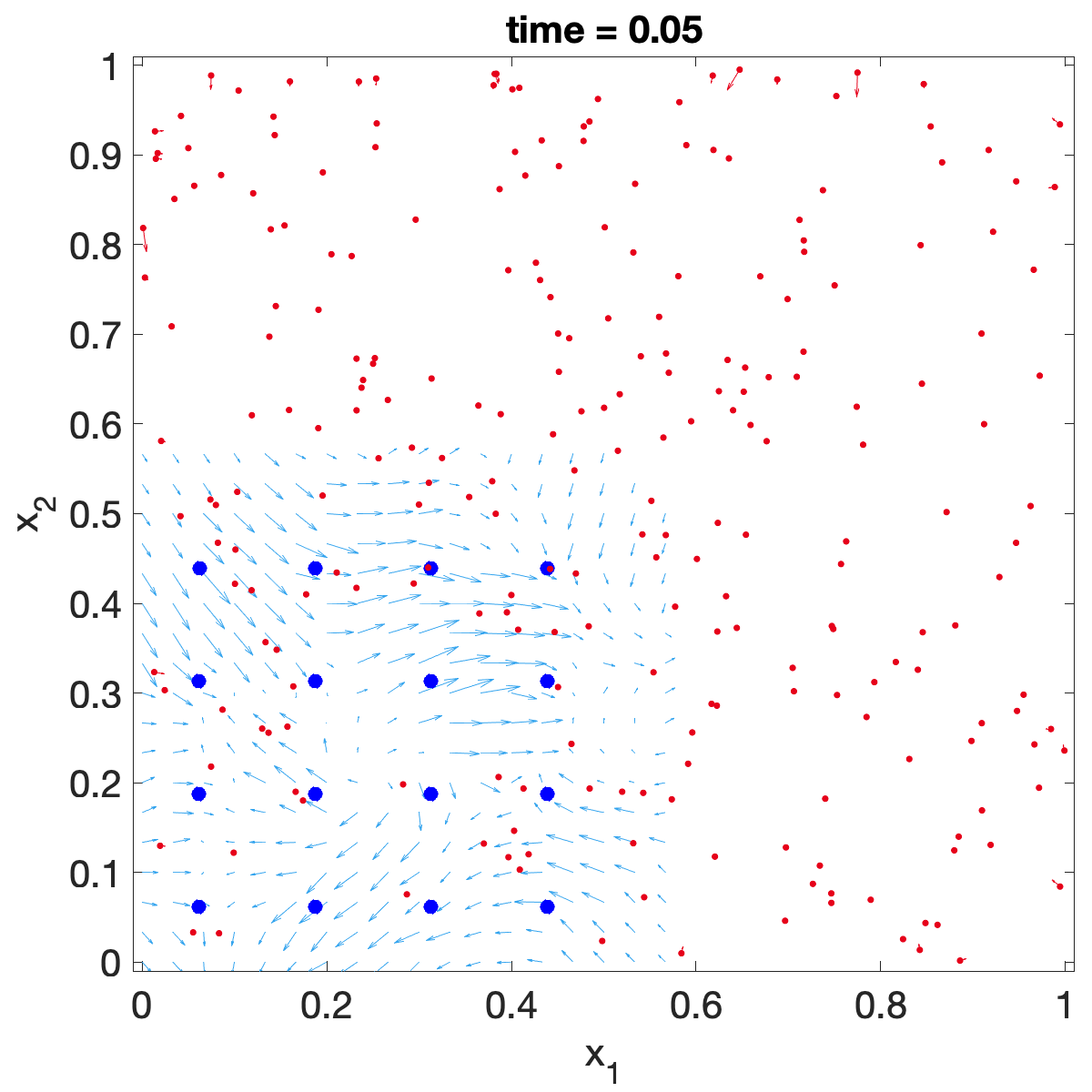}
    \end{subfigure}
    \begin{subfigure}[b]{0.19\textwidth}
        \centering
        \includegraphics[width=\textwidth]{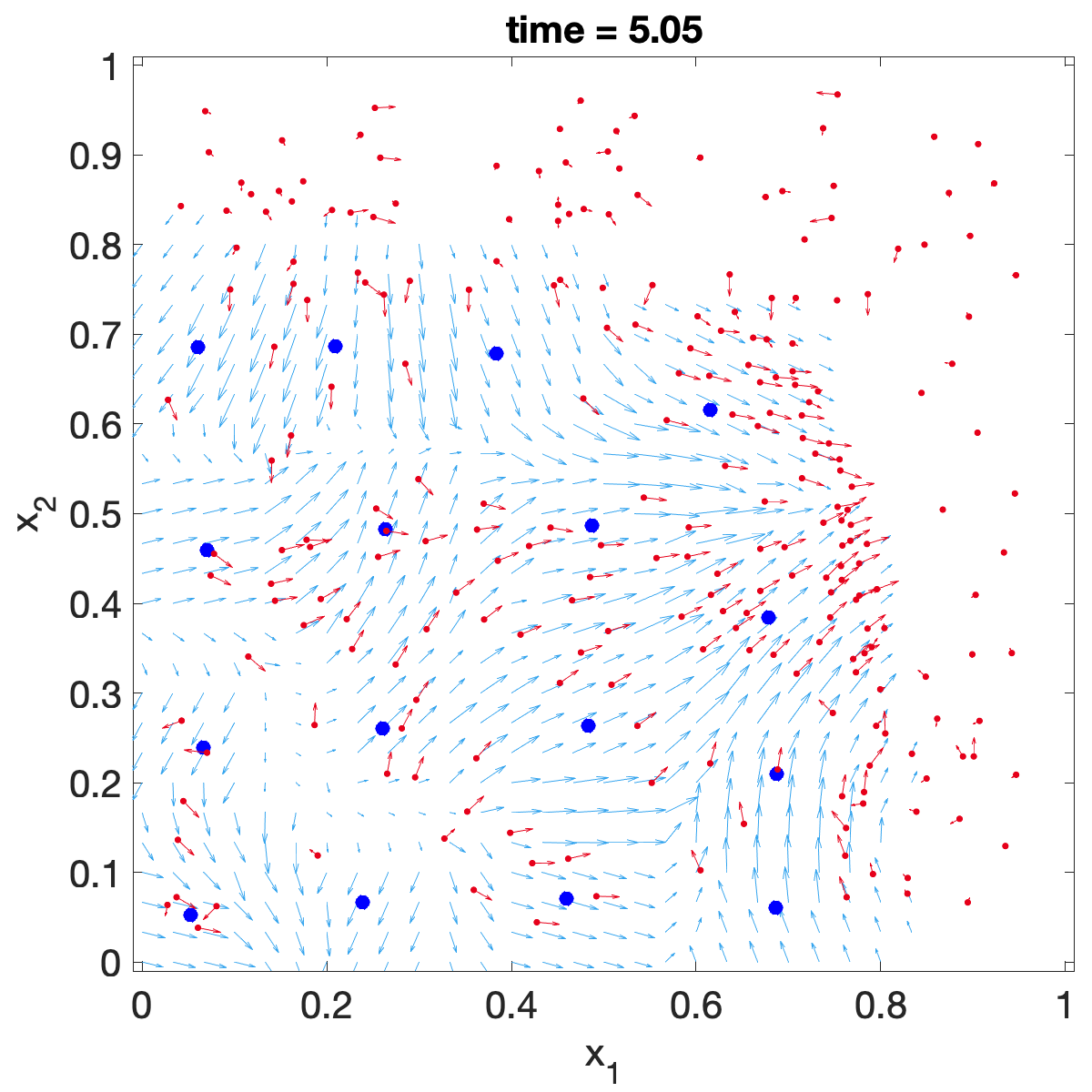}
    \end{subfigure}
    \begin{subfigure}[b]{0.19\textwidth}
        \centering
        \includegraphics[width=\textwidth]{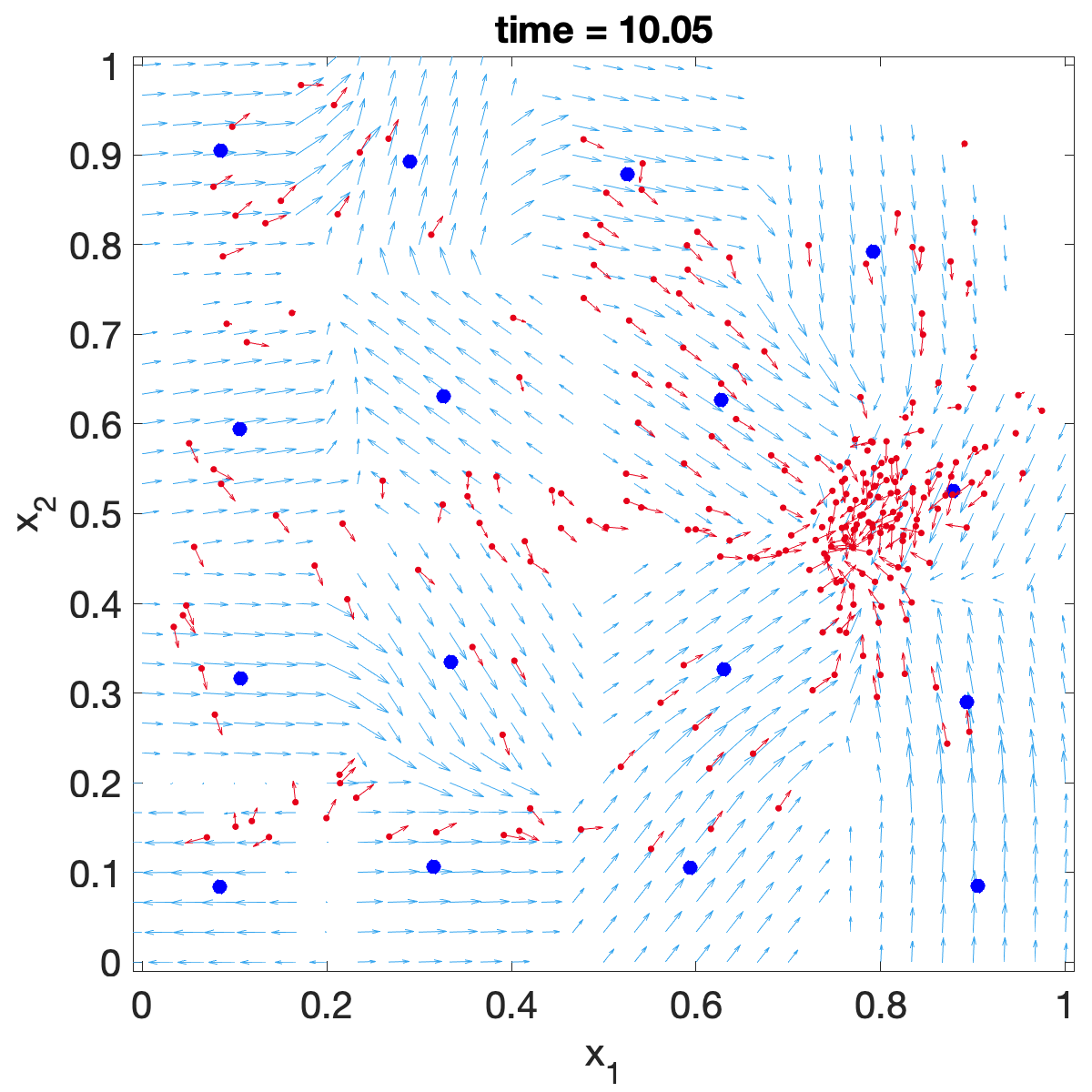}
    \end{subfigure}
    \begin{subfigure}[b]{0.19\textwidth}
        \centering
        \includegraphics[width=\textwidth]{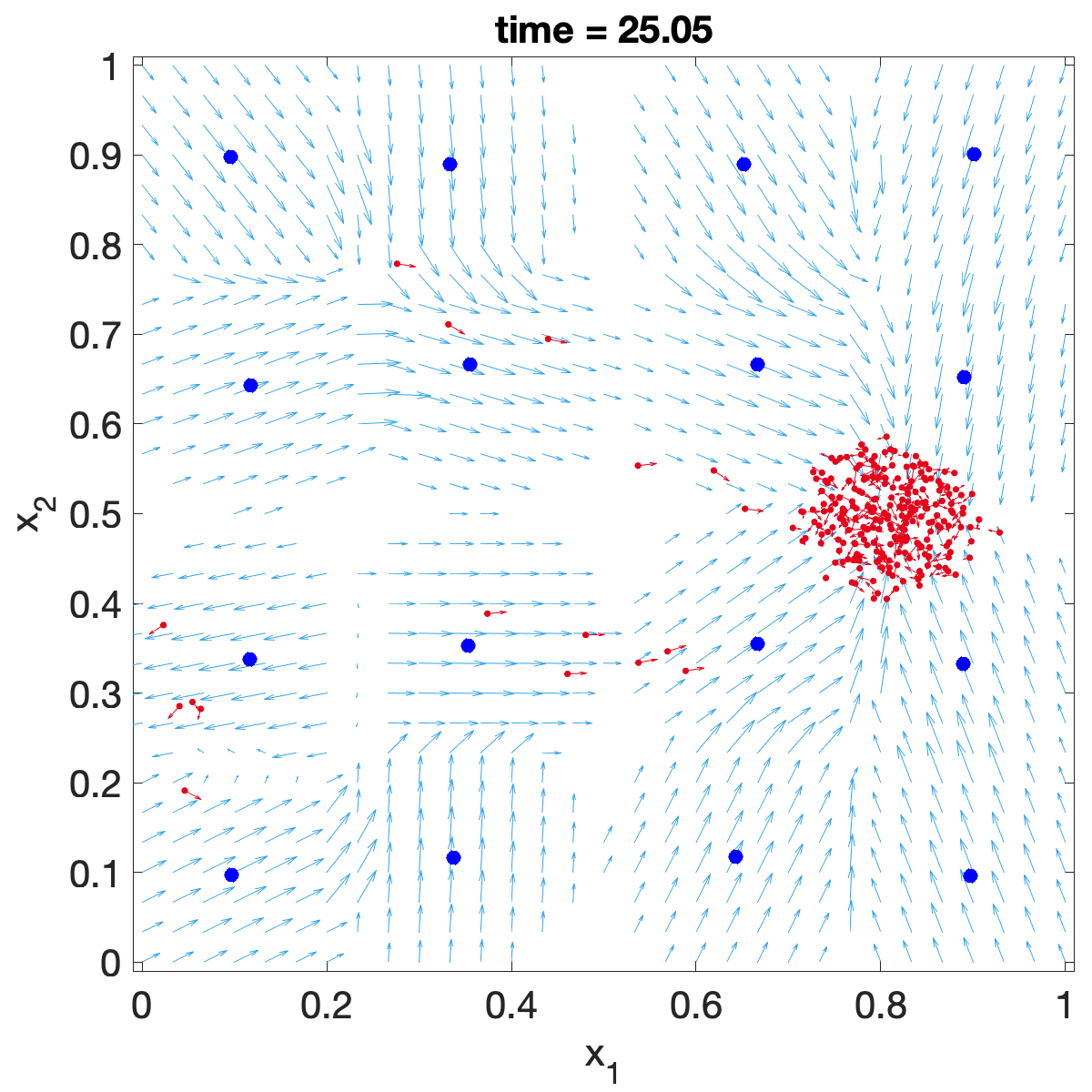}
    \end{subfigure}
    \begin{subfigure}[b]{0.19\textwidth}
        \centering
        \includegraphics[width=\textwidth]{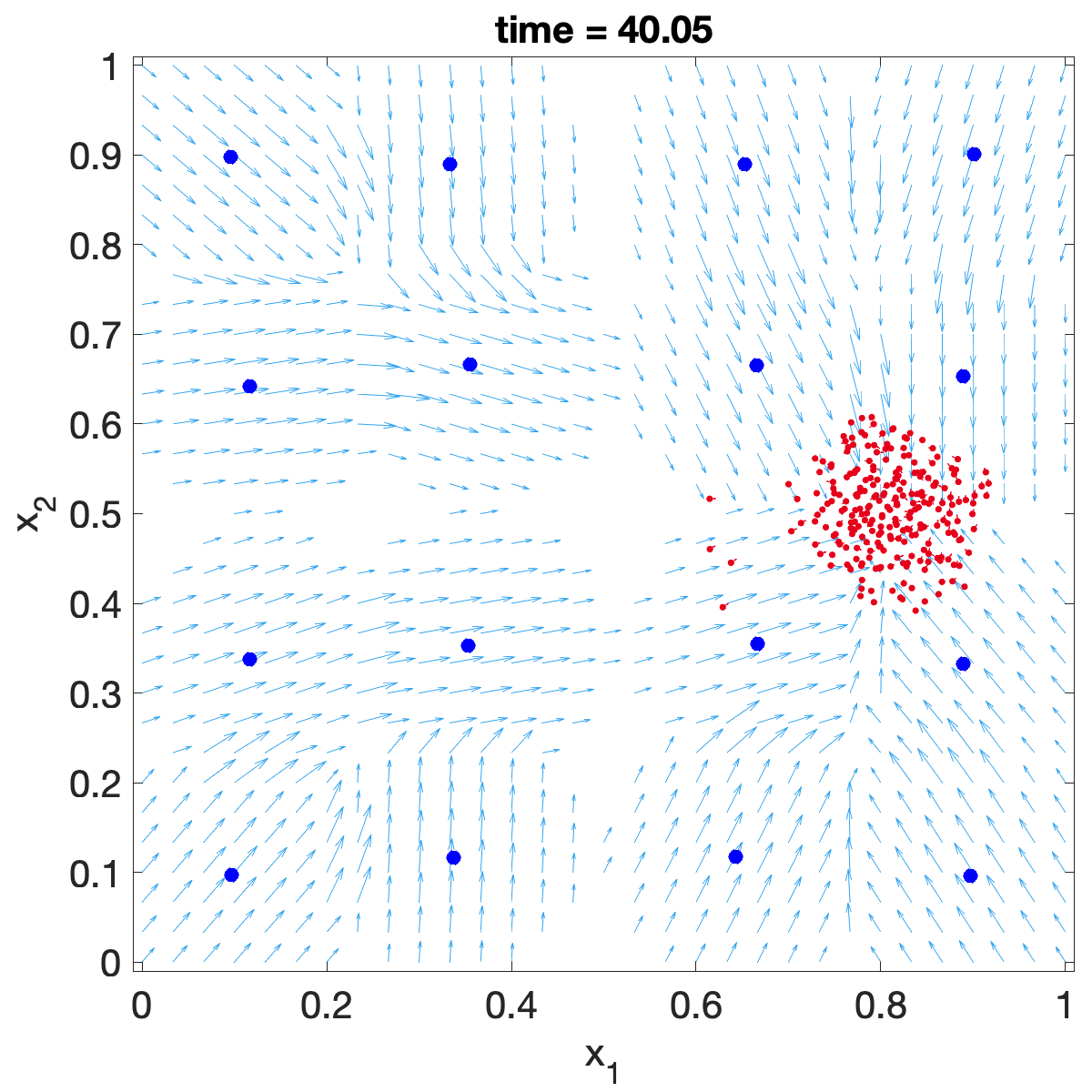}
    \end{subfigure}

    \begin{subfigure}[b]{0.19\textwidth}
        \centering
        \includegraphics[width=\textwidth]{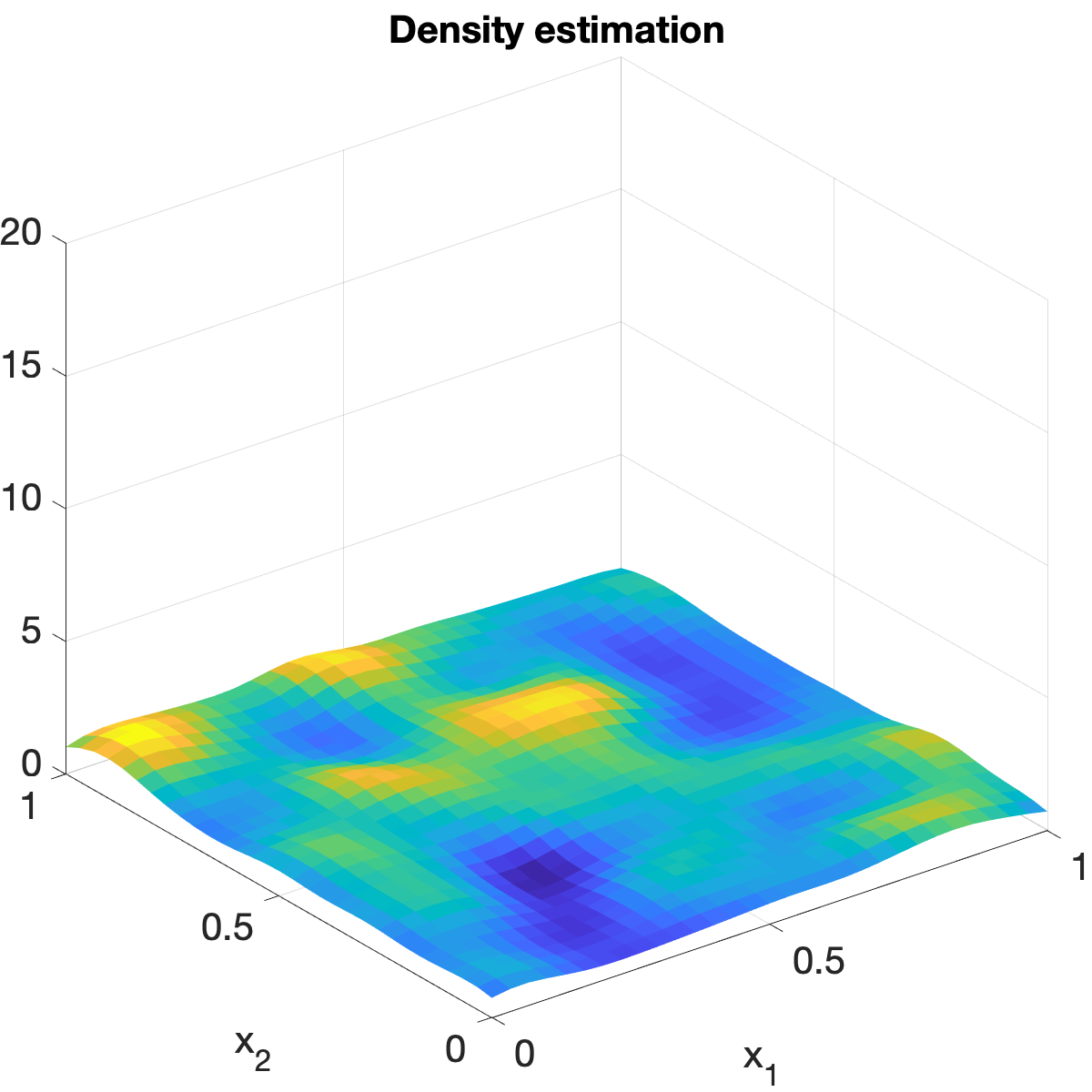}
    \end{subfigure}
    \begin{subfigure}[b]{0.19\textwidth}
        \centering
        \includegraphics[width=\textwidth]{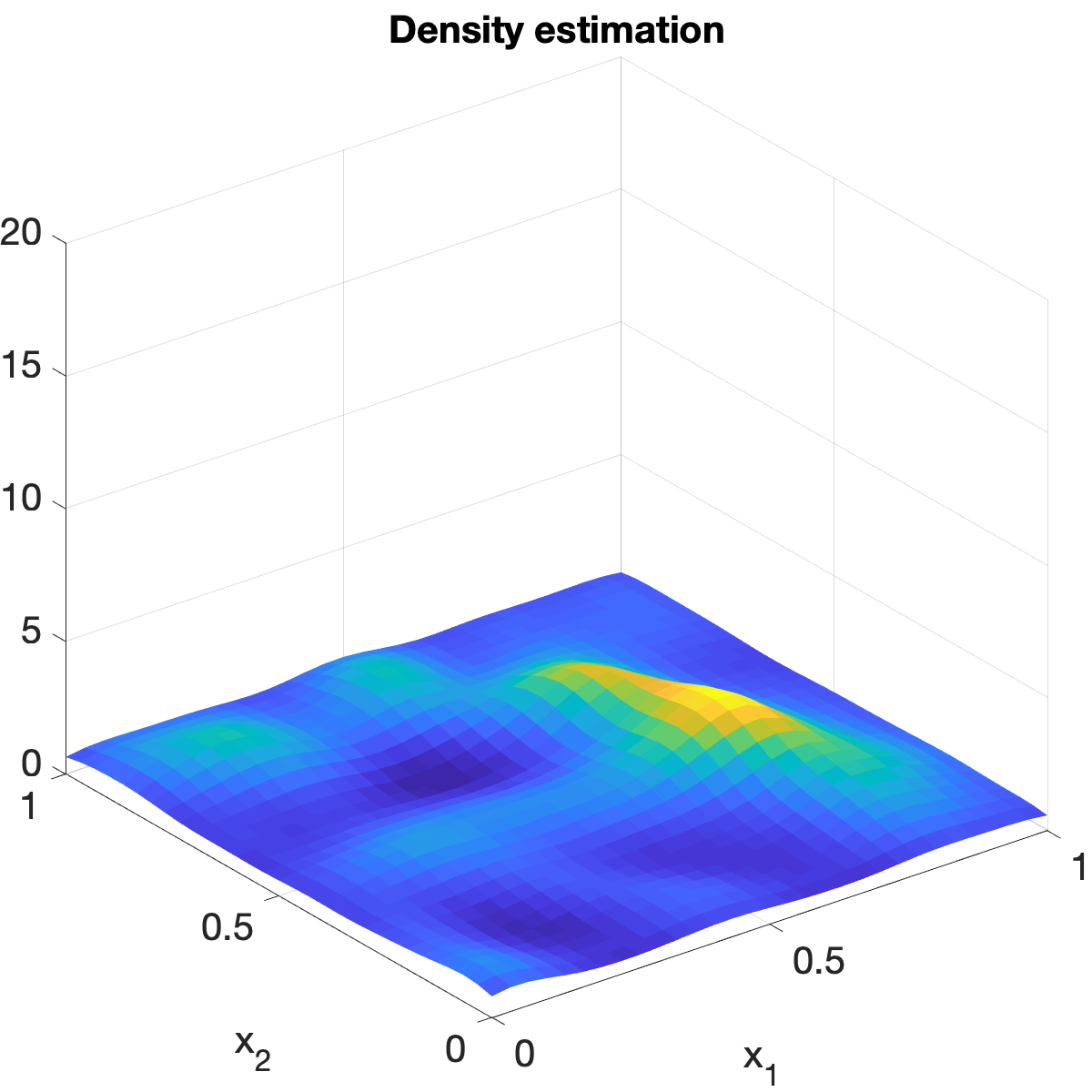}
    \end{subfigure}
    \begin{subfigure}[b]{0.19\textwidth}
        \centering
        \includegraphics[width=\textwidth]{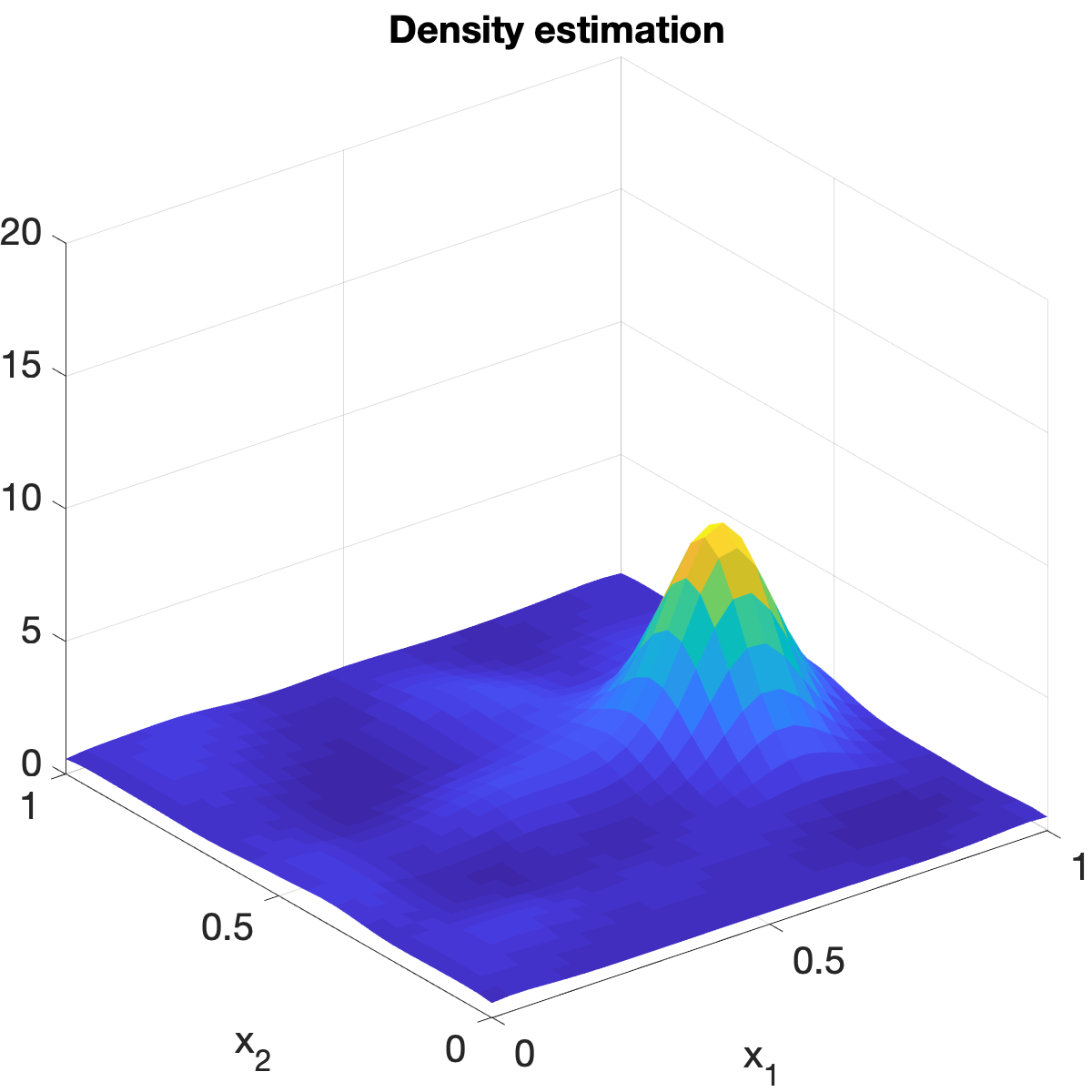}
    \end{subfigure}
    \begin{subfigure}[b]{0.19\textwidth}
        \centering
        \includegraphics[width=\textwidth]{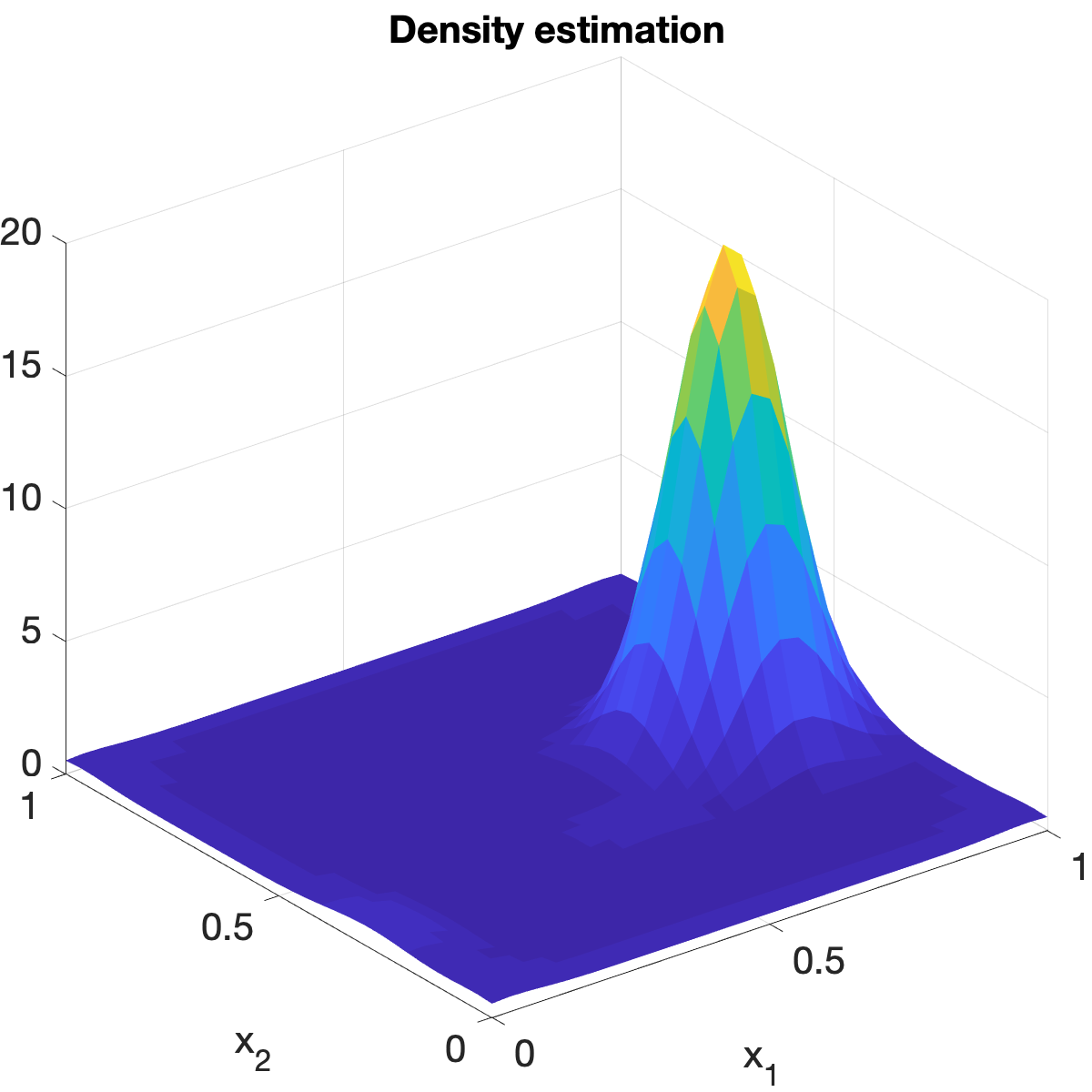}
    \end{subfigure}
    \begin{subfigure}[b]{0.19\textwidth}
        \centering
        \includegraphics[width=\textwidth]{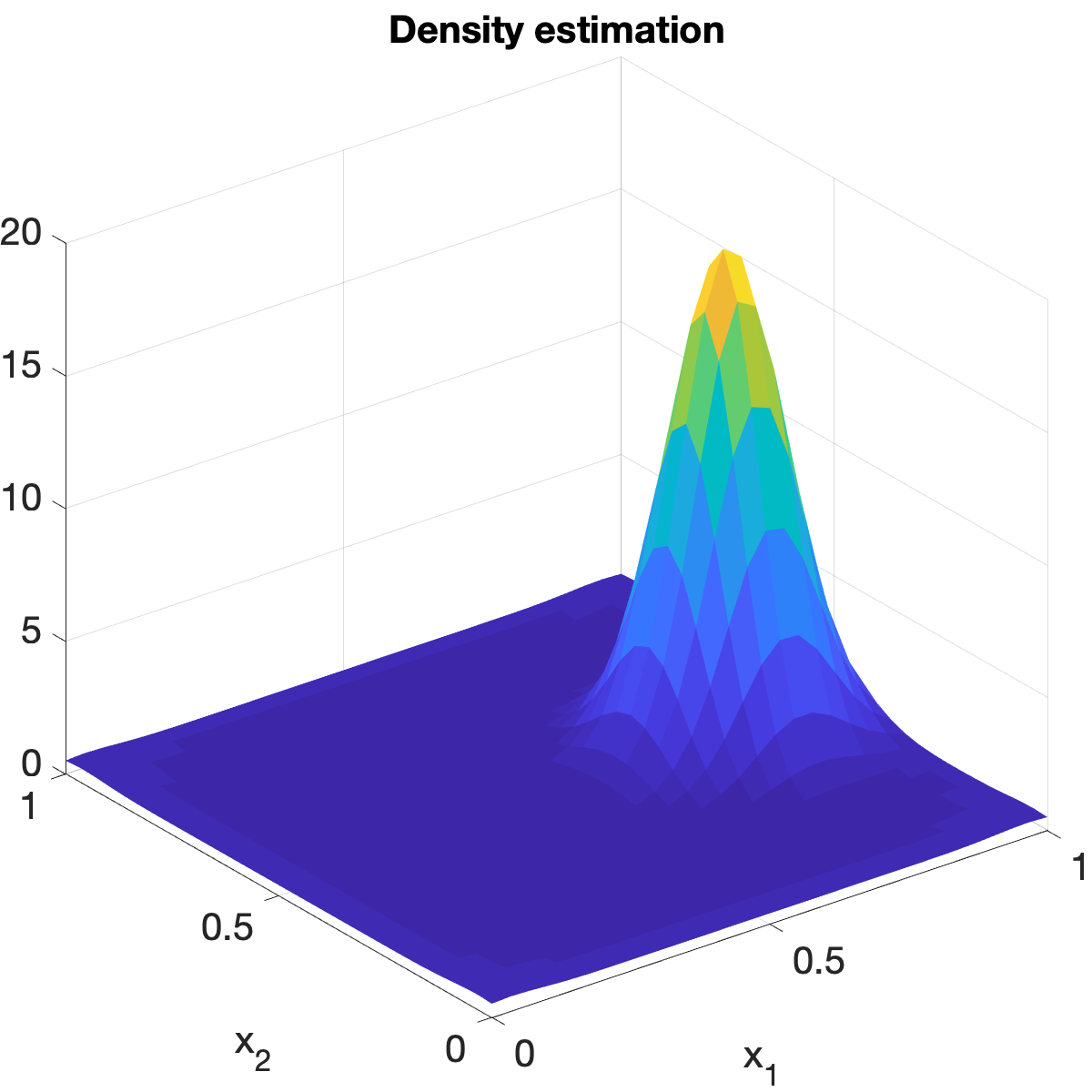}
    \end{subfigure}
    
    \begin{subfigure}[b]{0.19\textwidth}
        \centering
        \includegraphics[width=\textwidth]{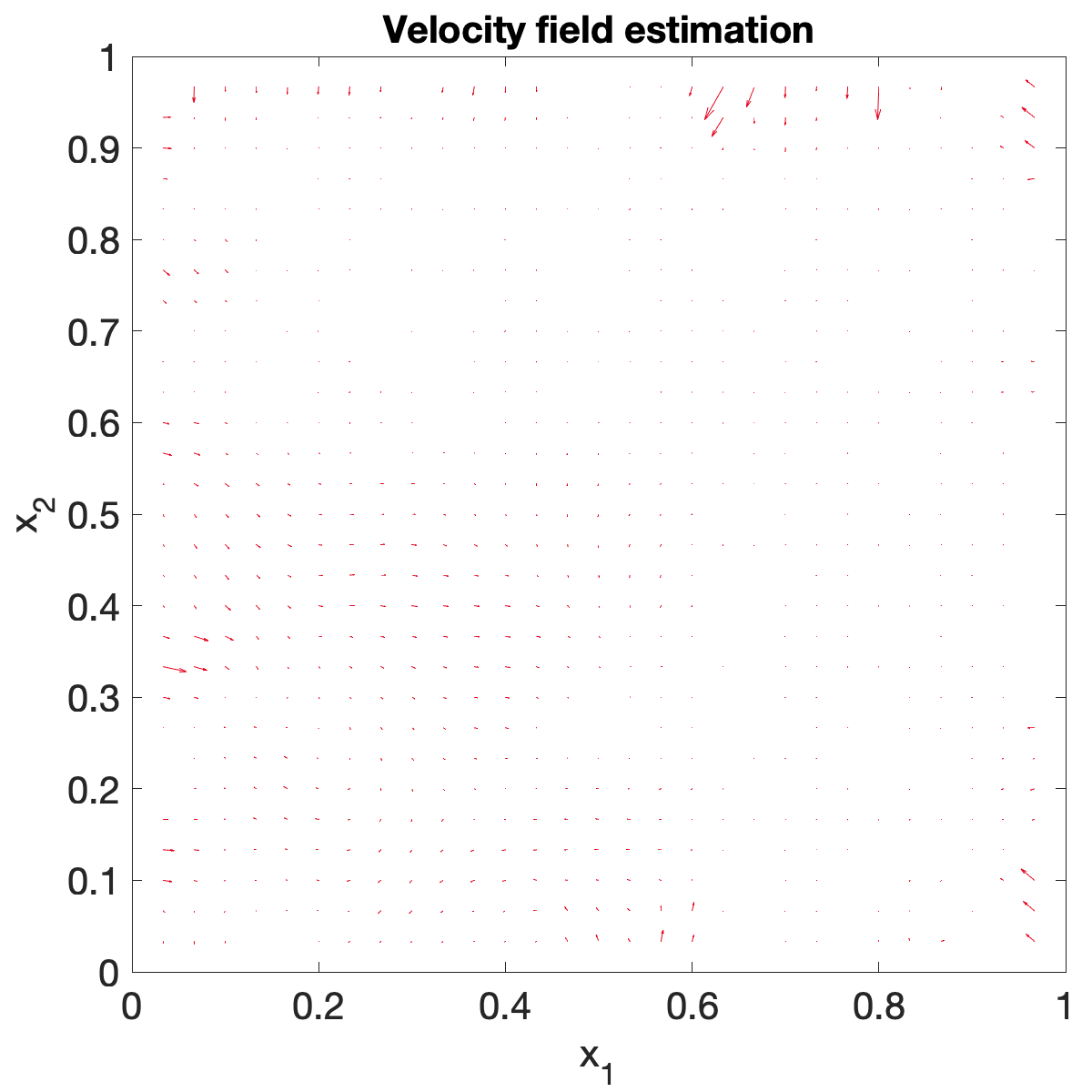}
    \end{subfigure}
    \begin{subfigure}[b]{0.19\textwidth}
        \centering
        \includegraphics[width=\textwidth]{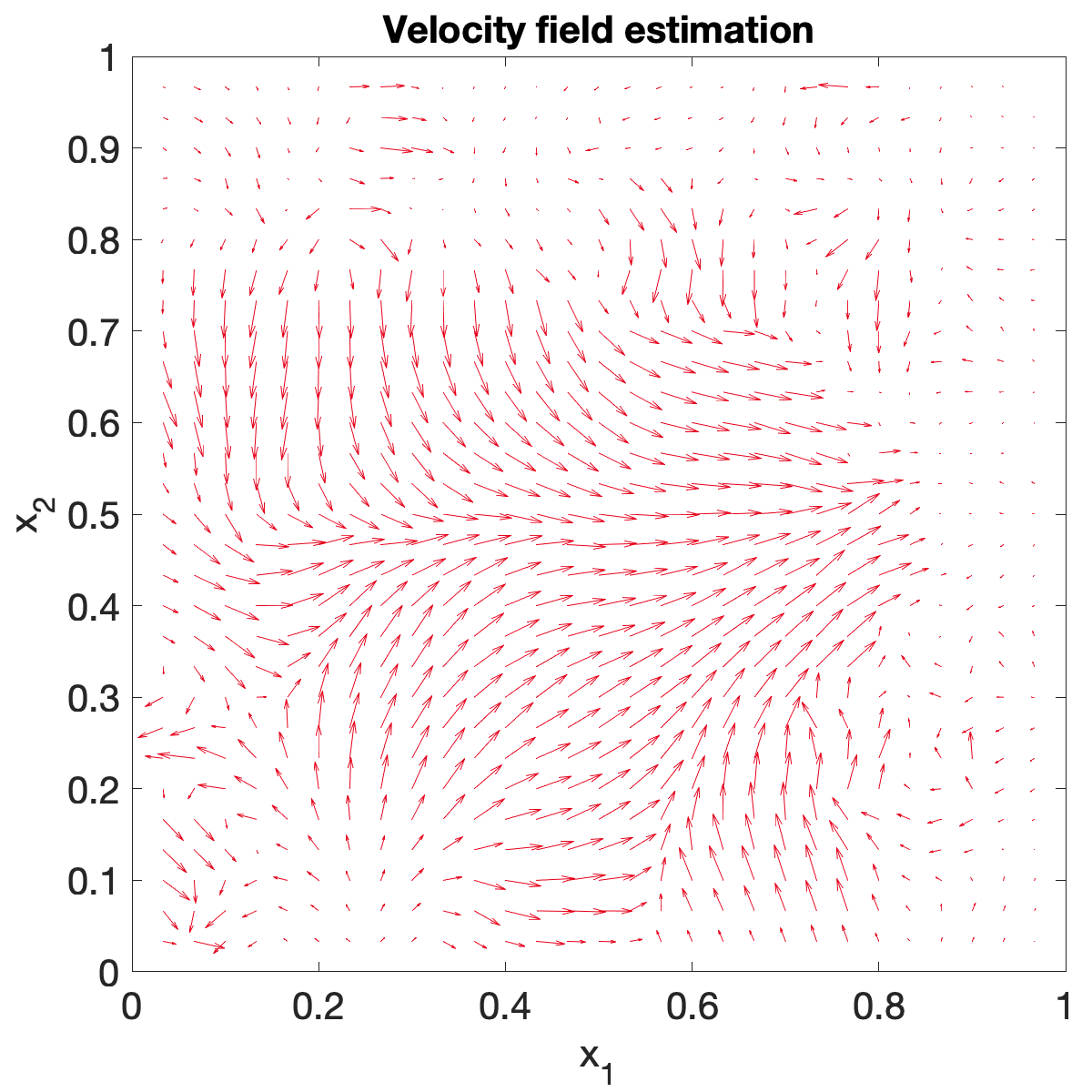}
    \end{subfigure}
    \begin{subfigure}[b]{0.19\textwidth}
        \centering
        \includegraphics[width=\textwidth]{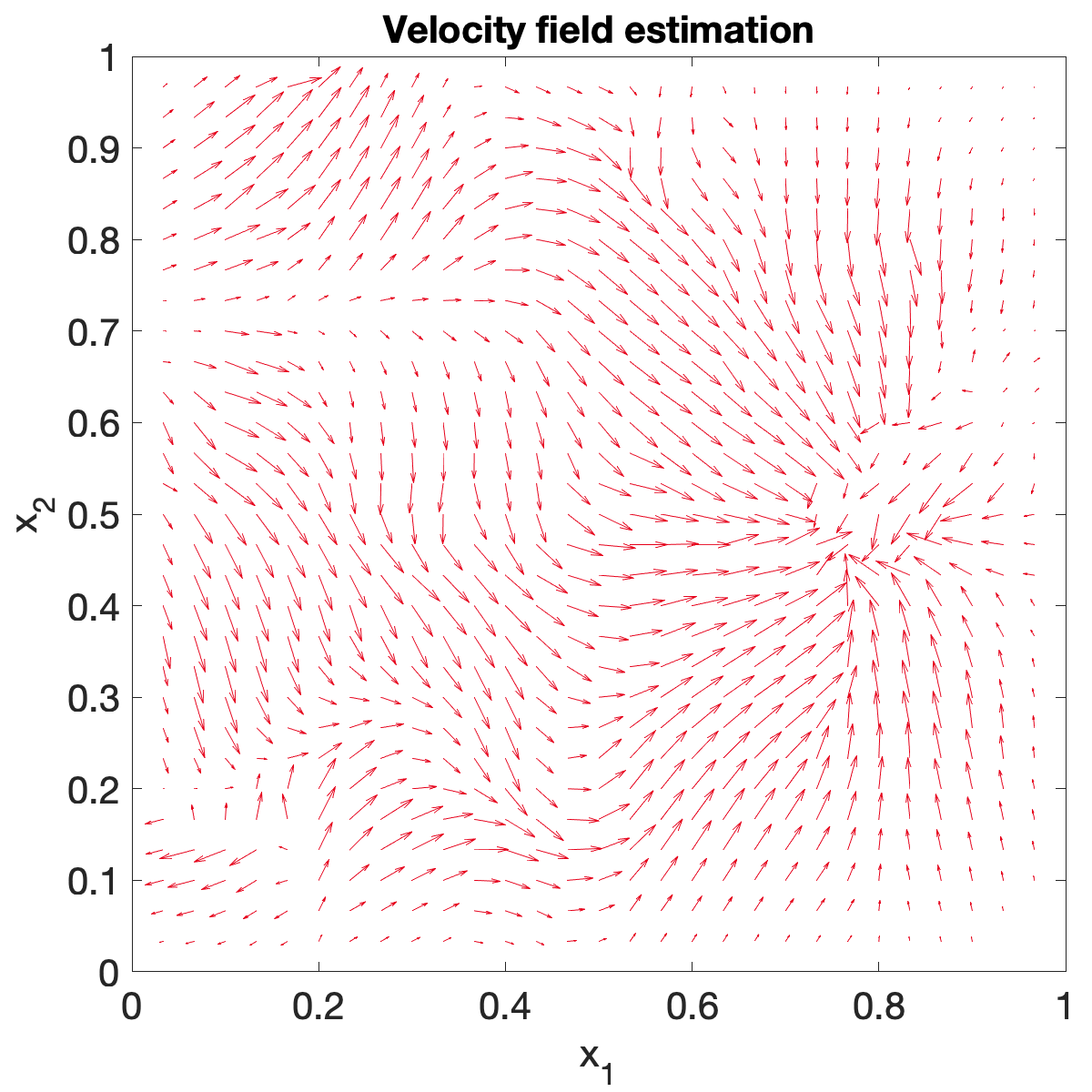}
    \end{subfigure}
    \begin{subfigure}[b]{0.19\textwidth}
        \centering
        \includegraphics[width=\textwidth]{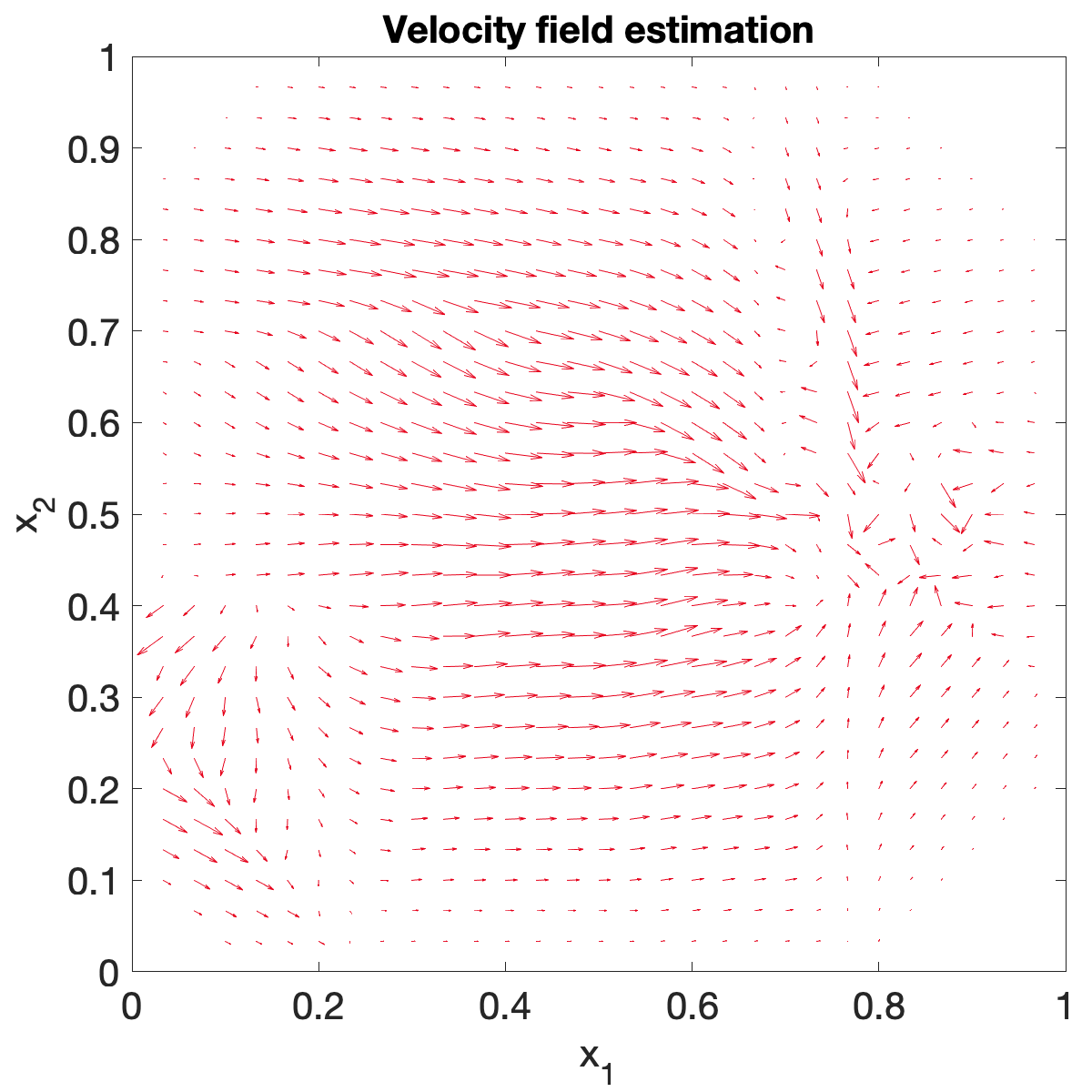}
    \end{subfigure}
    \begin{subfigure}[b]{0.19\textwidth}
        \centering
        \includegraphics[width=\textwidth]{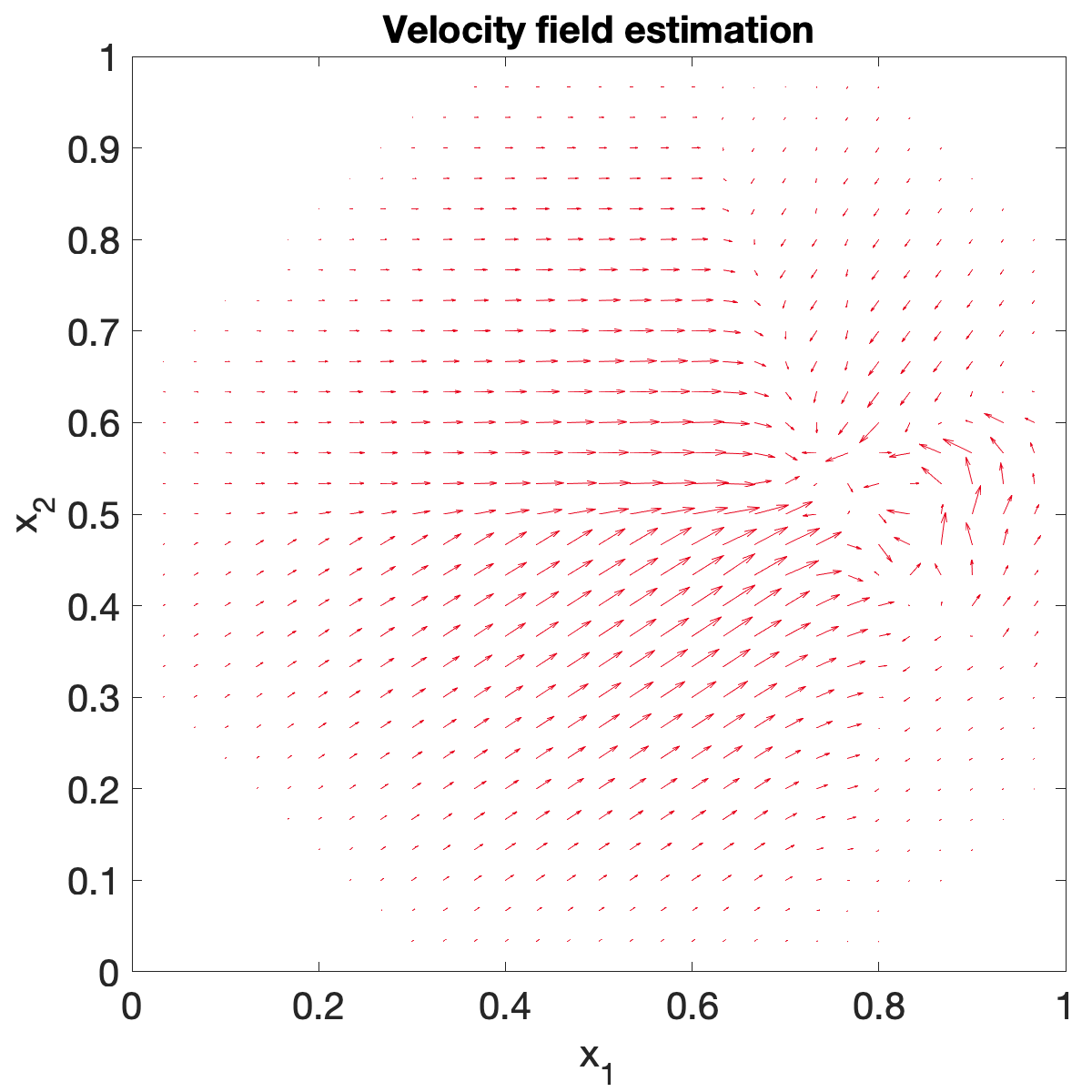}
    \end{subfigure}
    \caption{Illustration of the evacuation process when the environment was free of obstacles.
    Each column of subfigures represented the same instant.
    Each row of subfigures represented the time evaluation of certain variables. \\
    Row 1: Human positions $\{x_j(t)\}_{j=1}^N$ (red dots), human velocities $\{v_j(t)\}_{j=1}^N$ (red arrows attached to the red dots), robot positions $\{r_i(t)\}_{i=1}^n$ (black dots), and the navigation force fields $\{F_i(x,t)\}_{i=1}^n$ with round supports generated by the robots (black arrows surrounding the black dots).\\
    Row 2: Estimation of the real-time crowd density $\rho(x,t)$ using kernel density estimation.\\
    Row 3: Estimation of the real-time crowd velocity field $u(x,t)$ using linear interpolation.
    }
    \label{fig:evacuation}
\end{figure*}

\begin{figure*}[h]
    \centering
    \begin{subfigure}[b]{0.19\textwidth}
        \centering
        \includegraphics[width=\textwidth]{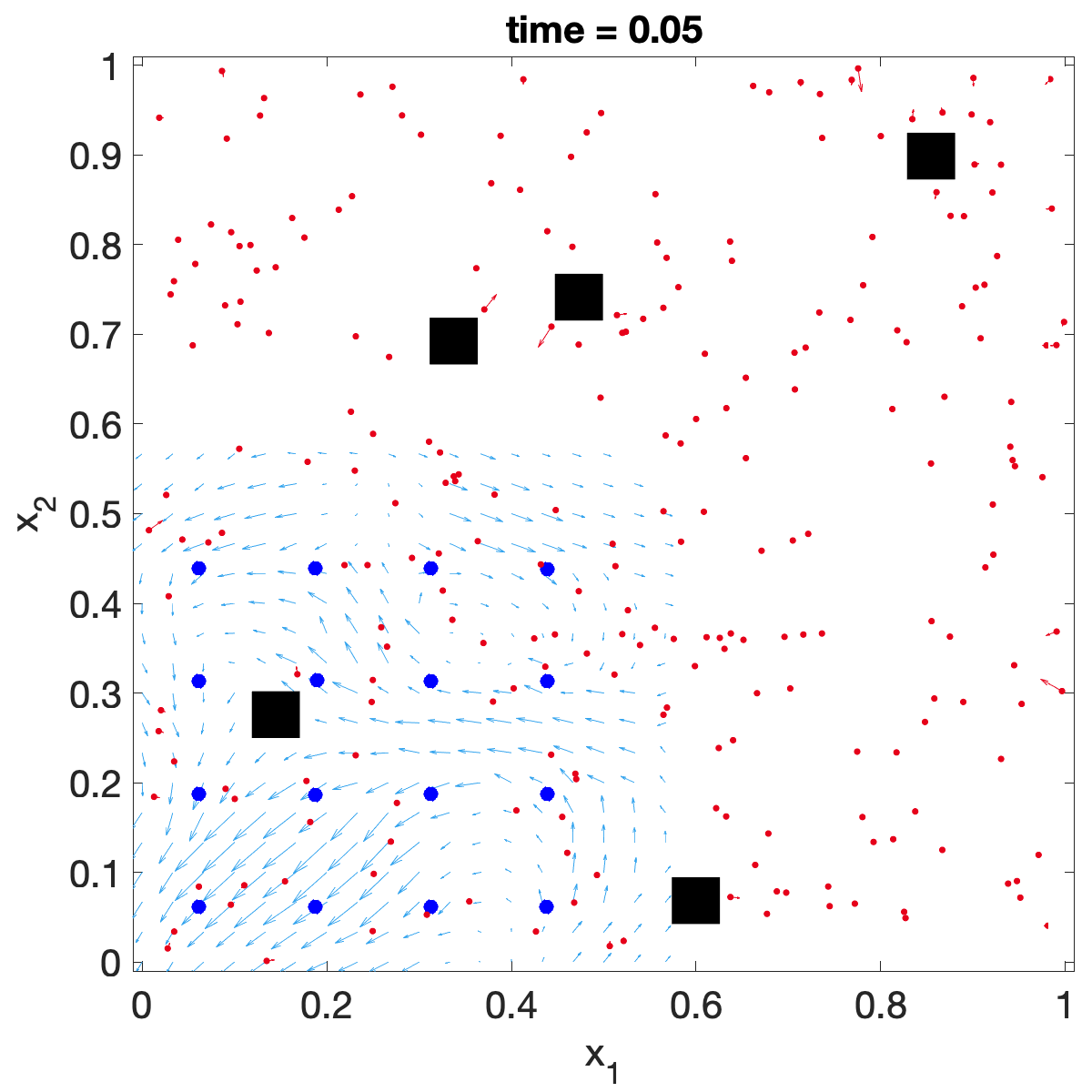}
    \end{subfigure}
    \begin{subfigure}[b]{0.19\textwidth}
        \centering
        \includegraphics[width=\textwidth]{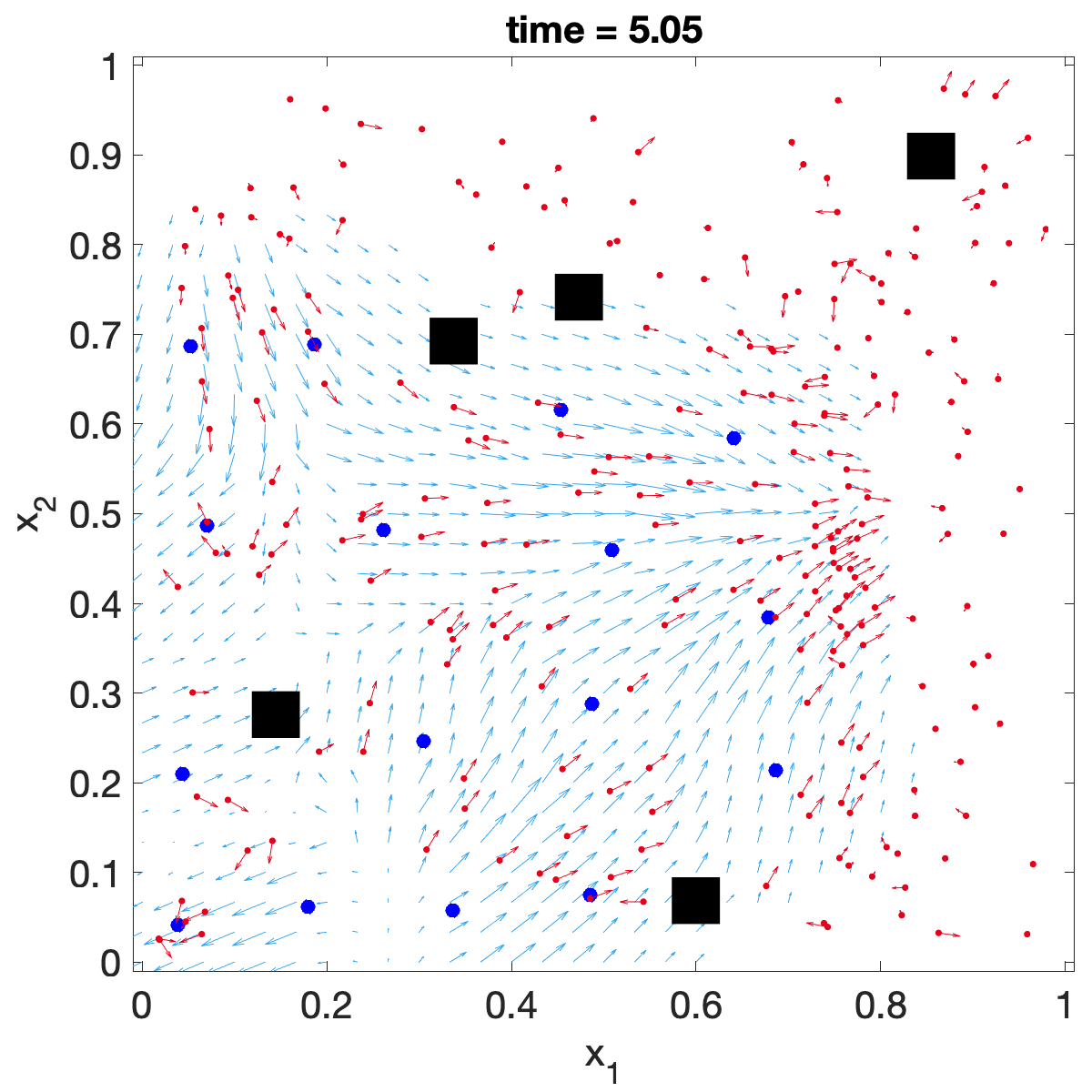}
    \end{subfigure}
    \begin{subfigure}[b]{0.19\textwidth}
        \centering
        \includegraphics[width=\textwidth]{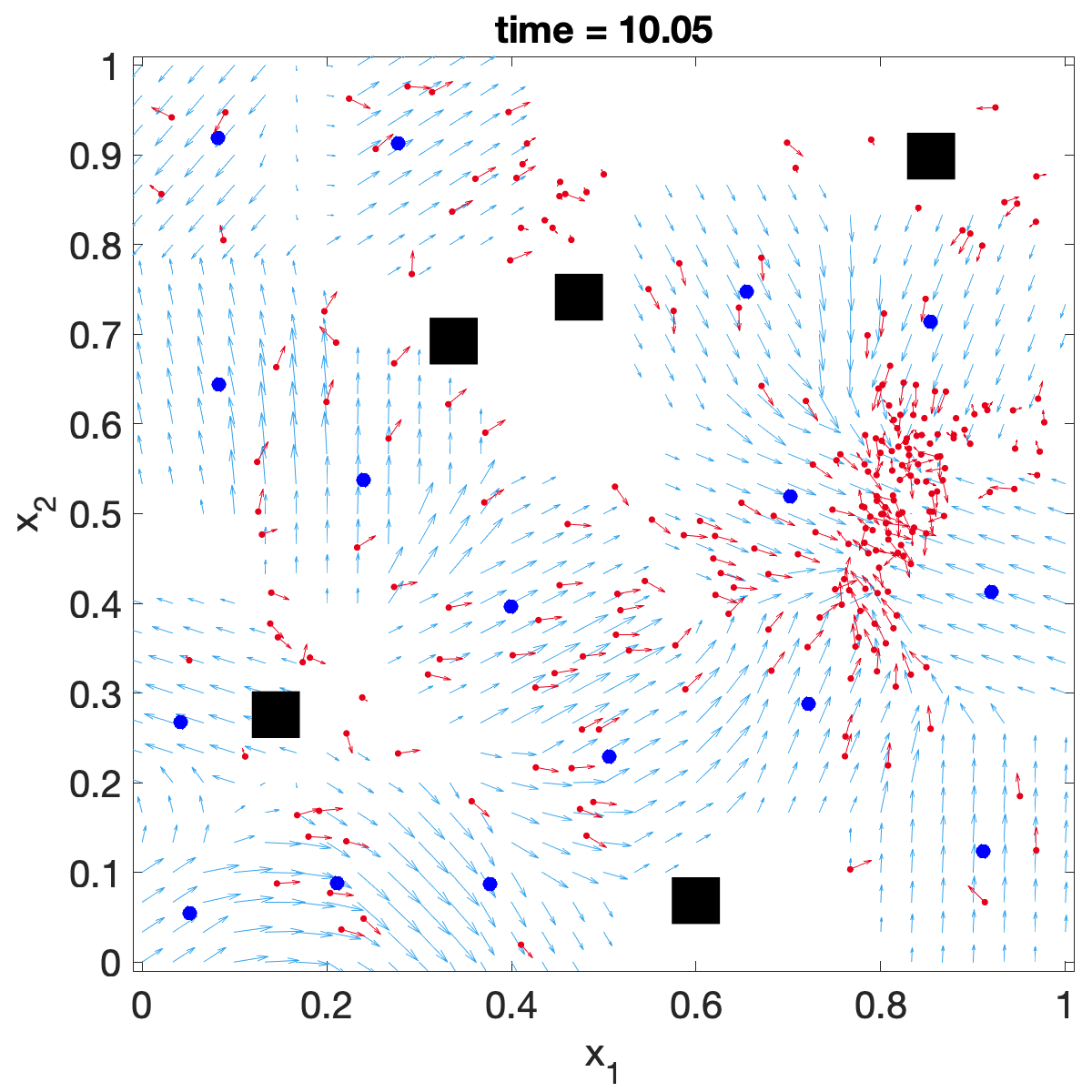}
    \end{subfigure}
    \begin{subfigure}[b]{0.19\textwidth}
        \centering
        \includegraphics[width=\textwidth]{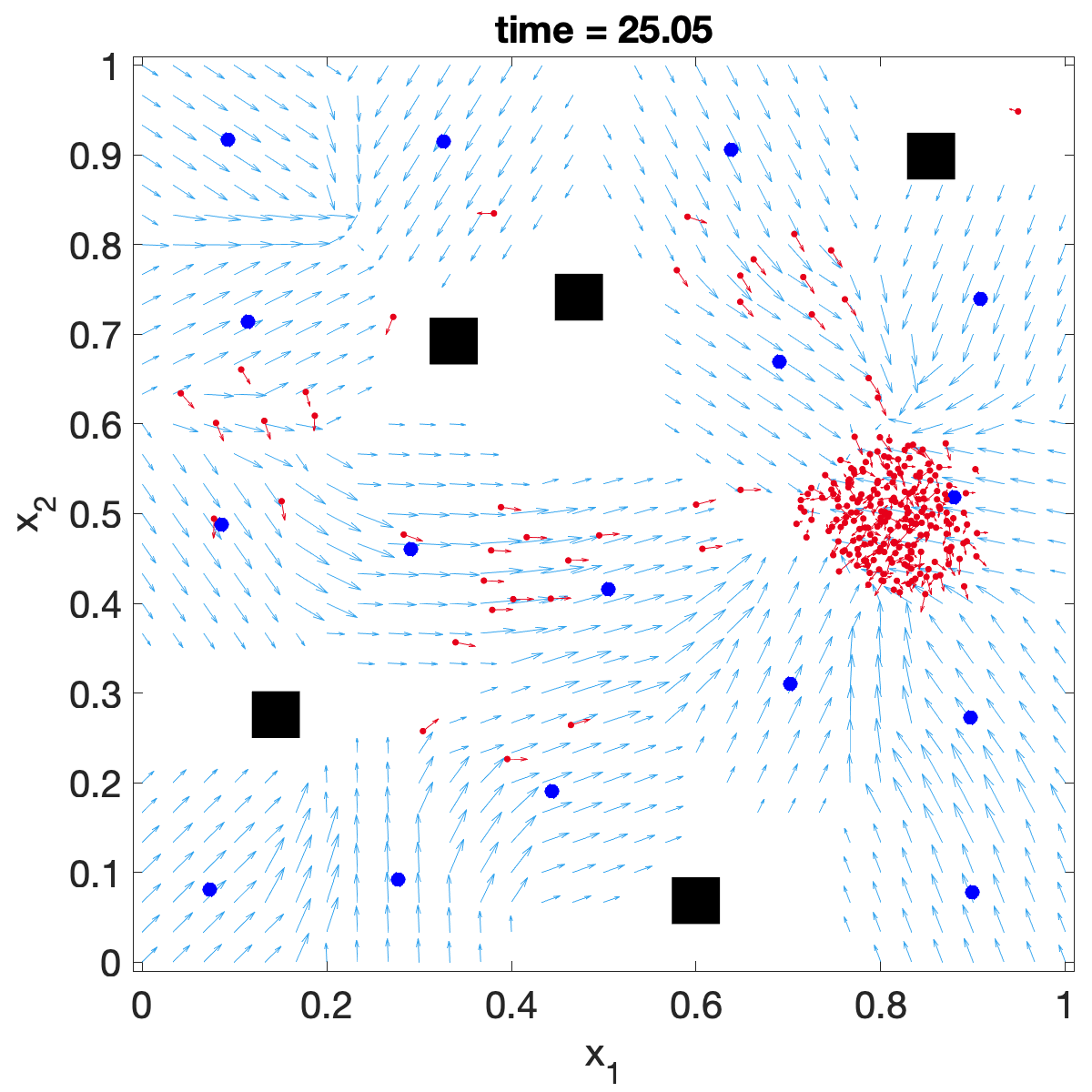}
    \end{subfigure}
    \begin{subfigure}[b]{0.19\textwidth}
        \centering
        \includegraphics[width=\textwidth]{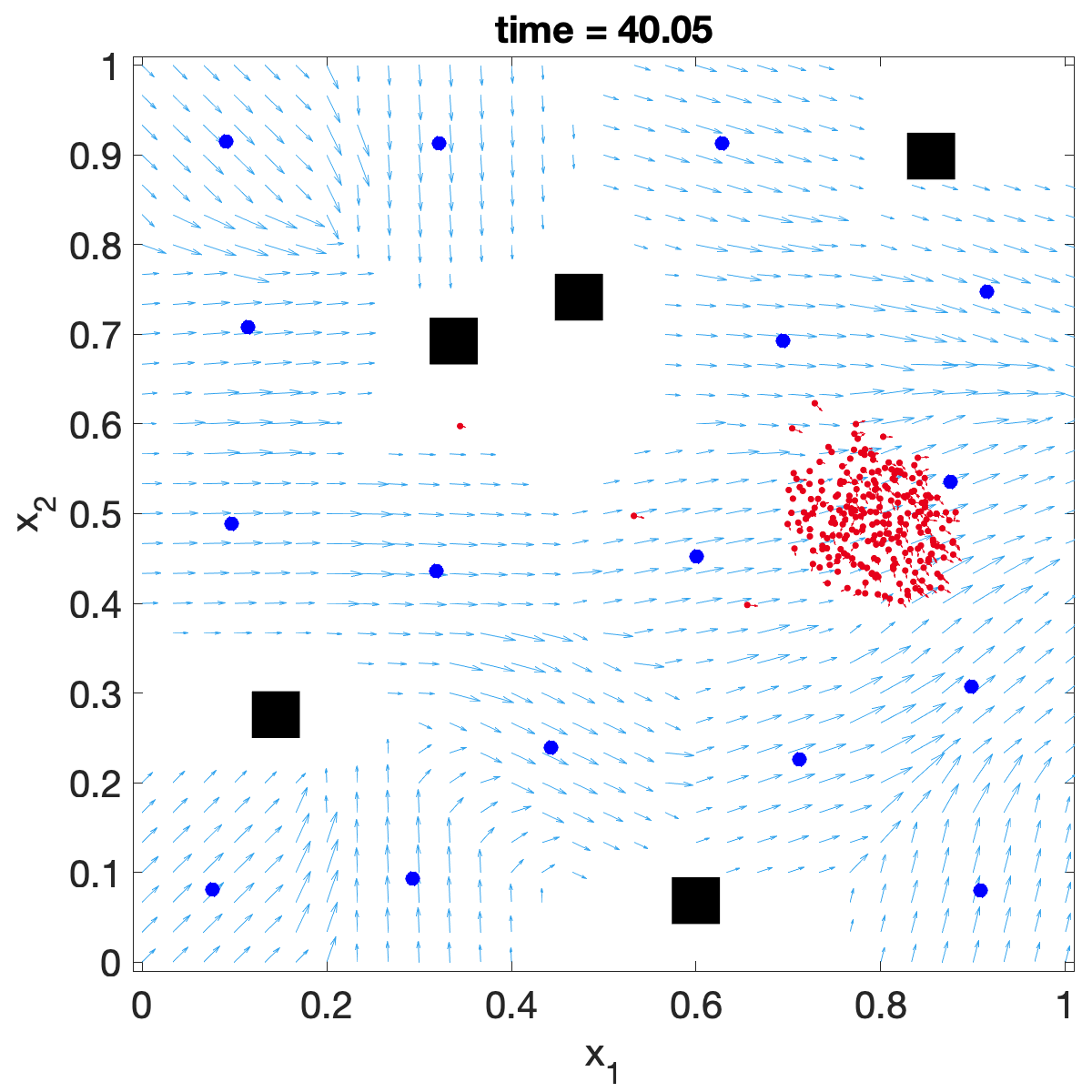}
    \end{subfigure}

    \begin{subfigure}[b]{0.19\textwidth}
        \centering
        \includegraphics[width=\textwidth]{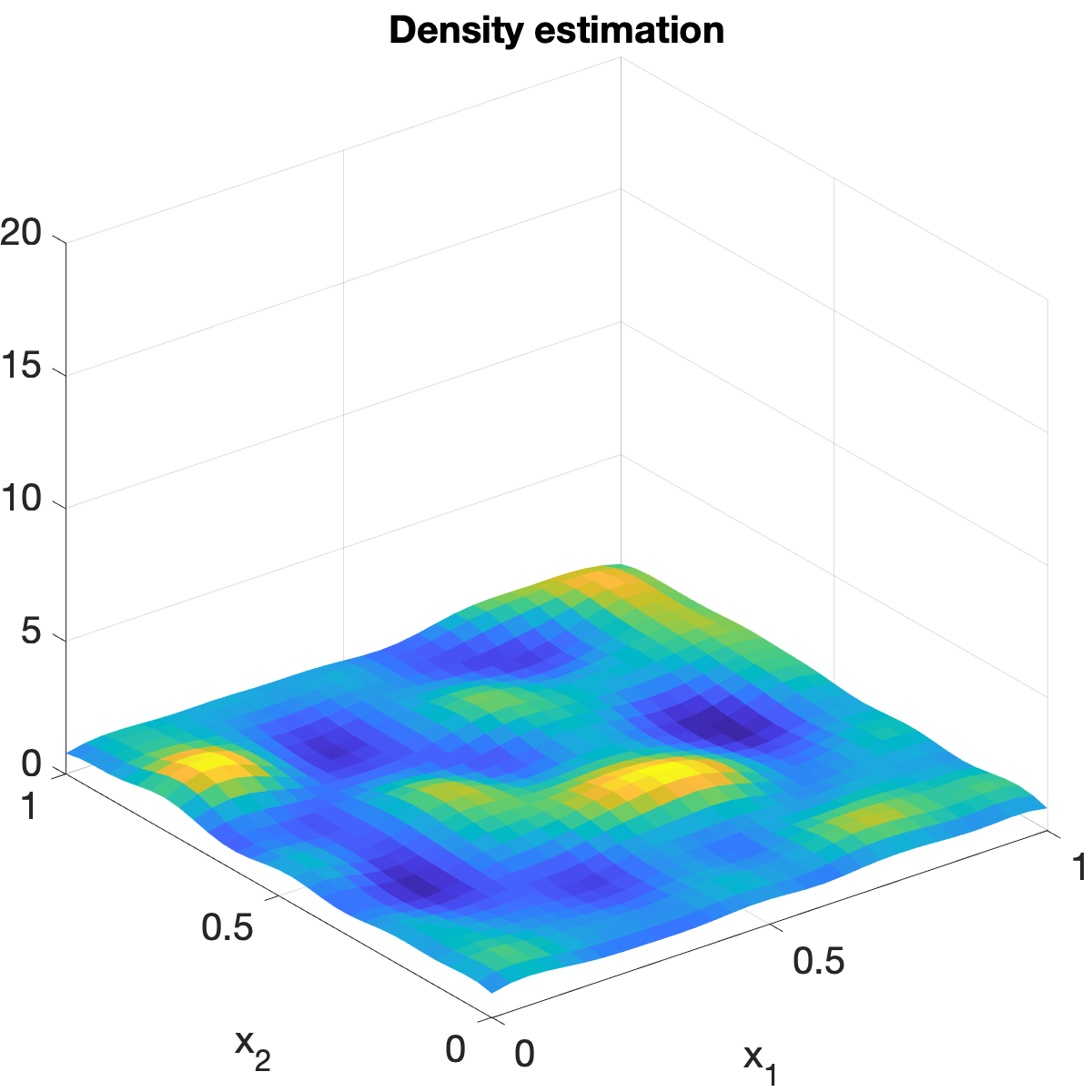}
    \end{subfigure}
    \begin{subfigure}[b]{0.19\textwidth}
        \centering
        \includegraphics[width=\textwidth]{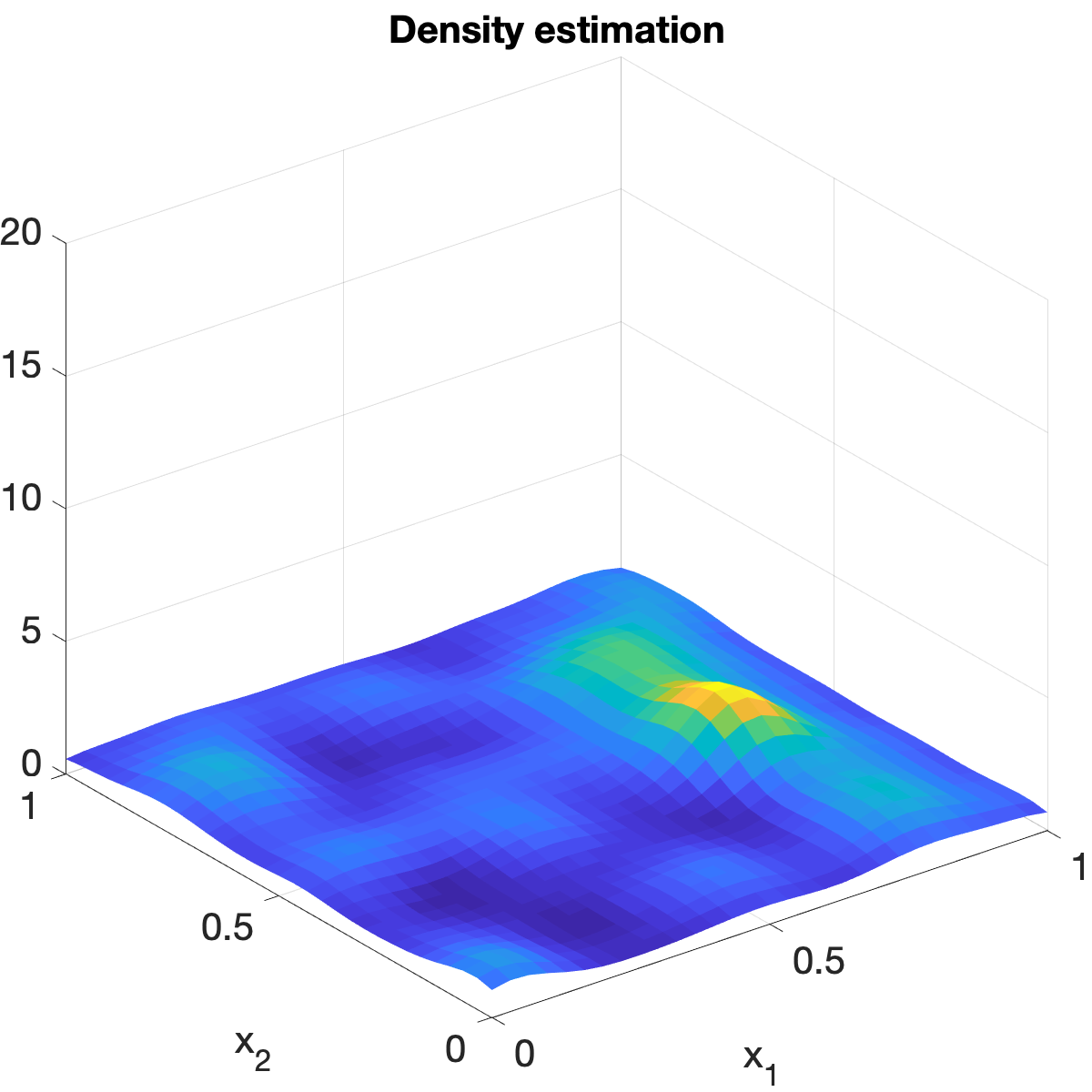}
    \end{subfigure}
    \begin{subfigure}[b]{0.19\textwidth}
        \centering
        \includegraphics[width=\textwidth]{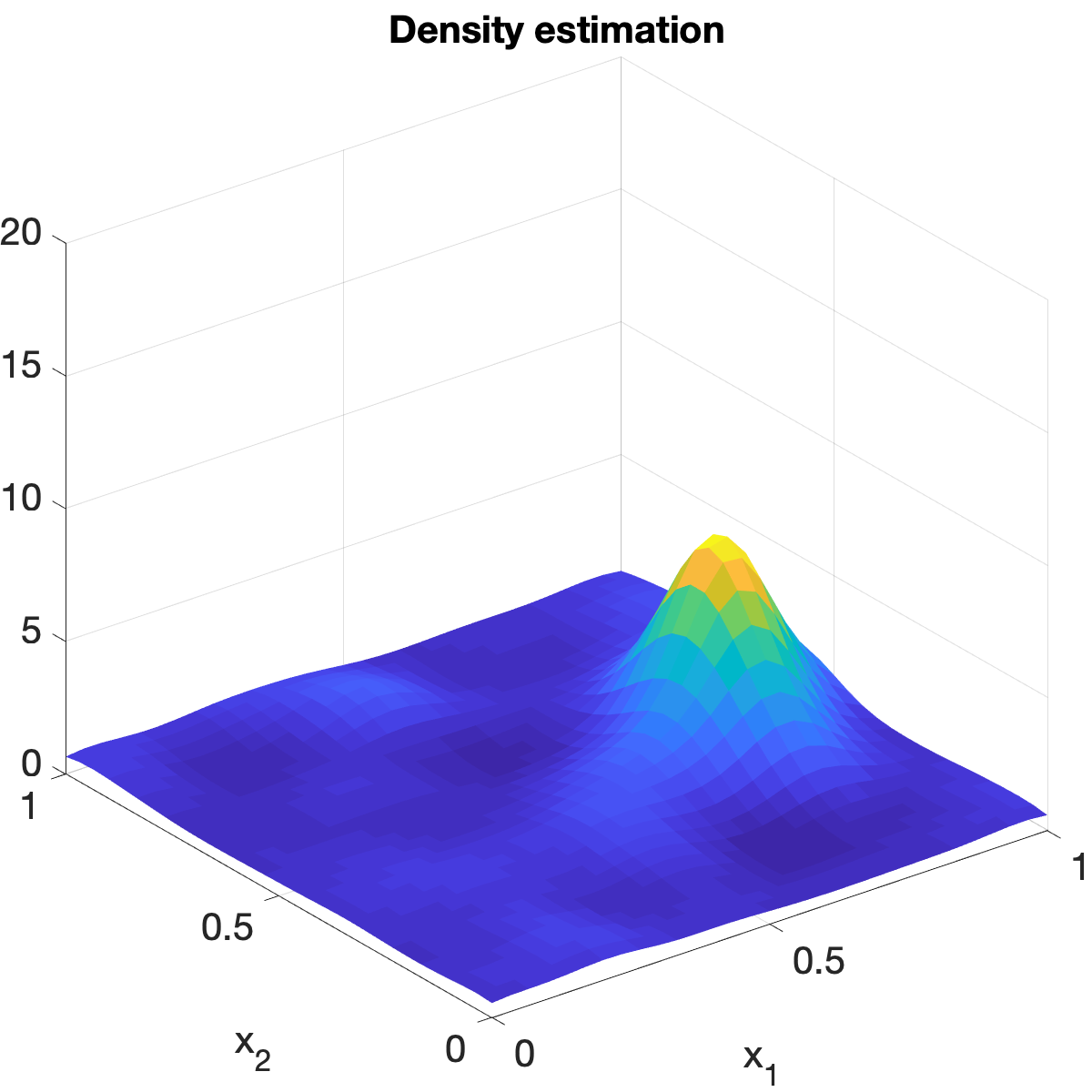}
    \end{subfigure}
    \begin{subfigure}[b]{0.19\textwidth}
        \centering
        \includegraphics[width=\textwidth]{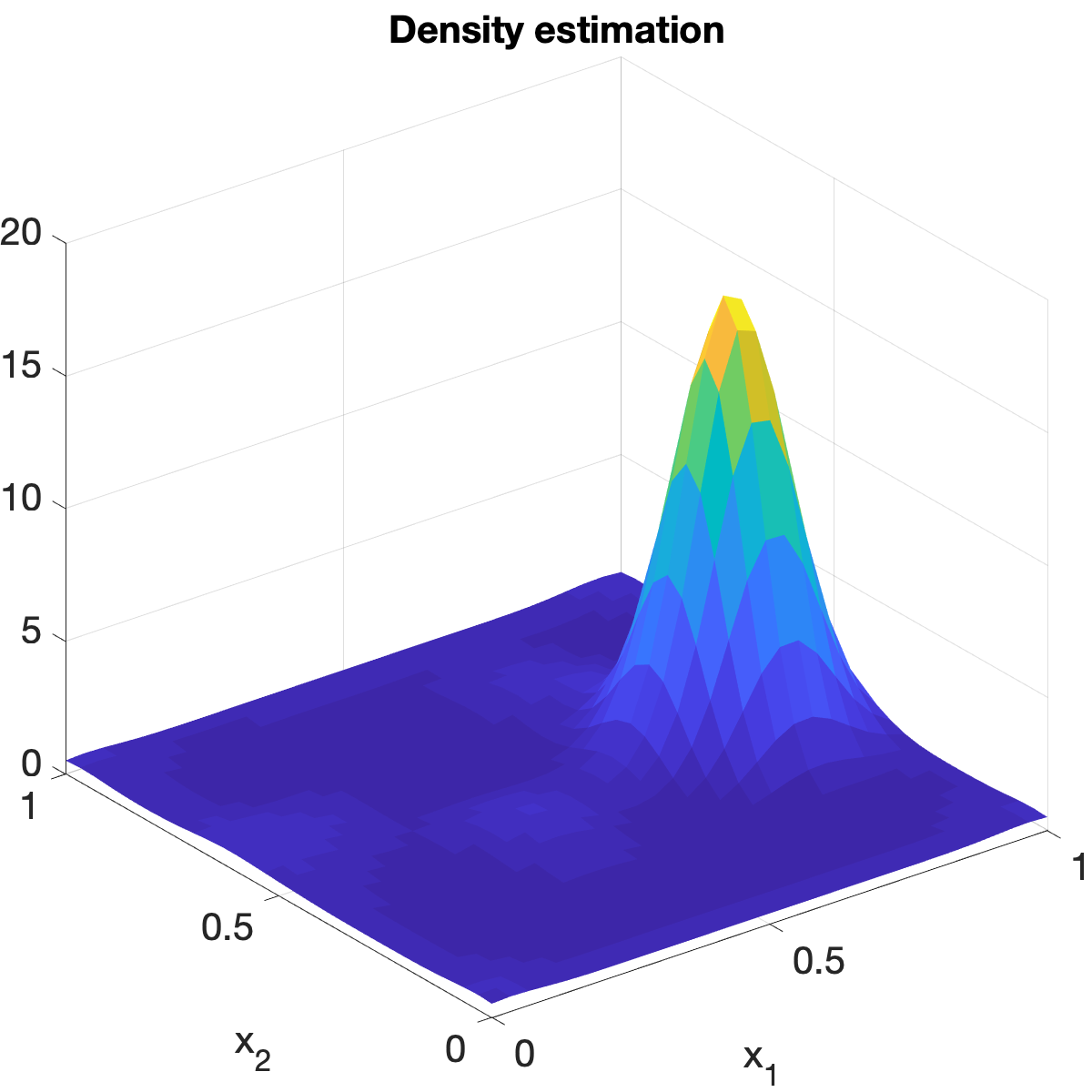}
    \end{subfigure}
    \begin{subfigure}[b]{0.19\textwidth}
        \centering
        \includegraphics[width=\textwidth]{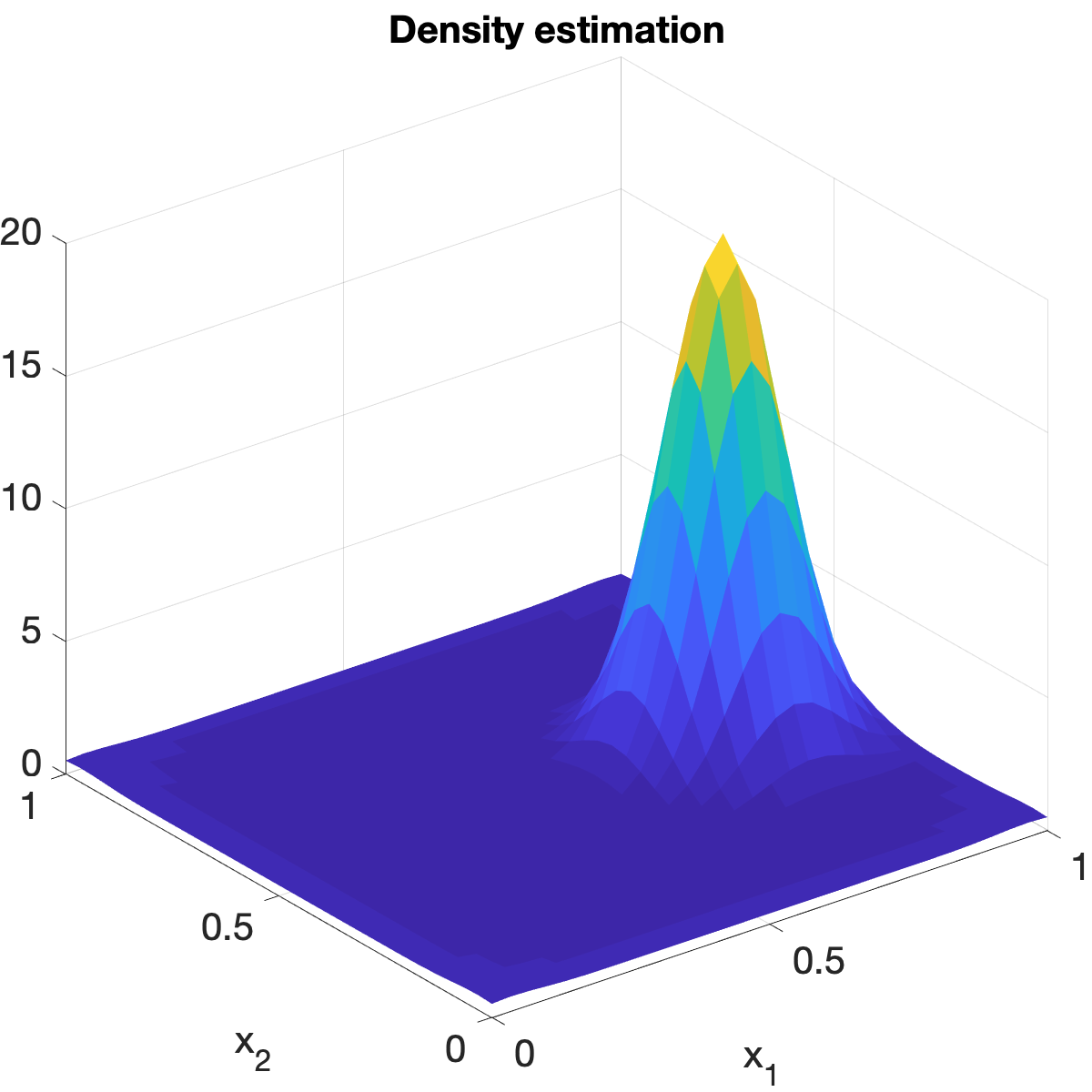}
    \end{subfigure}
    
    \begin{subfigure}[b]{0.19\textwidth}
        \centering
        \includegraphics[width=\textwidth]{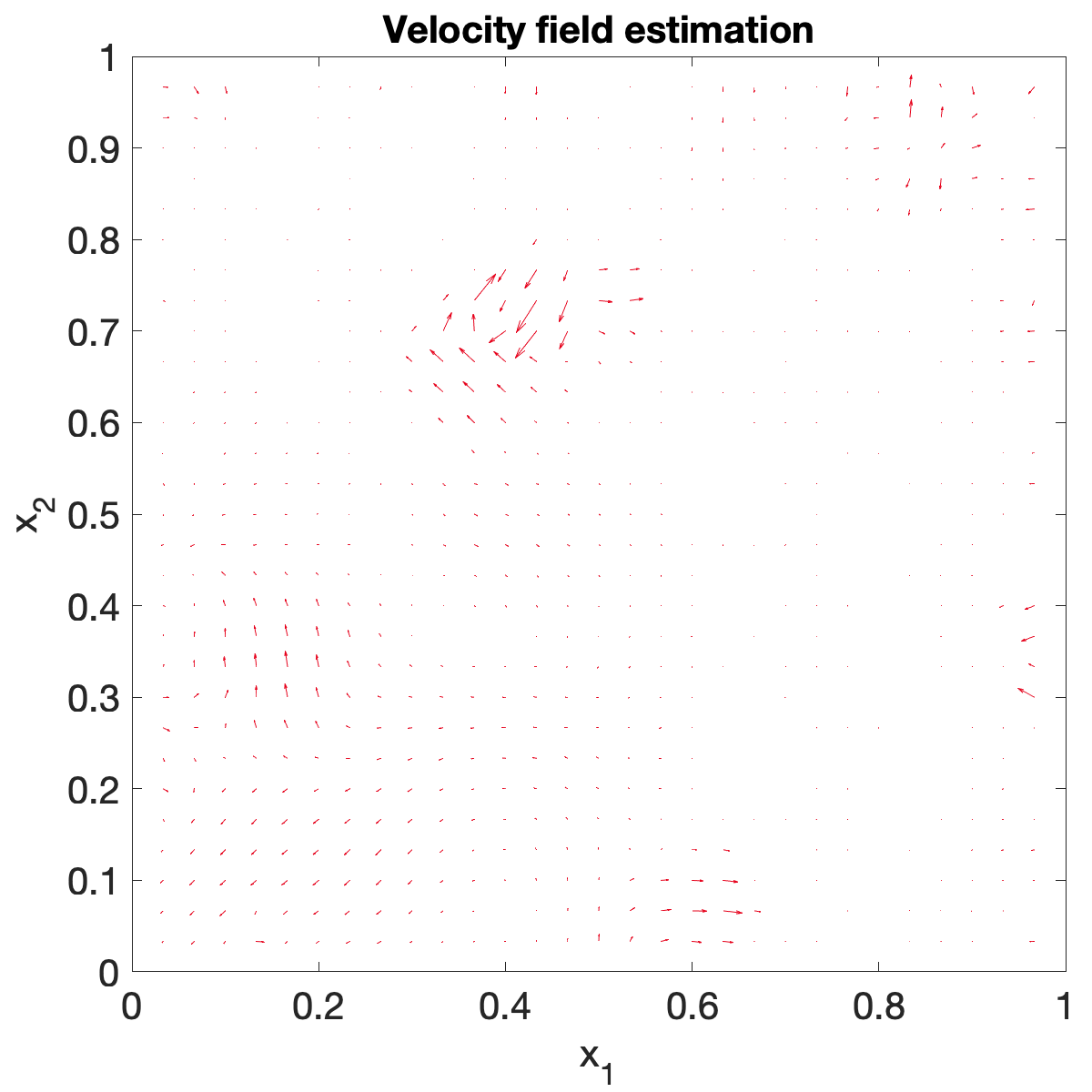}
    \end{subfigure}
    \begin{subfigure}[b]{0.19\textwidth}
        \centering
        \includegraphics[width=\textwidth]{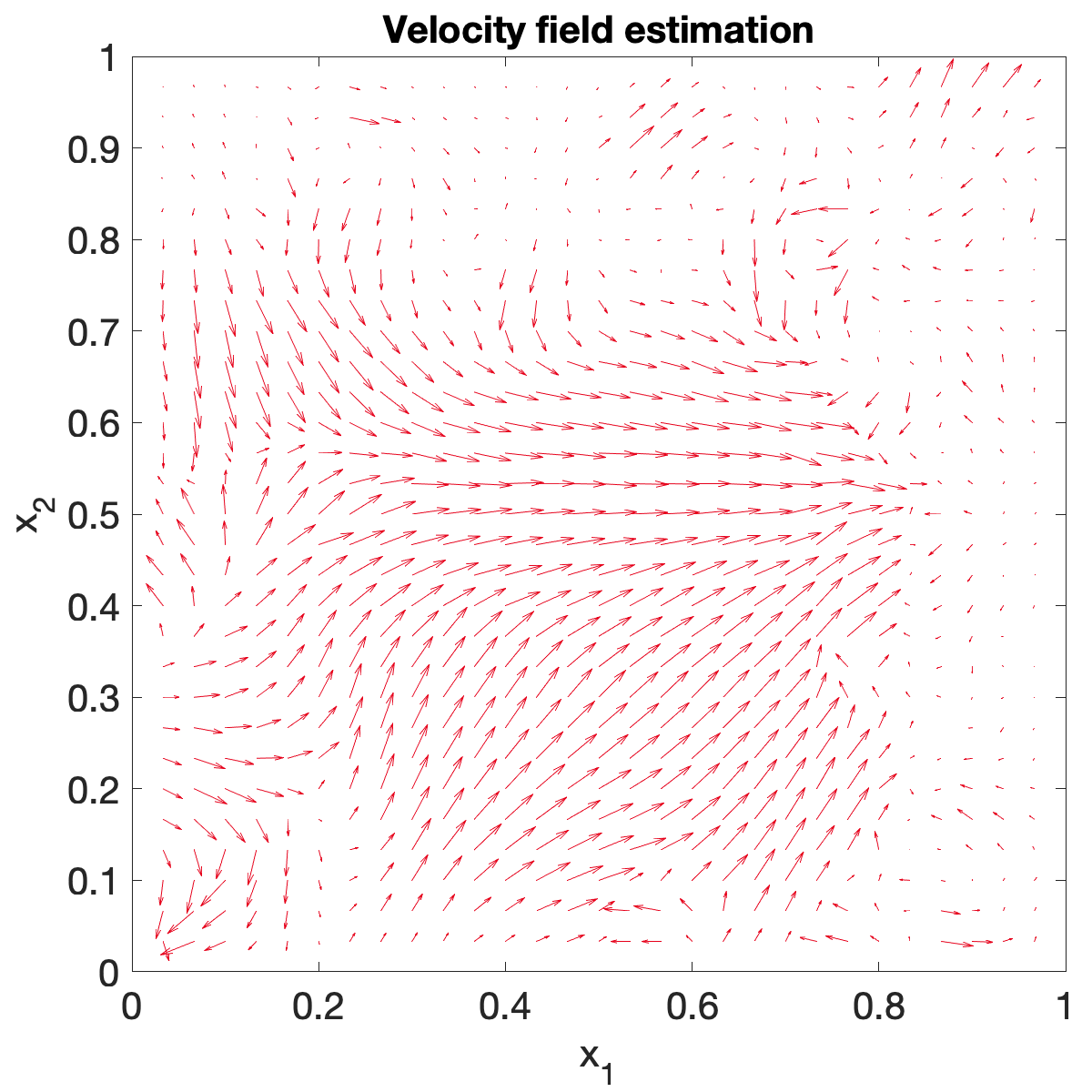}
    \end{subfigure}
    \begin{subfigure}[b]{0.19\textwidth}
        \centering
        \includegraphics[width=\textwidth]{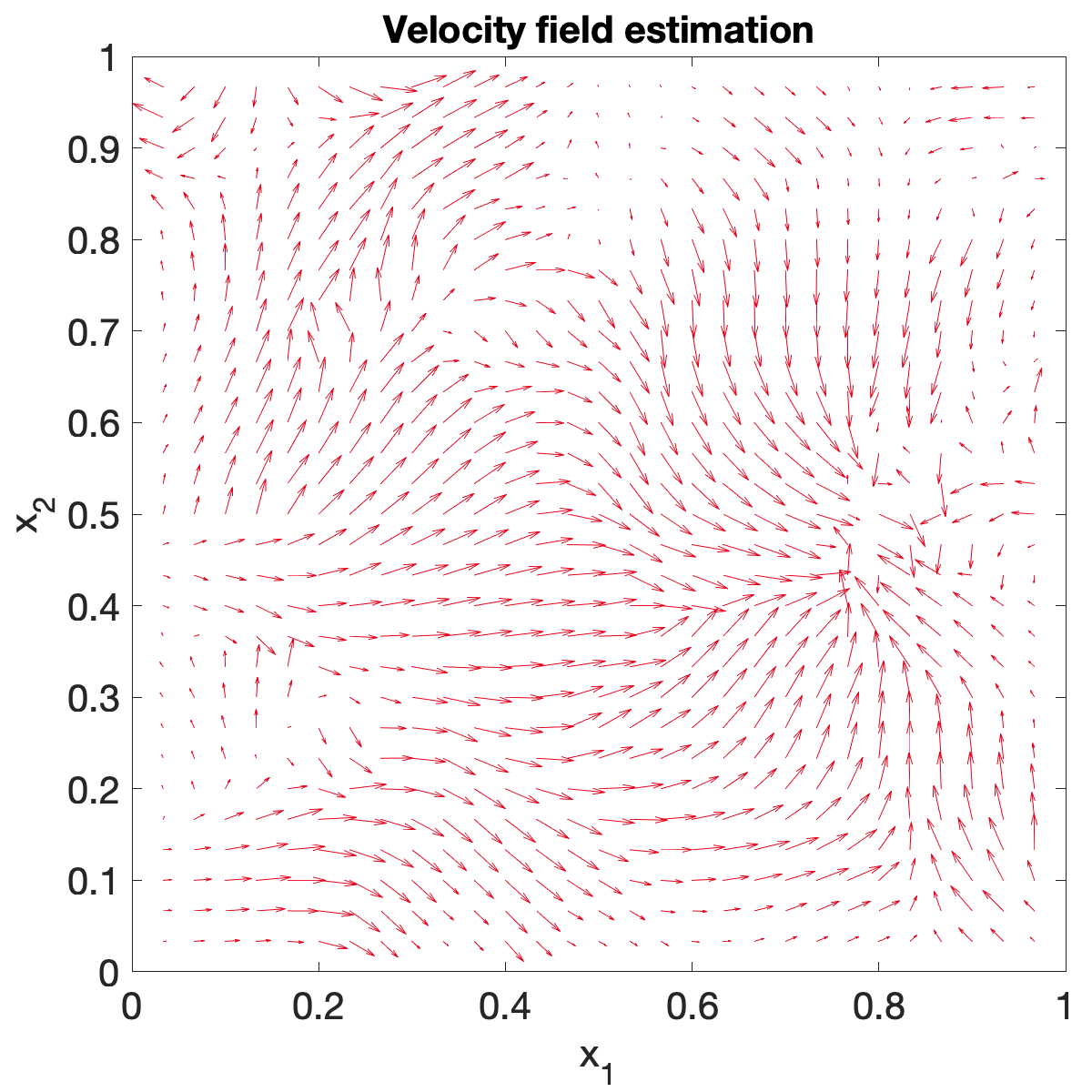}
    \end{subfigure}
    \begin{subfigure}[b]{0.19\textwidth}
        \centering
        \includegraphics[width=\textwidth]{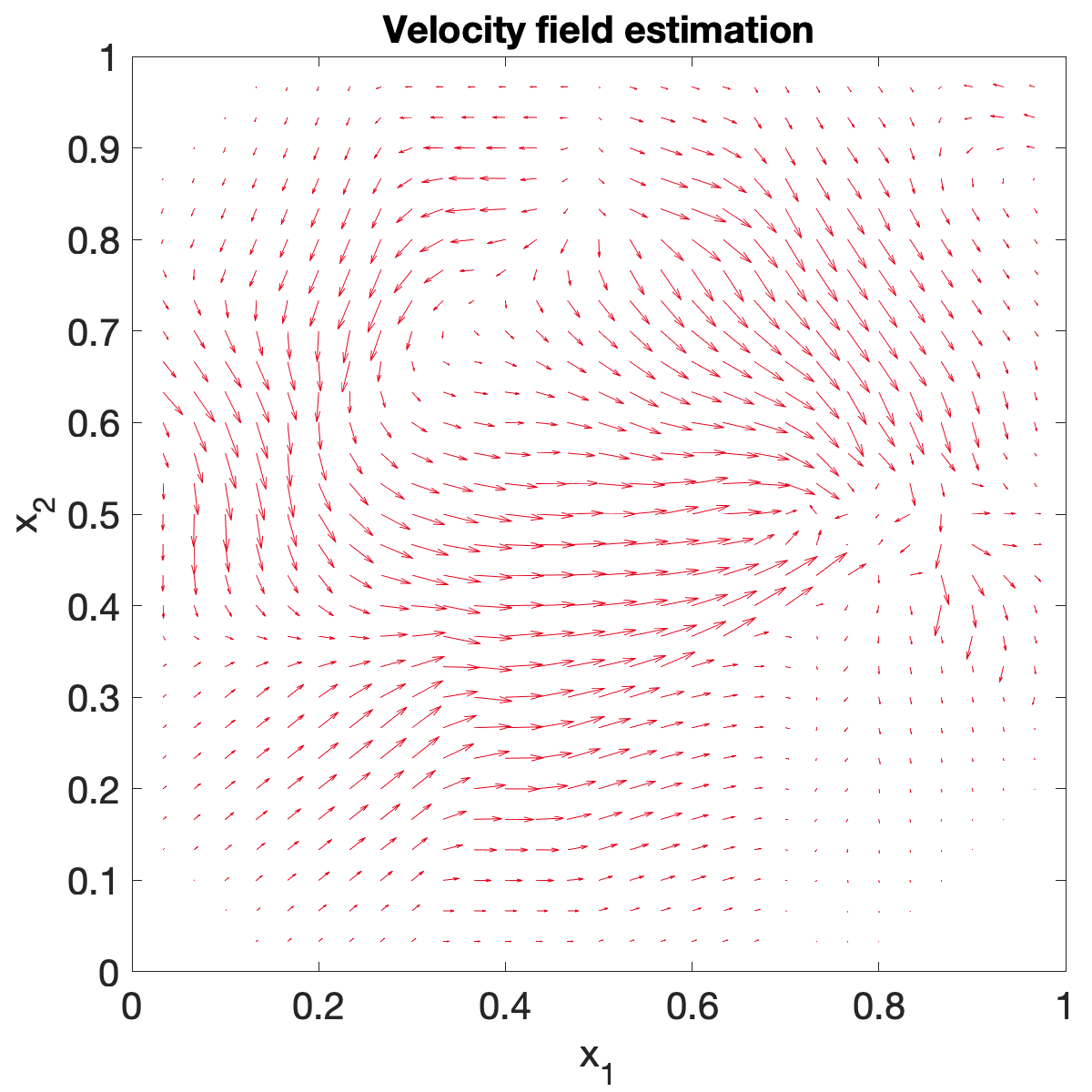}
    \end{subfigure}
    \begin{subfigure}[b]{0.19\textwidth}
        \centering
        \includegraphics[width=\textwidth]{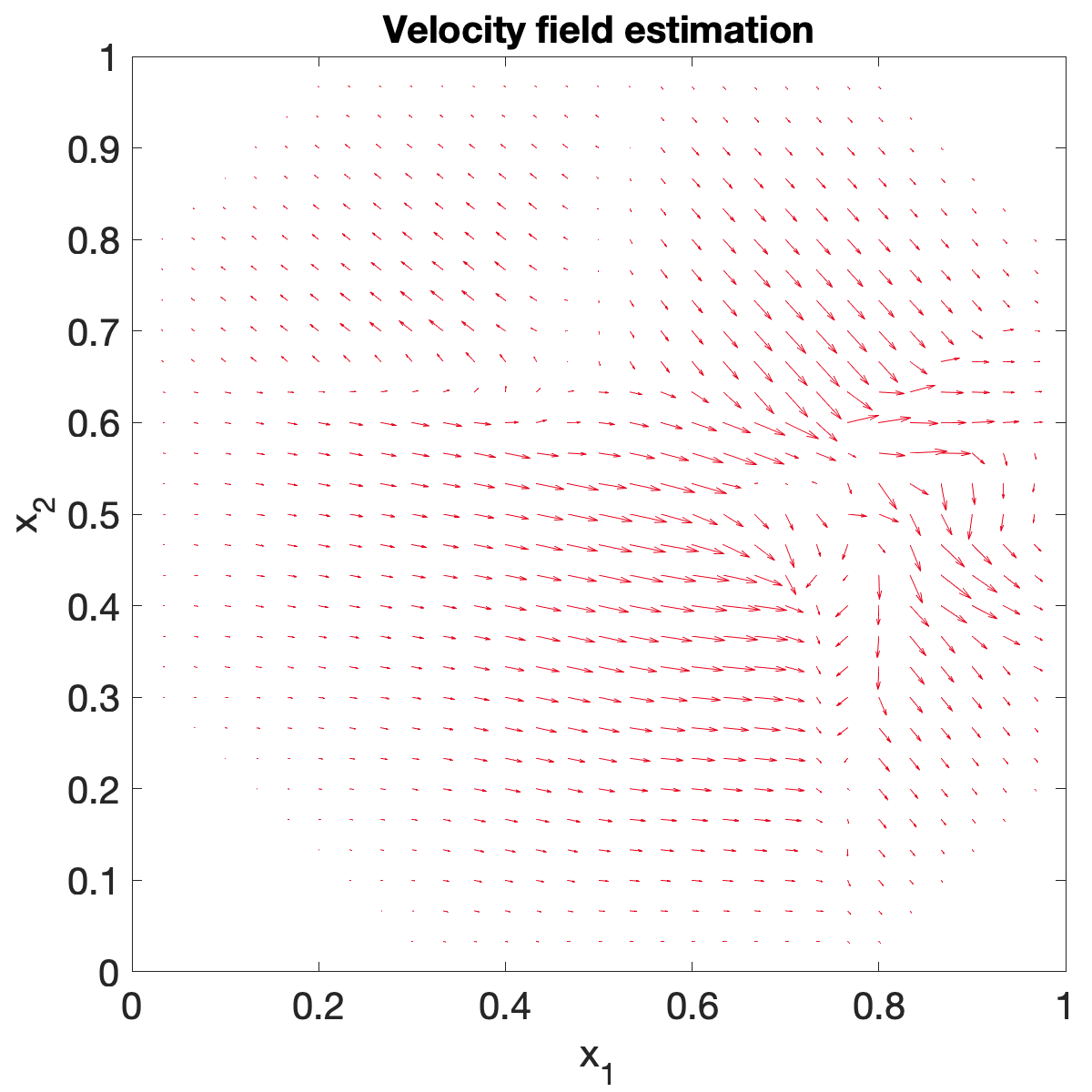}
    \end{subfigure}
    \caption{Illustration of the evacuation process when there were unknown static obstacles (black blocks) in the environment.
    }
    \label{fig:evacuation with static obstacles}
\end{figure*}

\begin{figure*}[h]
    \centering
    \begin{subfigure}[b]{0.19\textwidth}
        \centering
        \includegraphics[width=\textwidth]{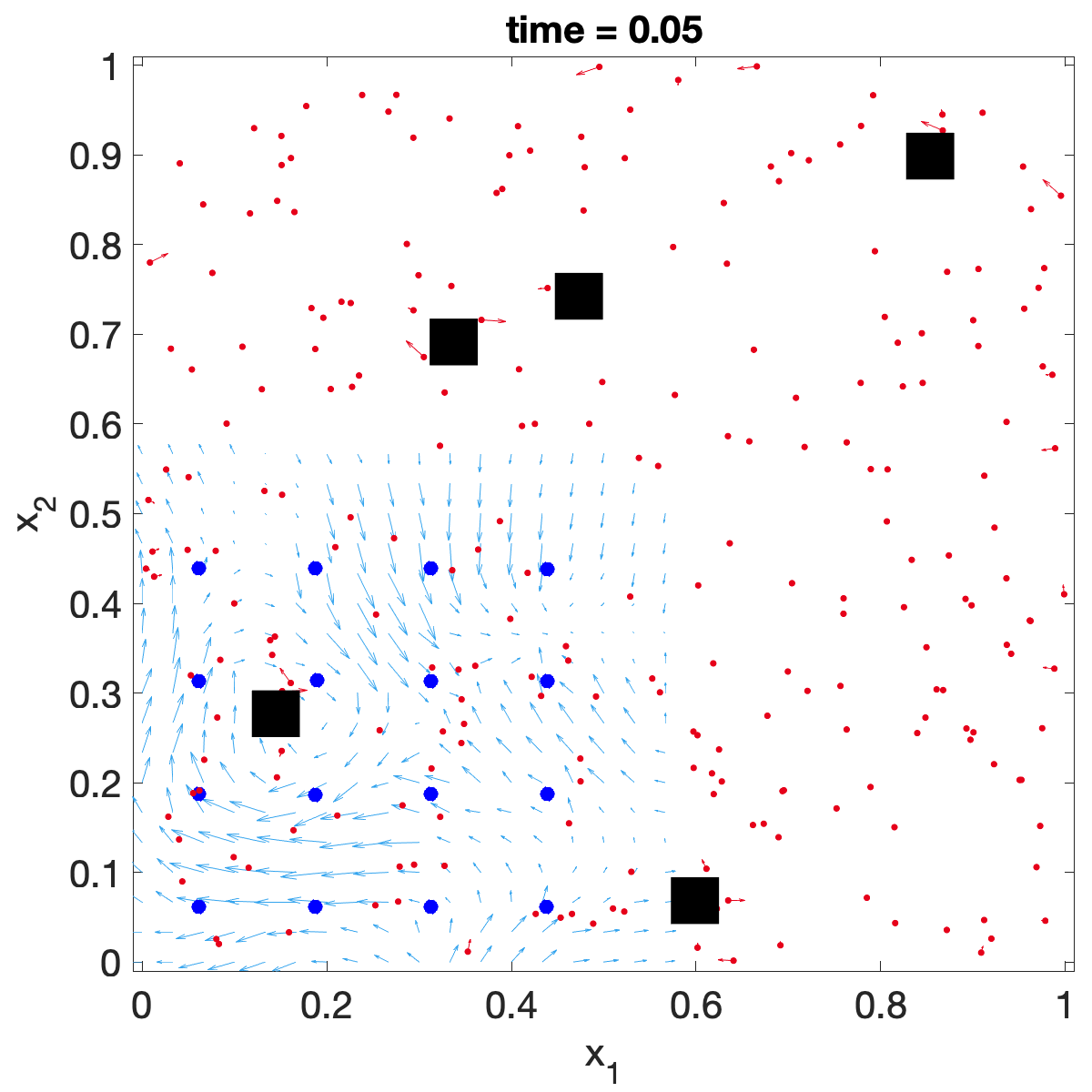}
    \end{subfigure}
    \begin{subfigure}[b]{0.19\textwidth}
        \centering
        \includegraphics[width=\textwidth]{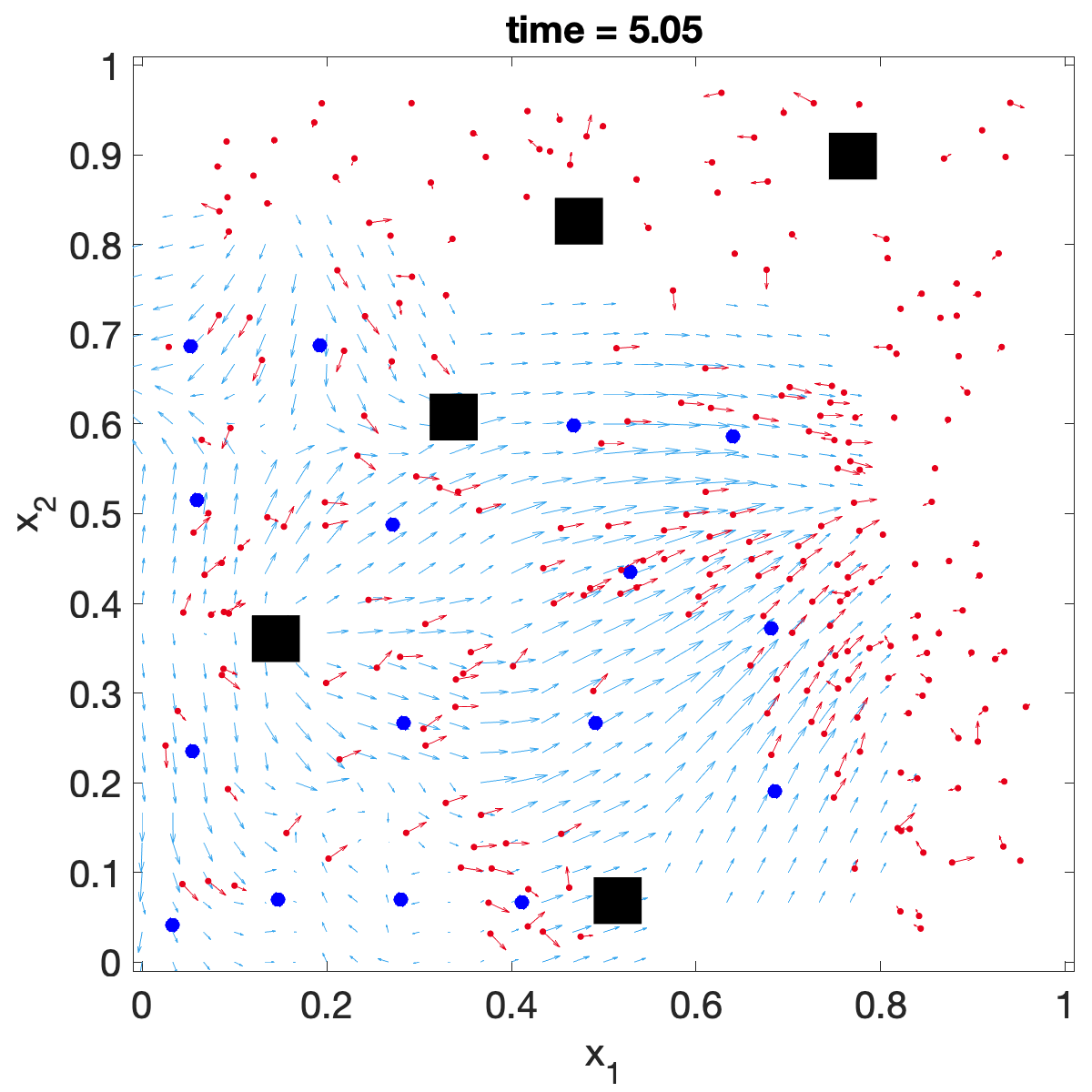}
    \end{subfigure}
    \begin{subfigure}[b]{0.19\textwidth}
        \centering
        \includegraphics[width=\textwidth]{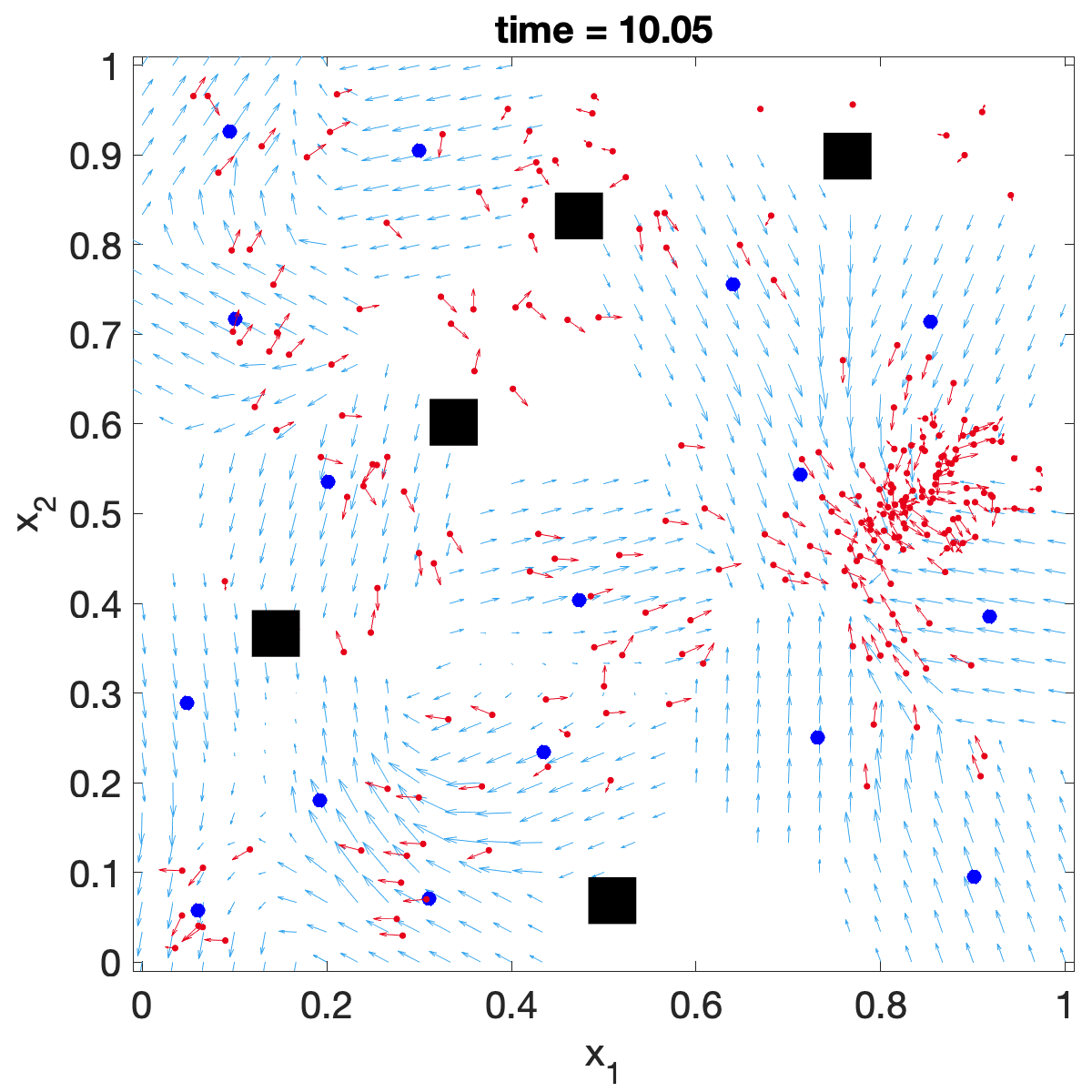}
    \end{subfigure}
    \begin{subfigure}[b]{0.19\textwidth}
        \centering
        \includegraphics[width=\textwidth]{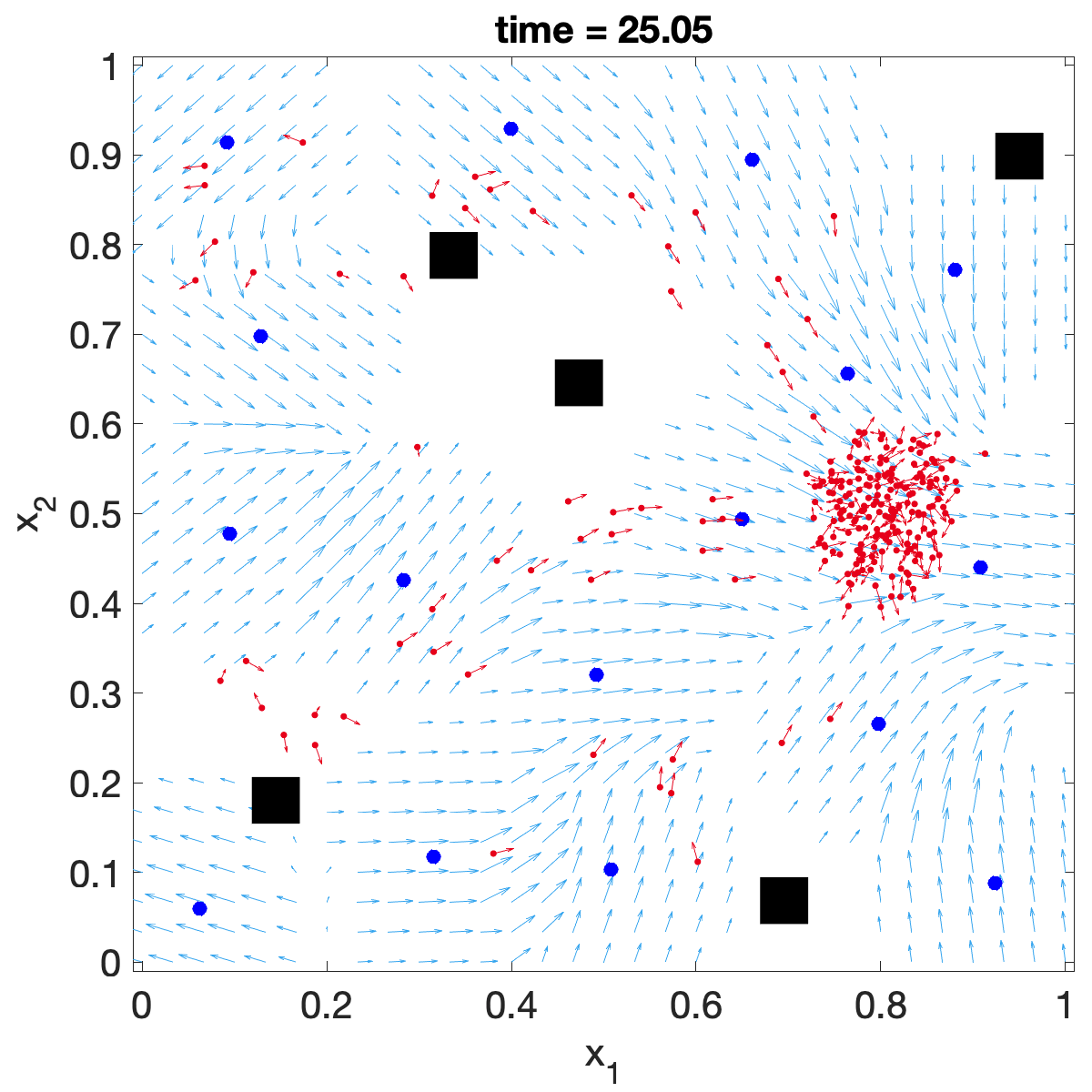}
    \end{subfigure}
    \begin{subfigure}[b]{0.19\textwidth}
        \centering
        \includegraphics[width=\textwidth]{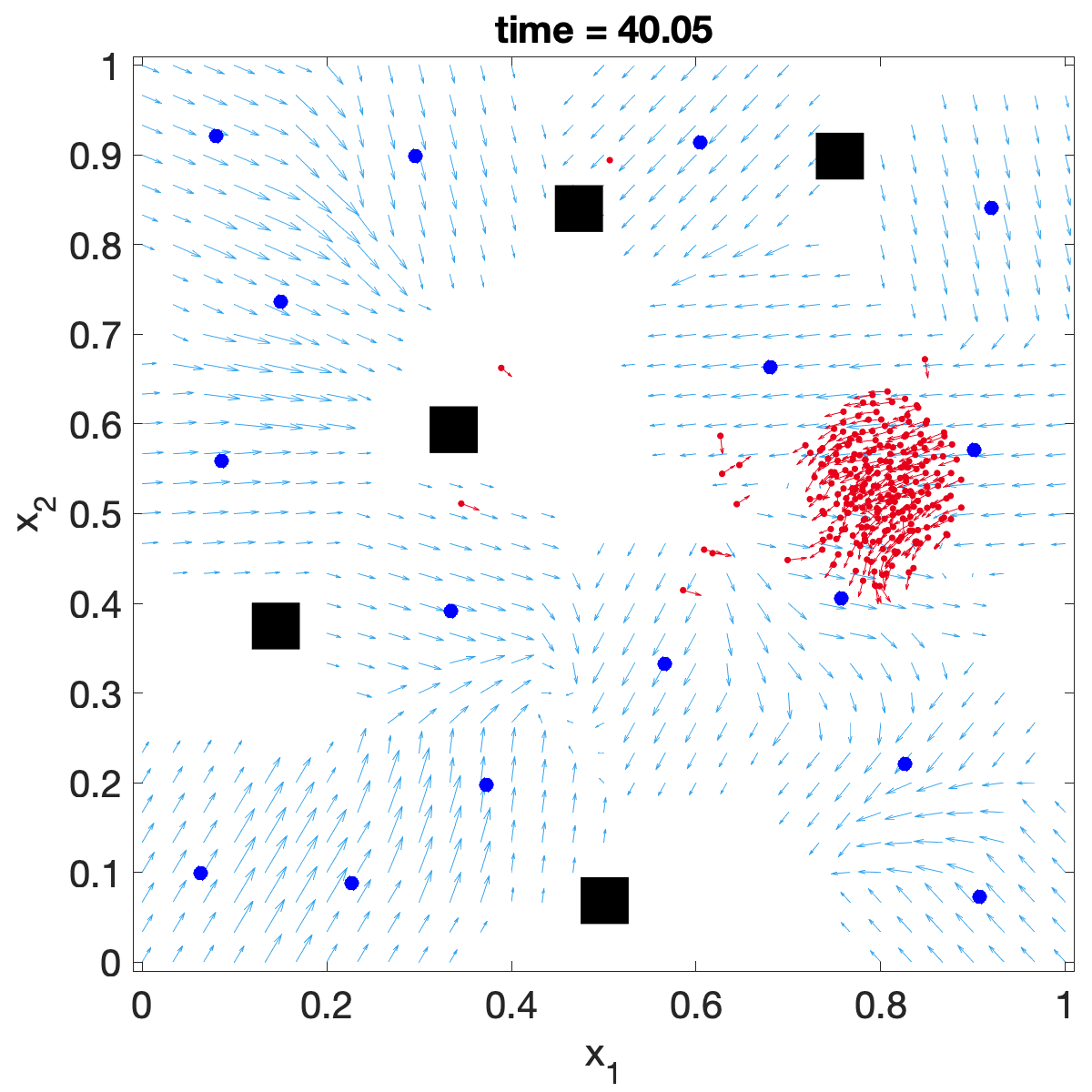}
    \end{subfigure}

    \begin{subfigure}[b]{0.19\textwidth}
        \centering
        \includegraphics[width=\textwidth]{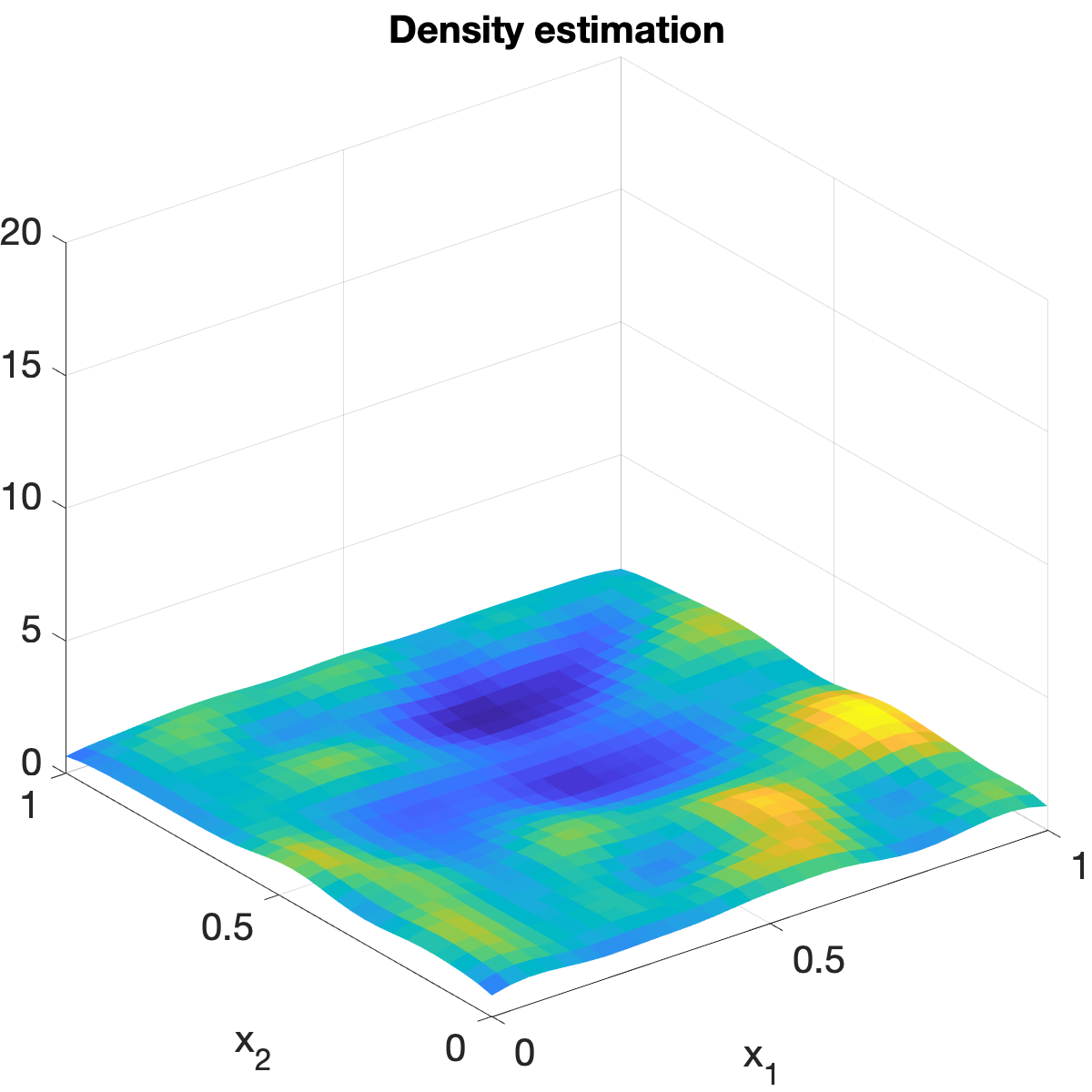}
    \end{subfigure}
    \begin{subfigure}[b]{0.19\textwidth}
        \centering
        \includegraphics[width=\textwidth]{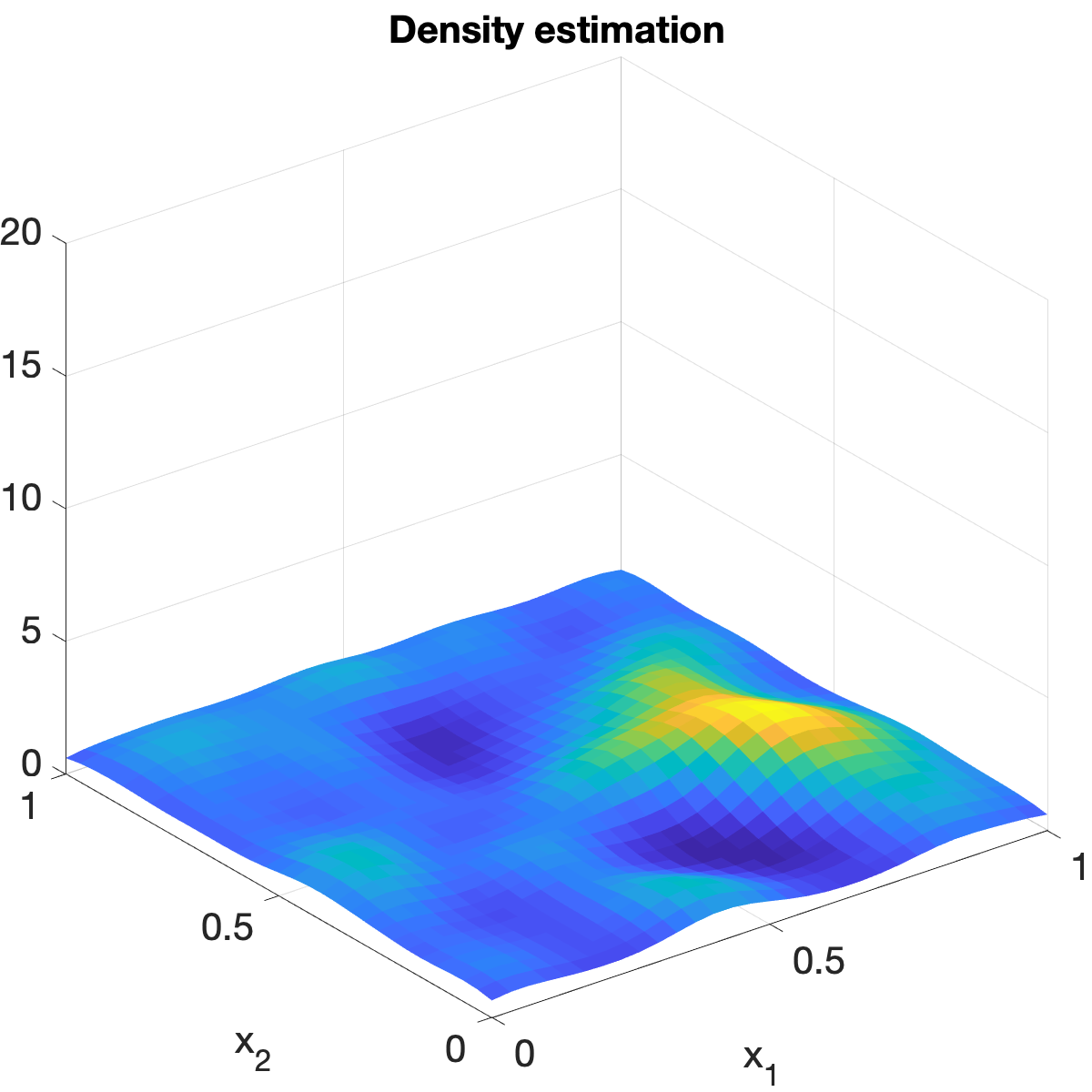}
    \end{subfigure}
    \begin{subfigure}[b]{0.19\textwidth}
        \centering
        \includegraphics[width=\textwidth]{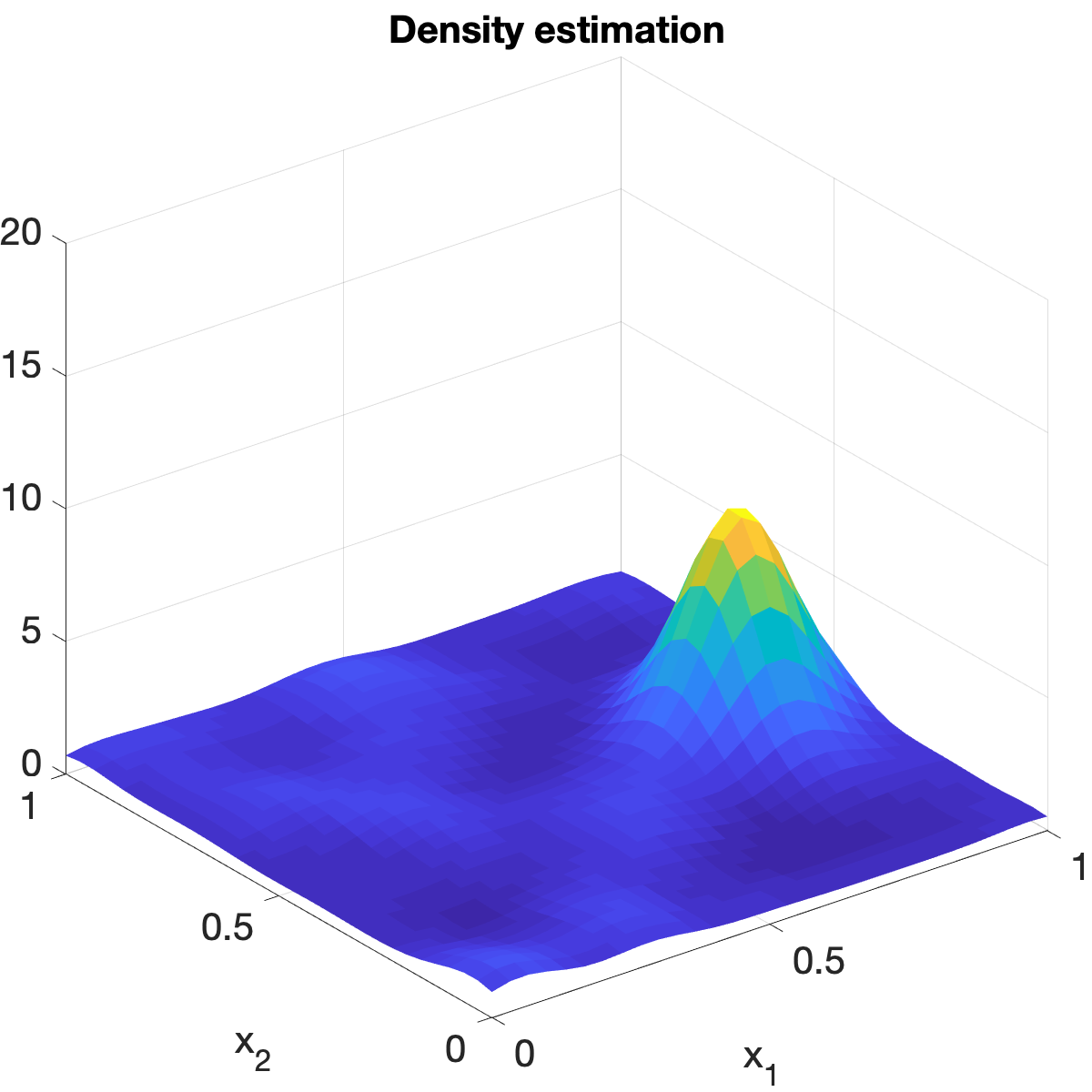}
    \end{subfigure}
    \begin{subfigure}[b]{0.19\textwidth}
        \centering
        \includegraphics[width=\textwidth]{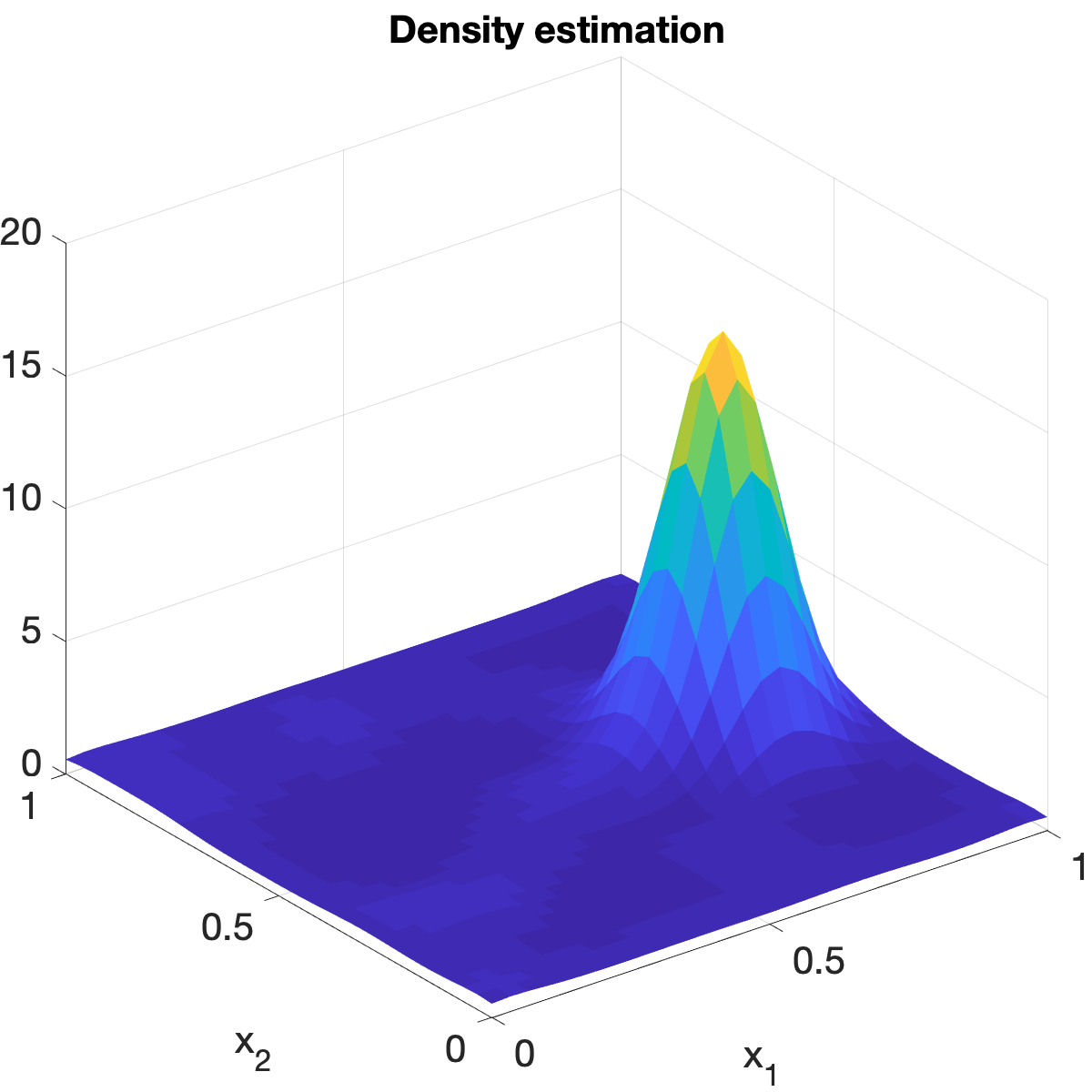}
    \end{subfigure}
    \begin{subfigure}[b]{0.19\textwidth}
        \centering
        \includegraphics[width=\textwidth]{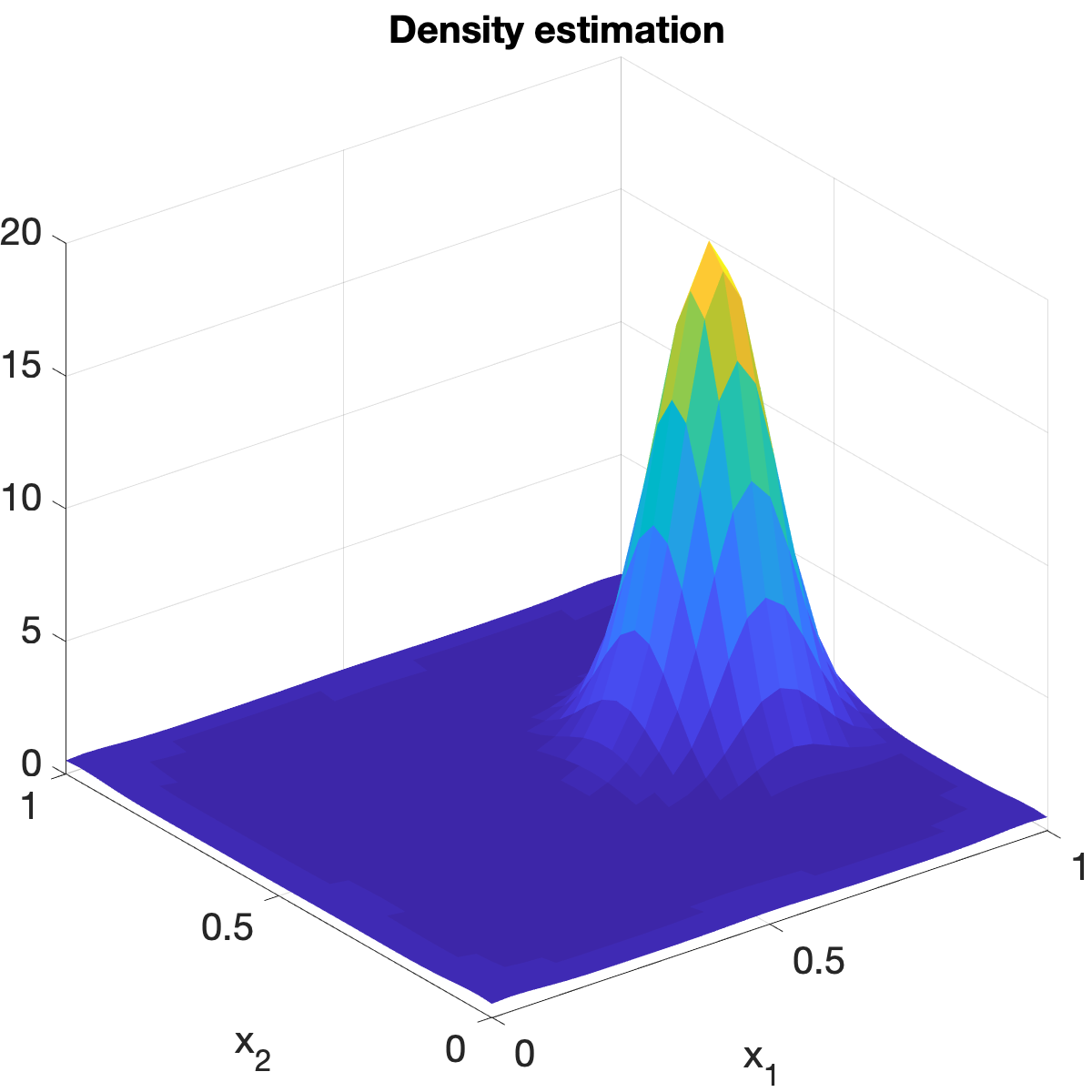}
    \end{subfigure}
    
    \begin{subfigure}[b]{0.19\textwidth}
        \centering
        \includegraphics[width=\textwidth]{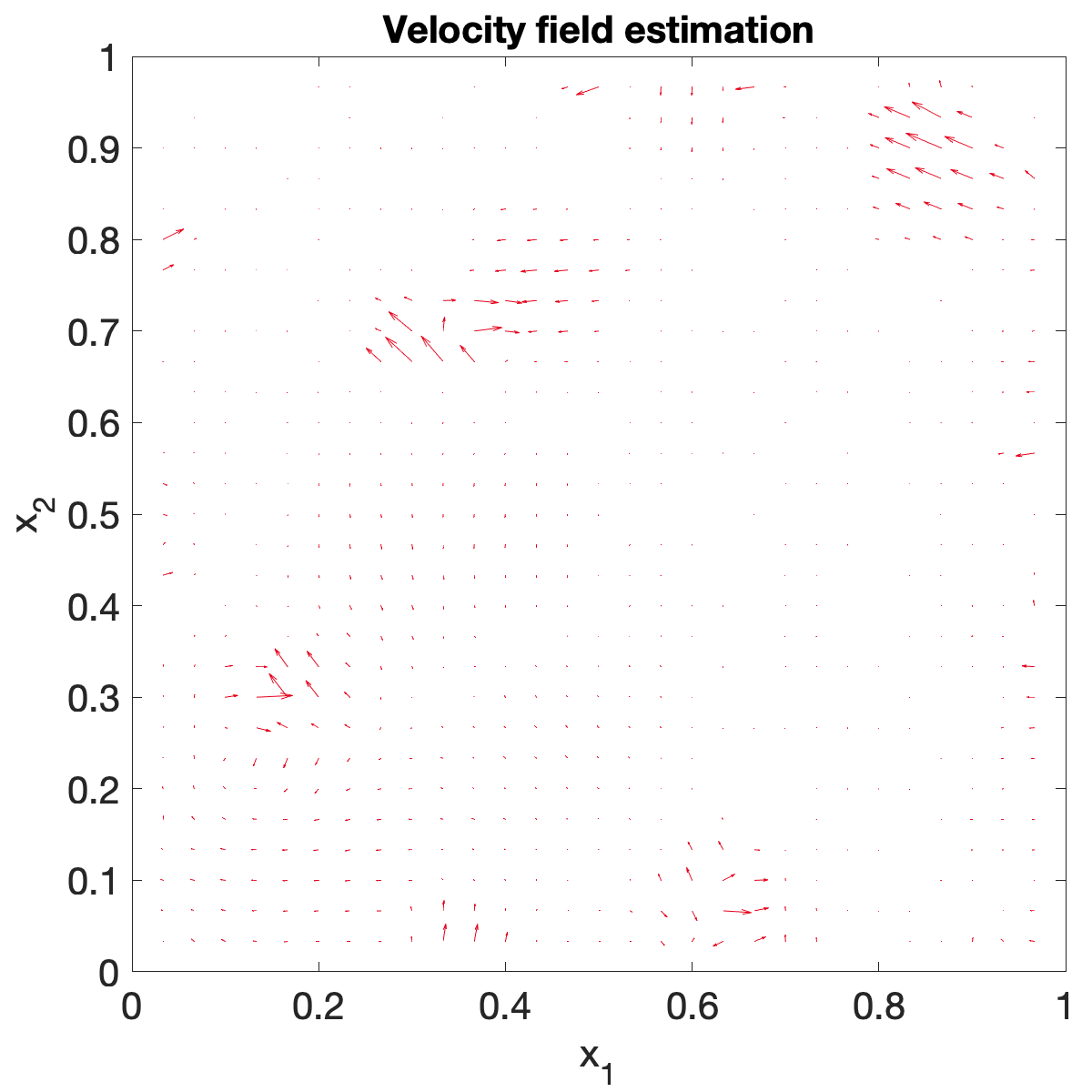}
    \end{subfigure}
    \begin{subfigure}[b]{0.19\textwidth}
        \centering
        \includegraphics[width=\textwidth]{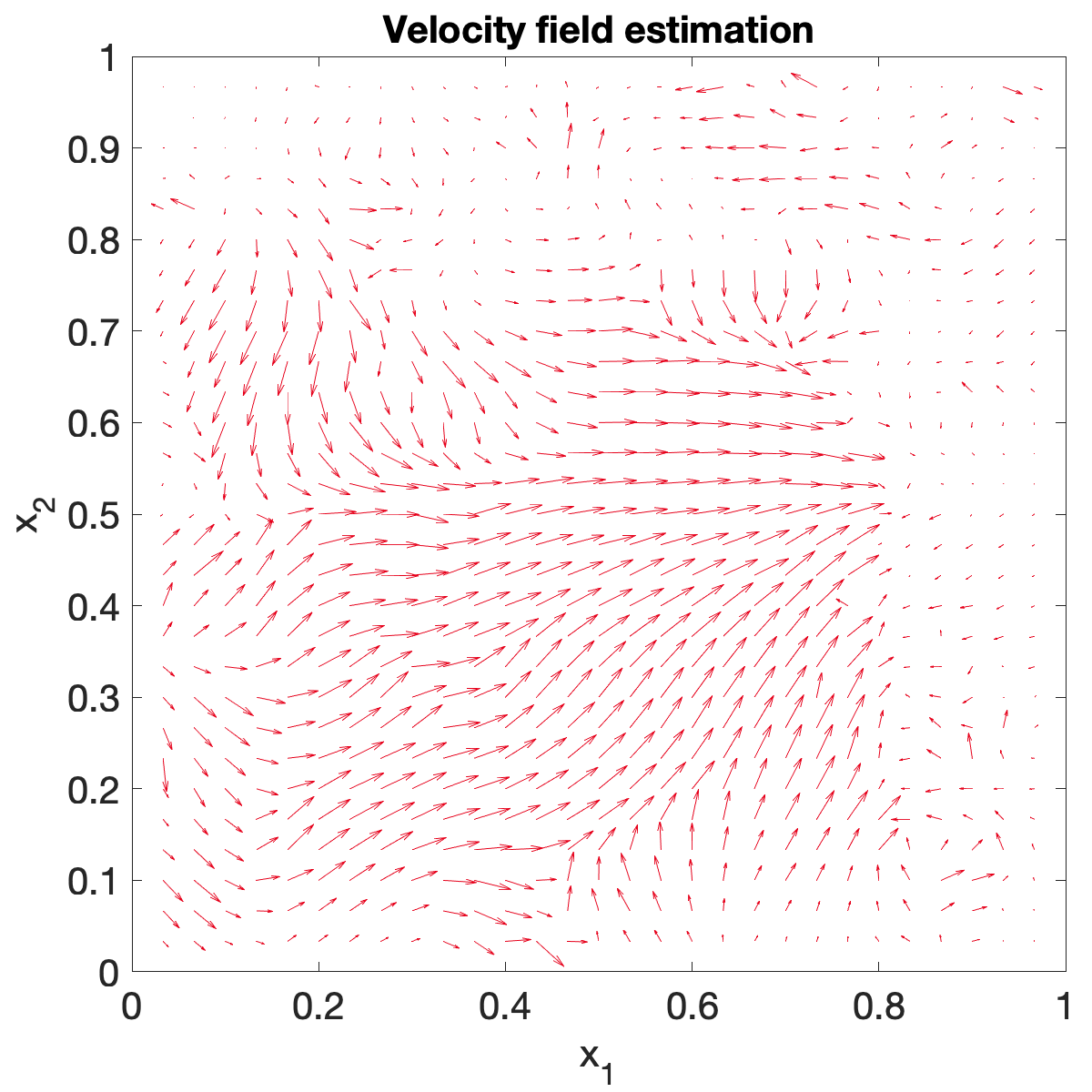}
    \end{subfigure}
    \begin{subfigure}[b]{0.19\textwidth}
        \centering
        \includegraphics[width=\textwidth]{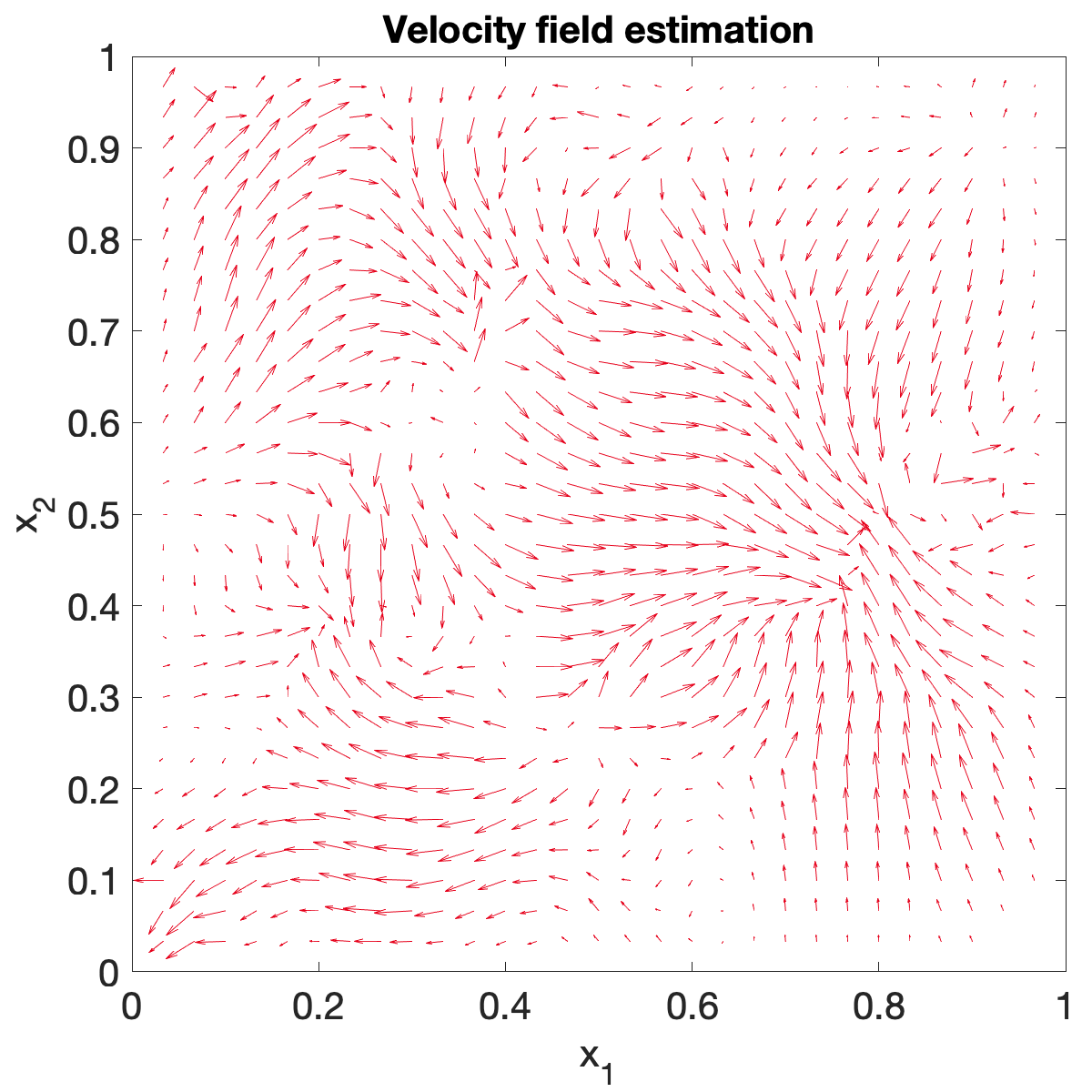}
    \end{subfigure}
    \begin{subfigure}[b]{0.19\textwidth}
        \centering
        \includegraphics[width=\textwidth]{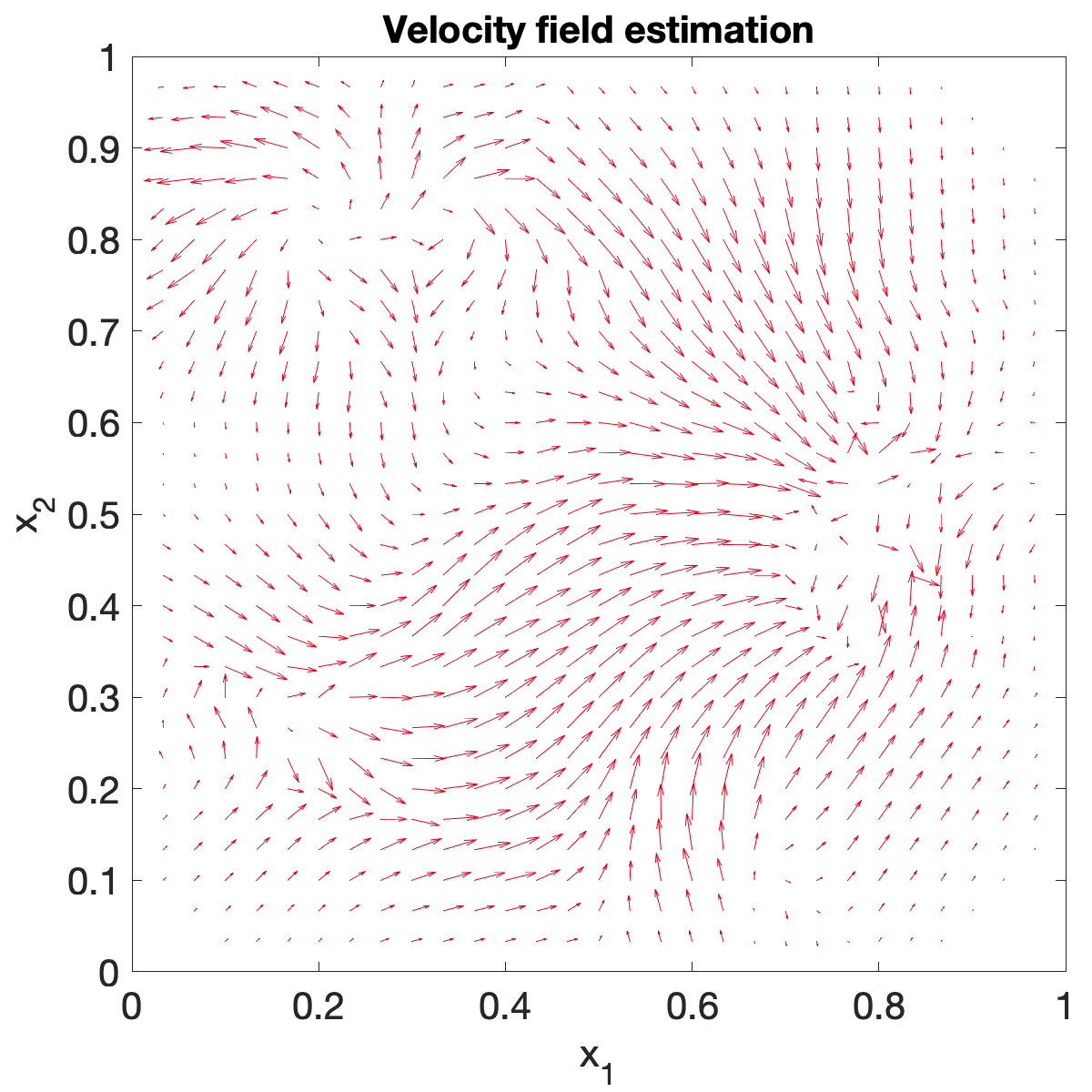}
    \end{subfigure}
    \begin{subfigure}[b]{0.19\textwidth}
        \centering
        \includegraphics[width=\textwidth]{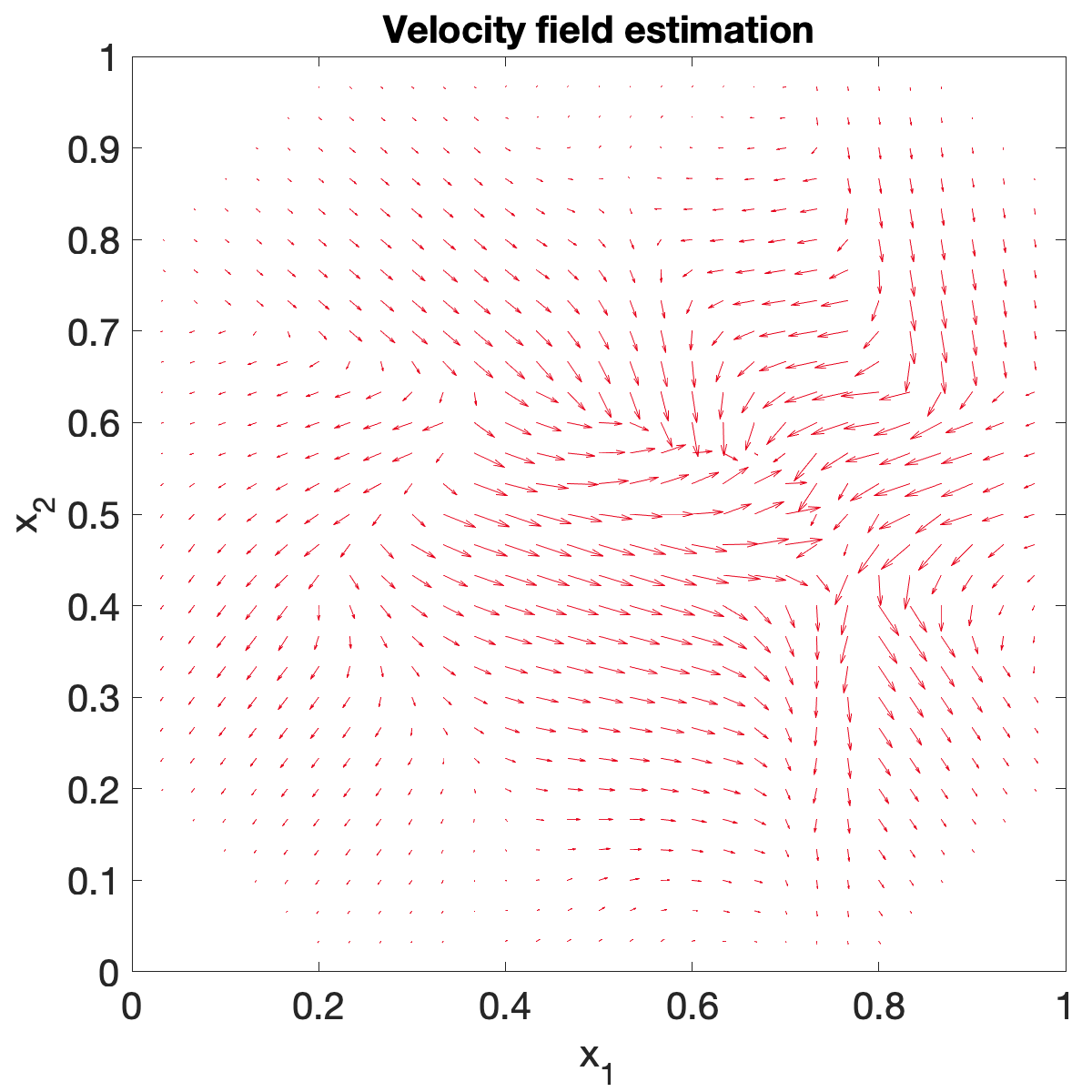}
    \end{subfigure}
    \caption{Illustration of the evacuation process when there were unknown dynamic obstacles (black blocks) in the environment.
    }
    \label{fig:evacuation with dynamic obstacles}
\end{figure*}

\begin{figure*}
    \centering
    \includegraphics[width=0.9\textwidth]{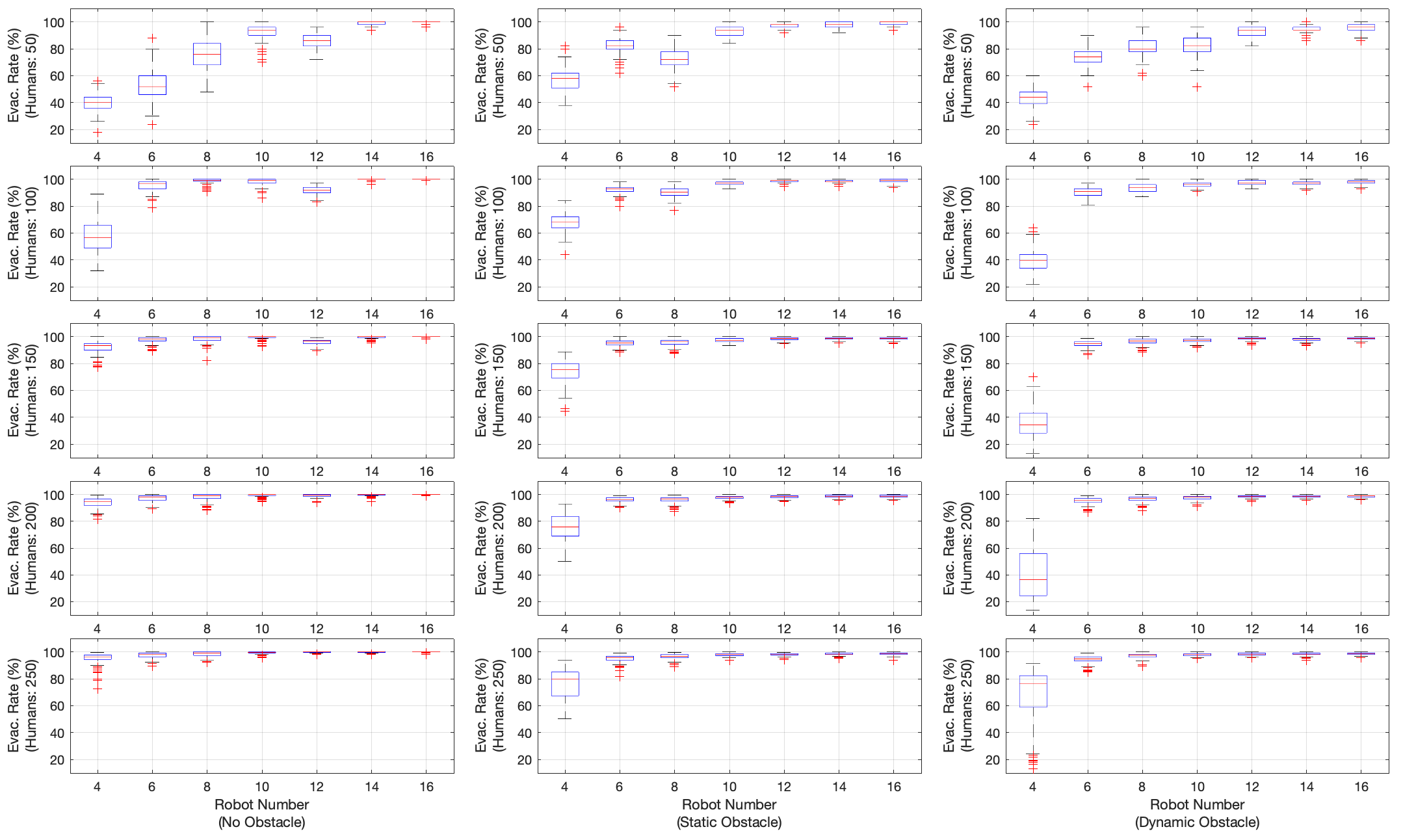}
    \caption{Boxplots of the evacuation rates under different combinations of human number, robot number, and obstacle setup. There were 128 simulation results for each combination.}
    \label{fig:Evacuation Rate Statistics}
\end{figure*}

\textit{Simulation setup.} 
The workspace $\Omega$ was set to be a square $\Omega=[0,1]^2$.
The humans' initial positions $\{x_j(0)\}_{j=1}^N$ were sampled according to the uniform distribution and their initial velocities $\{v_j(0)\}_{j=1}^N$ were set to 0.
The robots' initial positions $\{r_i(0)\}_{i=1}^n$ were chosen to form a regular array at the left lower corner, and their initial directions $\{\theta_i(0)\}_{i=1}^n$ were randomly selected.
Then these states were updated at each iteration according to their microscopic models \eqref{eq:human model} and \eqref{eq:robot model} for which we set the time difference to be $dt=0.1$.
For \eqref{eq:human model}, we set the human-human social force model to be
\begin{align*}
    U(\xi)=0.02e^{-\|\xi\|/0.05}-0.01e^{-\|\xi\|/0.1},
\end{align*}
and the human-robot interaction model \eqref{eq:human-robot force} to be
\begin{align*}
    \bar{K}(\xi)=
    \begin{cases}
    0.05\exp(-\|\xi\|^2/0.03), & \text{if } \|\xi\|<0.15 \\
    0, & \text{if } \|\xi\|\geq0.15.
    \end{cases}
\end{align*}
In other words, the impact of each robot was limited to humans who were within a radius of 0.15.
It is clear that once the robot inputs $\{\tau_i(t),\eta_i(t)\}_{i=1}^n$ are specified, the human and robot states can be consecutively updated at every iteration.
The purpose of evacuation was to drive humans to a small area located in a safe location $\mu_{\mathrm{safe}}=[\frac{13}{16},\frac{1}{2}]^T$.
Hence, the target density $\rho_*$ was chosen to be a narrow Gaussian distribution centered at $\mu_{\mathrm{safe}}$:
\begin{align*}
    \rho_*(x)=\frac{1}{2\pi\sigma}\exp\Big(-\frac{\|x-\mu_{\mathrm{safe}}\|^2}{2\sigma^2}\Big)
\end{align*}
where $\sigma=0.085$; see Fig. \ref{fig:target density} for illustration.

{\color{black}
To investigate the impact of the human number, the robot number, and the environment, we performed simulations for different combinations of these conditions.
In particular, the human number was set to be $N\in\{50,100,150,200,250\}$, and the robot number was set to be $n\in\{4,6,8,10,12,14,16\}$.
For the environment, we had three setups for the obstacles and the unknown social force $G$:
\begin{enumerate}
    \item No obstacle: The environment was free of obstacles, but $G$ still included other unknown social forces.
    We assumed $G=G_1$ and $G_1$ was given by
    \begin{align*}
       G_1(x,t)=\begin{bmatrix}
            -0.01\sin(\frac{\pi}{5}t) \\
            -0.01\sin(\frac{\pi}{5}t)
        \end{bmatrix}
    \end{align*}
    which represented a spatially uniform but time-varying force field.
    \item Static obstacle: Five unknown static obstacles of size $0.05\times0.05$ were placed randomly in the environment and $G=G_1+G_2$ where $G_2$ was the additional repulsive force from the obstacles according to the following:
    \begin{align*}
        G_2(x_j,t)=-\nabla_{x_j}\sum_{l\in\mathcal{O}_j}\frac{0.0005}{\|x_j-s_l\|}, \text{ if } \|x_j-s_l\|\leq0.03
    \end{align*}
    and $G_2(x_j,t)=0$ otherwise, where $\mathcal{O}_j$ is the set of all obstacles seen by the $j$-th human, and $s_l$ is the position of the $l$-th obstacle.
    Thus, $G_2$ may vanish if all humans no longer encounter obstacles after a certain time.
    \item Dynamic obstacle: The obstacles were dynamic and assume $G=G_1+G_2$.
    The positions of the obstacles were updated by adding $\pm0.1\sin(0.2t)$ to one of their coordinates, causing them to move back and forth either horizontally or vertically.
\end{enumerate}
In other words, there were $5\times7\times3=105$ combinations of conditions.
For each combination, we performed 128 simulations with random initialization of human positions.
Each simulation was run for a duration of $T=80$.
A human was considered to have been successfully evacuated if they ended up within a 0.15 radius of the safe location $\mu_{\mathrm{safe}}$.
The evacuation rate was computed according to
\begin{align*}
    \mathrm{Evac.~Rate} = \frac{\mathrm{Evacuated~Number}}{\mathrm{Total~Number}}\times100~\%.
\end{align*}}

\textit{Algorithm setup.}
The robot inputs $\{\tau_i(t),\eta_i(t)\}_{i=1}^n$ were computed at every iteration according to the protocols outlined in Algorithm \ref{algorithm:evacuation}.
The workspace was discretized as a $30\times30$ grid to perform numerical computations of the involved differentiation and integration.
At every iteration, the crowd density $\rho(x,t)$ was estimated from $\{x_j(t)\}_{i=1}^N$ using the kernel density estimator \eqref{eq:KDE} where we chose $H$ to be the Gaussian kernel and set $h=0.07$.
The crowd velocity field $u(x,t)$ was estimated from $\{x_j(t),v_j(t)\}_{i=1}^N$ using linear interpolation.
For the position controllers \eqref{eq:position controller}, we set $k_r=0.003$, $k_o=0.002$, and $\nu=1$.
To approximate the unknown force field $G$ according to \eqref{eq:ANN}, we chose a one-layer radial basis function (RBF) neural network with 25 neurons regularly deployed to cover $\Omega$.
For the direction controllers consisting of \eqref{eq:ud}, \eqref{eq:Fd}, \eqref{eq:update law}, and \eqref{eq:direction controller}, the feedback gains were chosen to be $k_\rho(x,t)=0.05/\|\nabla(\rho-\rho_*)\|$, $k_u=0.1$, $\Gamma=0.1$, $k_w=0.1$, $k_\eta=0.1$, and $\beta_i(t)=1/n'(t)$, where $n'(t)\leq n$ is the cardinality of $\{\beta_i(t),i=1,\dots,n|\beta_i(t)>0\}$.

\textit{Result.}
Figures~\ref{fig:evacuation}, \ref{fig:evacuation with static obstacles}, and \ref{fig:evacuation with dynamic obstacles} show the evacuation process of 250 people and 16 robots in three different environmental settings: no obstacle, static obstacle, and dynamic obstacle, respectively.
In the first rows of these figures, under the position controllers, the robots spread quickly and exhibited satisfactory coverage of the environment.
In the case of dynamic obstacles, the robots continued to move in the environment to ensure good coverage.
Under the direction controllers, the robots dynamically changed the directions of their navigation force fields to affect human motions and guided them toward the target location.
We noticed that since each robot's range of influence $\Omega_i(t)$ was set to be a disk of radius 0.15, their total range of influence $\Omega_r(t)$ was actually unable to completely cover the entire environment, as can be seen from the blank areas without black arrows.
Blank areas were even larger when there were obstacles that caused the environment to be irregular.
However, the robots were still able to guide the entire crowd toward the target area.
This verified our hypothesis following Theorem \ref{thm:asymptotic stability} that the stability conclusion still holds as long as the robots' total range of influence $\Omega_r(t)$ is able to cover the entire crowd $\Omega_\rho(t)$, rather than the entire environment $\Omega$.
In the second and third rows, real-time estimates of the macroscopic states (the crowd density $\rho(x,t)$ and velocity field $u(x,t)$) were displayed, which were consistent with the microscopic states given in the first rows.
We observed that the crowd density estimate eventually converged to the target density given in Fig. \ref{fig:target density}.
The $L^2$ norms of the density errors $\tilde{\rho}$ for the three experiments were plotted in Fig. \ref{fig:error}, which verified the convergence of the crowd density.

{\color{black}
The impact of the human number, the robot number, and the obstacle setup was summarized using boxplots in Fig.~\ref{fig:Evacuation Rate Statistics}. 
Each column of subfigures represented the same obstacle setup.
Each row of subfigures represented the same human number.
Each subfigure showed the boxplots of evacuation rates of 128 simulations with respect to the robot number.
We observed the following implications.
\begin{enumerate}
    \item The evacuation rate increased as the human number increased. 
    This was due to the fact that the macroscopic approximation was more precise when the human population was larger. 
    Furthermore, social force models were more likely to generate self-organizing patterns when there were more people.
    \item The evacuation rate increased as the robot number increased.
    This was due to the fact that more robots were able to cover a larger area of the environment.
    It was interesting to observe that when the human number was larger than 200, a number of 10 robots, which could not cover the entire environment, were able to evacuate more than 90\% of humans.
    Once again, this verified our hypothesis following Theorem \ref{thm:asymptotic stability} that the stability conclusion still holds as long as the robots' total range of influence $\Omega_r(t)$ is able to cover the entire crowd $\Omega_\rho(t)$, rather than the entire environment $\Omega$.
    \item The presence of obstacles, particularly dynamic ones, always reduced the evacuation rate. 
    This was because they created an irregular environment, which hindered the robots' ability to cover the environment. 
    However, when the number of humans and robots was sufficiently large, for example, when $N\geq200$ and $n\geq12$, the effect of the obstacles became negligible.
\end{enumerate}}

\section{Conclusion}
\label{section:conclusion}
In this research, we investigated a robot-guided evacuation of a crowd in which humans greatly outnumber robots. 
We employed a two-scale modeling approach based on hydrodynamic models to characterize the behavior of both humans and robots, as well as their interactions.
We developed controllers for robots to explore the environment, avoid obstacles, and adjust their local navigation force fields in real time based on the density and velocity field of the crowd to guide them to a secure area. 
We conducted extensive simulations to validate the effectiveness of the proposed algorithms.
Our next step is to extend the evacuation algorithm to unstructured settings of the environment and decentralize the algorithm.

\bibliographystyle{IEEEtran}
\bibliography{References}

\end{document}